    \let\cref\crtcref
\crefname{Remark}{Remark}{Remarks}
\crefname{Lemma}{Lemma}{Lemmas}
\crefname{Definition}{Definition}{Definitions}
\crefname{Corollary}{Corollary}{Corollaries}
\crefname{Proposition}{Proposition}{Propositions}
\crefname{figure}{Figure}{Figures}
\begin{document}

\title{Banach Space Representer Theorems for Neural Networks \\ and Ridge Splines}

\author{\name Rahul Parhi \email rahul@ece.wisc.edu  \\
        \name Robert D.\ Nowak \email rdnowak@wisc.edu \\
        \addr Department of Electrical and Computer Engineering\\
        University of Wisconsin--Madison\\
        Madison, WI 53706, USA}

\editor{Lorenzo Rosasco}

\date{\today}

\maketitle

\begin{abstract}We develop a variational framework to understand the properties of the functions learned by neural networks fit to data. We propose and study a family of continuous-domain linear inverse problems with total variation-like regularization in the Radon domain subject to data fitting constraints. We derive a representer theorem showing that finite-width, single-hidden layer neural networks are solutions to these inverse problems. We draw on many techniques from variational spline theory and so we propose the notion of polynomial ridge splines, which correspond to single-hidden layer neural networks with truncated power functions as the activation function. The representer theorem is reminiscent of the classical reproducing kernel Hilbert space representer theorem, but we show that the neural network problem is posed over a non-Hilbertian Banach space. While the learning problems are posed in the continuous-domain, similar to kernel methods, the problems can be recast as finite-dimensional neural network training problems. These neural network training problems have regularizers which are related to the well-known weight decay and path-norm regularizers.  Thus, our result gives insight into functional characteristics of trained neural networks and also into the design neural network regularizers. We also show that these regularizers promote neural network solutions with desirable generalization properties.
 \end{abstract}

\begin{keywords}
    neural networks, splines, inverse problems, regularization, sparsity
\end{keywords}

\section{Introduction} \label{sec:intro}

Single-hidden layer neural networks are superpositions of \emph{ridge functions}. A ridge function is any multivariate function mapping $\R^d \to \R$ of the form
\begin{equation}
    \vec{x} \mapsto \rho(\vec{w}^\T \vec{x}),
    \label{eq:ridge-function}
\end{equation}
where $\rho: \R \to \R$ is a univariate real-valued function and  $\vec{w} \in \R^d\setminus \curly{\vec{0}}$. Single-hidden layer neural networks, in particular, are superpositions of the form
\begin{equation}
    \vec{x} \mapsto \sum_{k=1}^K v_k \, \rho(\vec{w}_k^\T \vec{x} - b_k),
    \label{eq:nn-as-superposition}
\end{equation}
where $\rho: \R \to \R$ is the \emph{activation function},
$K$ is the \emph{width} of the network, and for $k = 1, \ldots, K$, $v_k \in \R$ and $\vec{w}_k \in \R^d \setminus \curly{\vec{0}}$ are the \emph{weights} of the neural network and $b_k \in \R$ are the \emph{biases} or \emph{offsets}. 

This paper focuses on the \emph{practical problem} of fitting a finite-width neural network to finite-dimensional data, with an eye towards characterizing the properties of the resulting functions. We view this problem as a function recovery problem, where we wish to recover an \emph{unknown function} from \emph{linear measurements}. We deviate from the usual finite-dimensional recovery paradigm and pose the problem in the continuous-domain, allowing us to use techniques from the theory of variational methods.  We show that continuous-domain linear inverse problems with total variation regularization in the Radon domain admit sparse atomic solutions, with the atoms being the familiar neurons of a neural network.

\subsection{Contributions}
Let $\native$ be a topological vector space of multivariate functions, $\sensing: \native \to \R^N$ a continuous linear \emph{sensing} or \emph{measurement} operator ($N$ can be viewed as the number of measurements or data)\footnote{For example, $\sensing f = (f(\vec{x}_1), \ldots, f(\vec{x}_N)) \in \R^N$, for some data $\curly{\vec{x}_n}_{n=1}^N \subset \R^d$.}, and let $f: \R^d \to \R$ be a multivariate function such that $f \in \native$. Consider the continuous-domain inverse problem
\begin{equation}
    \min_{f \in \native} \: \datafit(\sensing f) + \norm{f},
    \label{eq:generic-inverse-problem}
\end{equation}
where $\norm{\dummy}: \native \to \R_{\geq 0}$ is a (semi)norm or \emph{regularizer} and $\datafit: \R^N \to \R$ is a convex \emph{data fitting} term.

We summarize the contributions of this paper below.
\begin{enumerate}
    \item Our main result is the development of a family of seminorms $\norm{\dummy}_{(m)}$, where $m \geq 2$ is an integer, so that the solutions to the problem \cref{eq:generic-inverse-problem} with $\norm{\dummy} \coloneqq \norm{\dummy}_{(m)}$ take the form
    \begin{equation}
        \vec{x} \mapsto \sum_{k=1}^K v_k \, \rho_m(\vec{w}_k^\T \vec{x} - b_k) + c(\vec{x}),
        \label{eq:generic-solution-inverse-problem}
    \end{equation}
    where $\rho_m = \max\curly{0, \dummy}^{m-1} / (m - 1)!$ is the $m$th-order \emph{truncated power function}, $c(\dummy)$ is a polynomial of degree strictly less than $m$, and $K \leq N$. These seminorms are inspired by the seminorm proposed in~\cite{function-space-relu}, which is equivalent to $\norm{\dummy}_{(m)}$ with $m=2$. Specifically, the seminorm $\norm{f}_{(m)}$ is the total variation (TV) norm (in the sense of measures) of $\partial_t^m\ramp^{d-1} \RadonOp f$, where $\RadonOp$ is the Radon transform, $\ramp^{d-1}$ is a ``ramp'' filter, and $\partial_t^m$ is the $m$th partial derivative with respect to the ``offset'' variable of the Radon domain. In other words, our main result is the derivation of a \emph{neural network representer theorem}.  Our result says that single-hidden layer neural networks are solutions to continuous-domain linear inverse problems with TV regularization in the Radon domain. When $m = 2$, the solutions correspond to ReLU networks.
    
    \item We propose the notion of a \emph{ridge spline} by noticing that our problem formulation in \cref{eq:generic-inverse-problem} is similar to those  studied in variational spline theory~\citep{splines-variational, splines-sobolev-seminorm, L-splines}, with the key twist being that our family of seminorms are in the Radon domain. Thus, we refer to the solutions \cref{eq:generic-solution-inverse-problem} with our family of seminorms as \emph{$m$th-order polynomial ridge splines} to emphasize that the solutions are superpositions of ridge functions. We view our notion of a ridge spline as a kind of spline in-between a univariate spline and a traditional multivariate spline. Unlike polyharmonic splines, the usual multivariate analogue of univariate polynomial splines, ridge splines are multivariate piecewise polynomial functions. Moreover, by specializing our result to the univariate case, our notion of a ridge spline exactly coincides with the notion of a univariate polynomial spline.
    
    \item By specializing our main result to setting in which $\sensing$ corresponds to \emph{ideal sampling}, i.e., point evaluations, the generality of \cref{eq:generic-inverse-problem} allows us to consider the machine learning problem of approximating the scattered data $\curly{(\vec{x}_n, y_n)}_{n=1}^N \subset \R^d \times \R$ with both \emph{interpolation constraints} in the case of noise-free data as well as \emph{regularized problems} where we have soft-constraints in the case of noisy data. Thus, a direct consequence of our representer theorem result says the infinite-dimensional problem in \cref{eq:generic-inverse-problem} can be recast as a \emph{finite-dimensional neural network training problem} with various regularizers that are related to weight decay~\citep{weight-decay} and path-norm~\citep{path-norm} regularizers, which are used in practice. In other words, a neural network trained to fit data with an appropriate regularizer is ``optimal'' in the sense of the seminorm $\norm{\dummy}_{(m)}$, characterizing a key property of the learned function. We also note that in these neural network training problems, it is sufficient that the width $K$ of the network be $N$, the size of the data.
    
    \item Specializing our results to the  supervised learning problem of binary classification shows that neural network solutions with small seminorm make good predictions on new data. Binary classification corresponds to the ideal sampling setting, restricting ourselves to $y_n \in \curly{-1, +1}$, $n = 1, \ldots, N$, and predicting these by the sign of the function that solves \cref{eq:generic-inverse-problem} (this can be done with an appropriate data fitting term). We derive statistical \emph{generalization bounds} for the class of neural networks with uniformly bounded seminorm $\norm{\dummy}_{(m)}$. In particular, we show that the seminorm bounds the \emph{Rademacher complexity} of these neural networks and use standard results from machine learning theory to relate this to the generalization error. This says that a small seminorm implies good generalization properties.
\end{enumerate}

\subsection{Related work}
Ridge functions are ubiquitous in mathematics and engineering, especially due to the popularity of neural networks, and we refer to the book of~\cite{ridge-functions-book} and the survey of~\cite{ridge-functions-survey} for a fairly up-to-date treatment on the current state of research regarding ridge functions. One of the most popular areas of research has been regarding approximation theory with superpositions of ridge functions (i.e., single-hidden layer neural networks).  Variants of the well-known universal approximation theorem state that \emph{any} continuous function can be approximated arbitrarily well by a superposition of the form in \cref{eq:nn-as-superposition}, under extremely mild conditions on the activation function~\citep{uat1, uat2, uat3, uat4, nn-approx-non-poly}. There are also many papers establishing optimal or near-optimal approximation rates for various function spaces~\citep{approx-Lp,approx-ridge-splines, dimension-independent-approx-bounds}.

Another, less popular (though practically more interesting), research area studies what happens when you fit data with a single-hidden layer neural network. This question has been viewed from both a statistical perspective, where risk bounds are established~\citep{risk-bounds-ridge}, and more recently, in the univariate case, from a functional analytic perspective, where connections to variational spline theory are established~\citep{relu-linear-spline,gradient-dynamics-shallow,min-norm-nn-splines}. We also remark that these questions have also been studied in the context of deep neural networks. See, for example,~\cite{deep-approx1,deep-approx2} for approximation theory,~\cite{statistical-deep} for statistical properties, and~\cite{balestriero2018spline,representer-deep, convex-duality-deep} for connections to splines.

Although the term \emph{ridge function} is rather modern, it is important to note that such functions have been studied for many years under the name \emph{plane waves}. Much of the early work with plane waves revolves around representing solutions to partial differential equations (PDE), e.g., the wave equation, as a superposition of plane waves. We refer the reader to the classic book of~\cite{plane-waves-pdes} for a full treatment of this subject. The key analysis tool used in these PDE problems is the \emph{Radon transform}. Since a ridge function as in \cref{eq:ridge-function} is constant along the hyperplanes $\vec{w}^\T \vec{x} = c$, $c \in \R$, analysis of such functions becomes convenient in the \emph{Radon domain}. More modern applications of ridge functions arise in computerized tomography following the seminal paper of~\cite{computerized-tomography}, where they coined the term ``ridge function'', and the development of ridgelets in the 1990s, a wavelet-like system inspired by neural networks, independently proposed by~\cite{murata-ridgelets},~\cite{rubin-ridgelets}, and~\cite{candes-phd, candes-ridgelets}. Many refinements to the ridgelet transform have been made recently~\citep{ridgelet-transform-distributions,ridgelet-uat}. As one might expect, the main analysis tool used in these applications is the Radon transform. Thus, we see that ridge functions and the Radon transform are intrinsically connected.

Recent work from the machine learning community has used this connection to understand what kinds of functions can be represented by \emph{infinite-width} (continuum-width) single-hidden layer neural networks with Rectified Linear Unit (ReLU) activation functions, where the ``norm'' of the network weights is bounded~\citep{function-space-relu}. They ask the question about what functions can be represented by such infinite-width, but bounded norm, networks. They show that a TV seminorm in the Radon domain exactly captures the Euclidean norm of the network weights, but do not address the optimization problem of fitting neural networks to data. Inspired by this seminorm, we develop and study a family of TV seminorms in the Radon domain and consider the problem of scattered data approximation. We show that single-hidden layer neural networks, with fewer neurons than data, are solutions to the problem of minimizing these seminorms over the space of all functions in which the seminorms are well-defined, subject to data fitting constraints. A side effect of our analysis also provides an understanding of the topological structure, specifically a non-Hilbertian Banach space structure, of the spaces defined by these seminorms.

Although our main result might seem obvious on a surface level, actually proving it is quite delicate. The problem of learning from a continuous dictionary of atoms with TV-like regularization has been studied before, both in the context of splines~\citep{fisher-jerome,locally-adaptive-regression-splines} and machine learning~\citep{l1-prob,convex-nn}. It is extremely important to note that all of these prior works make the assumption that the relevant spaces are compact. This allows appealing to standard arguments which are useful for proving, e.g., that minimizers to their problem even exist. We also remark that some of these prior works simply assume, without proof, existence of minimizers.

Since the Radon domain is an unbounded domain, we cannot appeal to these types of arguments for the problem we study. Thus, a very important question we ask, and subsequently answer, regards existence of solutions to \cref{eq:generic-inverse-problem} with our family of seminorms. To this end, we draw on techniques from the recently developed variational framework of $\Ell$-splines~\citep{L-splines}. We also remark that we cannot directly apply the results from this framework since the fundamental assumption about splines is that spline atoms are translates of a single function. Meanwhile, neural network atoms as in \cref{eq:nn-as-superposition} are parameterized by both a direction $\vec{w}_k$ and a translation $b_k$. We also draw on recent results from variational methods~\citep{sparsity-variational-inverse}. Thus, the results of this paper provide a general variational framework as well as novel insights into understanding the properties of functions learned by neural networks fit to data.

\subsection{Roadmap} In \cref{sec:main-results} we state our main results and highlight some of the technical challenges and novelties in proving our results. In \cref{sec:prelim} we introduce the notation and mathematical formulation used throughout the paper. In \cref{sec:rep-thm} we prove our main result, the representer theorem. In \cref{sec:splines} we discuss connections between ridge splines and classical polynomial splines. In \cref{sec:nn-training} we discuss applications of the representer theorem to neural network training, regularization, and generalization.
     \section{Main Results} \label{sec:main-results}
Our main contribution is a representer theorem for problems of the form  in \cref{eq:generic-inverse-problem} with our proposed family of seminorms. Our other contributions are (rather straightforward) corollaries to this result. In this section we will state the main results of this paper along with relevant historical remarks.

\subsection{The representer theorem} \label{subsec:rep-thm}
The notion of a \emph{representer theorem} is a fundamental result regarding kernel methods~\citep{spline-rep-thm, generalized-rep-thm, learning-with-kernels}. In particular, let $(\mathcal{H}, \norm{\dummy}_{\mathcal{H}})$ be any real-valued Hilbert space on $\R^d$ and consider the scattered data $\curly{(\vec{x}_n, y_n)}_{n=1}^N \subset \R^d \times \R$. The classical representer theorem considers the variational problem
\begin{equation}
    \bar{f} = \argmin_{f \in \mathcal{H}} \sum_{n=1}^N \ell(f(\vec{x}_n), y_n) + \lambda \norm{f}_\mathcal{H}^2,
    \label{eq:kernel-opt}
\end{equation}
where $\ell(\cdot, \cdot)$ is a convex loss function and $\lambda > 0$ is an adjustable regularization parameter. The representer theorem then states that the solution $\bar{f}$ is unique and $\bar{f} \in \spn\curly{k(\dummy, \vec{x}_n)}_{n=1}^N$, where $k(\dummy, \dummy)$ is the \emph{reproducing kernel} of $\mathcal{H}$. Kernel methods (even before the term ``kernel methods'' was coined) have received much success dating all the way back to the 1960s, especially due to the tight connections between kernels, reproducing kernel Hilbert spaces, and splines~\citep{splines-minimum, scattered-data, spline-models-observational}.

Recently, the term ``representer theorem'' has started being used for general problems of convex regularization~\citep{L-splines, rep-thm-convex-reg,unifying-representer} as a way to designate a parametric formulation of solutions to a variational optimization problem, ideally being a linear combination from some dictionary of atoms. This has allowed more much more general problems to be considered than ones like \cref{eq:kernel-opt}, which are restricted to regularizers which are  Hilbertian (semi)norms. In particular, some of the recent theory is able to to consider problems where the search space is a locally convex topological vector space and the regularizers being a seminorm defined on that space~\citep{sparsity-variational-inverse}. The main utility of these more general representer theorems arise in understanding \emph{sparsity-promoting} regularizers such as the $\ell^1$-norm or its continuous-domain analogue, the $\M$-norm (the total variation norm in the sense of measures), of which the structural properties of the solutions are still not completely understood, though a theory is beginning to emerge. The generality of these kinds of representer theorems has been especially useful in some of the recent development of the notion of \emph{reproducing kernel Banach spaces}~\citep{rkbs,rkbs-book} and of an infinite-dimensional theory of compressed sensing~\citep{infinite-dim-cs1, infinite-dim-cs2} as well as other inverse problems set in the continuous-domain~\citep{inv-prob-space-measures}.

We build off of these recent results, and propose a family of seminorms (indexed by an integer $m \geq 2$)
\begin{equation}
    \norm{f}_{(m)} \coloneqq c_d \norm{\partial_t^m \ramp^{d-1} \RadonOp f}_{\M(\cyl)},
    \label{eq:seminorms}
\end{equation}
where $\RadonOp$ is the Radon transform defined in \cref{eq:radon-transform}, $\Lambda^{d-1}$ is a ramp filter in the Radon domain defined in \cref{eq:ramp-filter}, $\partial_t^m$ is the $m$th partial derivative with respect to $t$, the offset variable in the Radon domain discussed in \cref{subsec:radon-transform}, and $c_d$ is a dimension dependent constant defined in \cref{eq:cd}, $\norm{\dummy}_{\M(\cyl)}$ denotes the total variation norm (in the sense of measures) on the Radon domain. We remark that the $\M$-norm can be viewed, as a ``generalization'' of the $L^1$-norm, with the key property that we can apply the $\M$-norm to distributions that are also ``absolutely integrable'' such as the Dirac impulse (i.e., distributions that can be associated with a finite Radon measure). The space $\cyl$ denotes the Radon domain; in particular, the Radon transform computes integrals over hyperplanes in $\R^d$. Since every hyperplane can be written as $\curly{\vec{x} \in \R^d \st \vec{\gamma}^\T\vec{x} = t}$ for $\vec{\gamma} \in \Sph^{d-1}$, the surface of the $\ell^2$-sphere in $\R^d$, and $t \in \R$, the Radon domain is $\cyl$. Finally, the space $\M(X)$ is the Banach space of finite Radon measures on $X$.

The family of seminorms in \cref{eq:seminorms} are thus exactly total variation seminorms in the Radon domain. For brevity, we will write
\[
    \ROp_m \coloneqq c_d\,\partial_t^m \ramp^{d-1} \RadonOp.
\]
Before stating our representer theorem, we remark that our result requires that the null space of the operator $\ROp_m$ is small, i.e., finite-dimensional. As discussed in~\cite{L-splines} and in the $L^2$-theory of radial basis functions and polyharmonic splines~\cite[Chapter~10]{scattered-data-approx}, constructing operators acting on multivariate functions with finite-dimensional null spaces is nearly impossible\footnote{For example, consider $\Delta$, the Laplacian operator in $\R^d$. Its null space is the space of harmonic functions which is infinite-dimensional for $d \geq 2$. On the other hand, the univariate Laplacian operator, $\d^2/\d x^2$, has a finite-dimensional null space which is simply $\spn\curly{1, x}$.}. To bypass this technicality, we use a common technique from variational spline theory (see, e.g.,~\cite{L-splines}) and impose a \emph{growth restriction} to the functions of interest via the weighted Lebesgue space $L^{\infty, n_0}(\R^d)$ (not to be confused with the Lorentz spaces), defined via the weighted $L^\infty$-norm
\[
    \norm{f}_{\infty, n_0} \coloneqq \esssup_{\vec{x} \in \R^d}\: \abs{f(\vec{x})}\paren{1 + \norm{\vec{x}}_2}^{-n_0},
\]
where $n_0 \in \mathbb{Z}$ is the \emph{algebraic growth rate}. In other words, the space $L^{\infty, n_0}(\R^d)$ is the space of functions mapping $\R^d \to \R$ with algebraic growth rate $n_0$. We will later see in \cref{eq:growth-restriction} that the appropriate choice of algebraic growth rate for the operator $\ROp_m$ is $n_0 \coloneqq m - 1$. This allows us to define the (growth restricted) \emph{null space} of $\ROp_m$ as
\begin{equation}
    \N_m \coloneqq \curly{q \in L^{\infty, m - 1}(\R^d) \st \ROp_m q = 0}
    \label{eq:null-space}
\end{equation}
and the (growth restricted) \emph{native space} of $\ROp_m$ as
\begin{equation}
    \F_m \coloneqq \curly{f \in L^{\infty, m - 1}(\R^d) \st \ROp_m f \in \M(\cyl)}.
    \label{eq:native-space}
\end{equation}
We prove in \cref{lemma:finite-dim-null-space} that $\N_m$ is indeed finite-dimensional.

We now state our representer theorem, in which we show that there exists a sparse solution to the inverse problem in \cref{eq:generic-inverse-problem} when the seminorm takes the form in \cref{eq:seminorms} and the search space is $\F_m$. In particular, we show that that the sparse solution takes the form of the sum of a single-hidden layer neural network as in \cref{eq:nn-as-superposition} and a low degree polynomial. In this context, sparse means that the width of the neural network and the degree of the polynomial are, a priori, bounded from above.
\begin{theorem} \label{thm:rep-thm}
    Assume the following:
    \begin{enumerate}[label=\arabic*.]
        \item The function $G: \R^N \to \R$ is a strictly convex, coercive, and lower semi-continuous.
        \item The operator $\sensing: \F_m \to \R^N$ is continuous\footnote{In order to define continuity, $\F_m$ needs to be a \emph{topological} vector space. We prove in \cref{thm:banach-space} that $\F_m$ is a Banach space, which provides it a topology, allowing  continuity to be defined.}, linear, and surjective.
        \item The inverse problem is well-posed over the null space $\N_m$ of $\ROp_m$, i.e., $\sensing q_1 = \sensing q_2$ if and only if $q_1 = q_2$, for any $q_1, q_2 \in \N_m$. \label{item:well-posed-null-space}
    \end{enumerate}
    Then, there exists a sparse minimizer to the variational problem
    \begin{equation}
        \min_{f \in \native_m} \: \datafit(\sensing f) + \norm{\ROp_m f}_{\M(\cyl)}
        \label{eq:inverse-problem}
    \end{equation}
    that takes the form
    \begin{equation}
        s(\vec{x}) = \sum_{k=1}^K v_k \, \rho_m(\vec{w}_k^\T \vec{x} - b_k) + c(\vec{x}),
        \label{eq:ridge-spline}
    \end{equation}
    where $K \leq N - \dim \N_m$,
    $\rho_m = \max\curly{0, \dummy}^{m-1} / (m - 1)!$, $\vec{w}_k \in \Sph^{d-1}$, $v_k \in \R$, $b_k \in \R$, and $c(\dummy)$ is a polynomial of degree strictly less than $m$.
\end{theorem}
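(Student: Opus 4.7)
The plan is to transport the problem from the native space $\F_m$ to the space of measures $\M(\cyl)$ via the operator $\ROp_m$, where abstract representer theorems for total variation regularization apply. The guiding observation is that, by construction, a ridge atom $\rho_m(\vec{w}^\T\vec{x} - b)$ is a Green's-type function for $\ROp_m$ and is mapped to a Dirac mass $\delta_{(\vec{w}, b)}$ on the cylinder (modulo the natural antipodal symmetry of the Radon transform), while the kernel $\N_m$ consists of polynomials of degree strictly less than $m$. Hence any sparse, atomic minimizer in $\M(\cyl)$ pulls back to exactly the form \cref{eq:ridge-spline}.

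The argument has three steps. First, I would split $\F_m$ as a topological direct sum of a complementary subspace and $\N_m$: since $\N_m$ is finite-dimensional by \cref{lemma:finite-dim-null-space} and $\sensing$ is injective on $\N_m$ by assumption~\ref{item:well-posed-null-space}, a continuous right inverse for $\ROp_m$ (available once the Banach structure of $\F_m$ from \cref{thm:banach-space} is in hand) lets me write each $f$ uniquely as $f = f_u + q$ with $q \in \N_m$ and $\norm{\ROp_m f}_{\M(\cyl)} = \norm{\ROp_m f_u}_{\M(\cyl)}$. Second, for a minimizing sequence $(f_n)$, coercivity and lower semi-continuity of $\datafit$ combined with injectivity of $\sensing$ on $\N_m$ bound the null-space parts $q_n$, while the regularizer bounds $\ROp_m f_n$ in $\M(\cyl) = C_0(\cyl)^*$; Banach--Alaoglu then supplies a weak-$*$ convergent subsequence, and weak-$*$ lower semi-continuity of the $\M$-norm together with continuity of $\sensing$ yields a minimizer $f^\star$.

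Third, to upgrade existence to a sparse extremal minimizer I would invoke the abstract representer theorem of~\cite{sparsity-variational-inverse} on the nonempty convex set of minimizers: its extreme points arise as combinations of extreme points of the $\M(\cyl)$ unit ball (i.e., signed Diracs) subject to the $N$ affine measurement constraints, of which $\dim \N_m$ are already consumed by the null-space component. This produces an extremal minimizer with at most $K \leq N - \dim \N_m$ Dirac atoms. Inverting via the relation $\ROp_m^{-1}\delta_{(\vec{w}, b)} = \rho_m(\vec{w}^\T\dummy - b)$ (up to antipodal identification on $\Sph^{d-1}$) recovers the ridge-spline form, with $c(\dummy) \in \N_m$ providing the polynomial term.

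The main obstacle is the noncompactness of $\cyl = \Sph^{d-1}\times\R$. Standard sparse-representer arguments assume the atom space is compact so that weak-$*$ compactness of measures gives both existence of minimizers and a clean Krein--Milman representation of extreme points. Here, mass of $\ROp_m f_n$ could a priori escape to $\abs{t} \to \infty$, so I must argue tightness by pairing against test functions in $C_0(\cyl)$ and exploit the growth condition encoded in $L^{\infty, m-1}(\R^d)$ to keep the weak-$*$ limit inside $\F_m$. This is the novel technical ingredient compared to the compact-domain treatments in~\cite{fisher-jerome, l1-prob, convex-nn, L-splines}, and is precisely the gap the introduction flags as the principal difficulty.
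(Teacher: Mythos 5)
Your overall route coincides with the paper's: decompose $\F_m = \F_{m,\vec{\phi}} \oplus \N_m$ via the right inverse of $\ROp_m$, transport the problem to a Radon-measure recovery problem on the cylinder, invoke the abstract representer theorem of \citet{sparsity-variational-inverse} (whose proof, as the paper notes, goes through for any locally compact Hausdorff space, so the noncompactness of $\cyl$ is absorbed there rather than requiring a separate tightness argument), and pull Dirac atoms back to ridge atoms via \cref{lemma:nn-atoms-sparsified}. Your additional direct existence argument (minimizing sequence, Banach--Alaoglu, weak-$*$ lower semicontinuity) is sound in outline but redundant once that abstract result is in hand; the paper instead first uses strict convexity of $\datafit$ to reduce to interpolation constraints $\sensing f = \vec{z}$, which is a cosmetic difference.

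The genuine gap is the count $K \leq N - \dim \N_m$. You assert that $\dim\N_m$ of the $N$ measurement constraints are ``already consumed by the null-space component,'' but with an arbitrary biorthogonal system this is not automatic: writing $f = \ROp_{m,\vec{\phi}}^{-1}u + q$, each of the $N$ constraints $\ang{\nu_n, f} = z_n$ involves \emph{both} $u$ and $q$, so the reduced measure problem for $u$ still carries $N$ affine constraints and the abstract theorem only yields $K \leq N$. The paper's fix is to choose the biorthogonal system \emph{from the measurements themselves}: assumption \cref{item:well-posed-null-space} guarantees a subset of $N_0 = \dim\N_m$ measurement functionals whose restriction to $\N_m$ is invertible (the submatrix $\mat{A}_0$), and setting $\vec{\phi}_0 = \mat{A}_0^{-1}\circ\sensing_0$ makes the boundary conditions $\vec{\phi}_0(\ROp_{m,\vec{\phi}_0}^{-1}u) = \vec{0}$ of \cref{lemma:right-inverse} kill the $u$-dependence of those $N_0$ constraints; they then determine the null-space component $q_0$ alone, leaving only $N - N_0$ constraints on $u$ before \cref{prop:radon-measure-recovery-even,prop:radon-measure-recovery-odd} are applied. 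Without this adapted choice your argument proves the theorem only with the weaker bound $K \leq N$.
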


Proving \cref{thm:rep-thm} hinges on several technical results, the most important being the topological structure of the native space $\native_m$. In order to do any kind of analysis (e.g., proving that minimizers of \cref{eq:inverse-problem} even exist), we require the native space $\native_m$ to have some ``nice'' topological structure. We prove in \cref{thm:banach-space} that $\native_m$, when equipped with a proper direct-sum topology, is a Banach space. This key result hinges on being able to construct a stable right inverse of the operator $\ROp_m$, which we outline in \cref{lemma:right-inverse}. We remark that exhibiting a Banach space structure of the native space of an operator is common in variational inverse problems, e.g., in the theory of $\Ell$-splines~\citep{L-splines, native-banach}. We do remark, however, our result is, to the best of our knowledge, the first time exhibiting this structure on a non-Euclidean domain, which causes some nuances compared to prior work of \citet{L-splines, native-banach}.

\Cref{thm:rep-thm} shows that while the problem is posed in the continuum, it admits \emph{parametric} solutions in terms of a finite number of parameters. This demonstrates the sparsifying effect of the $\M$-norm, similar to its discrete analogue, the $\ell^1$-norm. We also remark that although the problem in \cref{thm:rep-thm} admits a sparse solution, it is important to note that the solution may not be unique and there may also exist non-sparse solutions.

\begin{remark}
    The polynomial term $c(\vec{x})$ that appears in \cref{eq:ridge-spline} corresponds to a term in the null space $\N_m$. When $m = 2$, the network in \cref{eq:ridge-spline} is a ReLU network and $c(\vec{x})$ takes the form
    \[
        c(\vec{x}) = \vec{u}^\T\vec{x} + s,
    \]
    where $\vec{u} \in \R^d$ and $s \in \R$. Thus, when $m = 2$, \cref{eq:ridge-spline} corresponds to a ReLU network with a \emph{skip connection}~\citep{skip-connections}.
\end{remark}

\begin{remark} \label[Remark]{rem:rescale}
    The fact that $\vec{w}_k \in \Sph^{d-1}$ does not restrict the single-hidden layer neural network due to the homogeneity of the truncated power functions. Indeed, given any single-hidden layer neural network with $\vec{w}_k \in \R^d \setminus \curly{\vec{0}}$, we can use the fact that $\rho_m$ is homogeneous of degree $m - 1$ to rewrite the network as
    \[
        \vec{x} \mapsto \sum_{k=1}^K v_k \norm{\vec{w}_k}_2^{m - 1} \rho_m(\tilde{\vec{w}}_k^\T \vec{x} - \tilde{b}_k) + c(\vec{x}),
    \]
    where $\tilde{\vec{w}}_k \coloneqq \vec{w}_k / \norm{\vec{w}_k}_2 \in \Sph^{d-1}$ and $\tilde{b}_k \coloneqq b_k / \norm{\vec{w}_k}_2 \in \R$. We use this fact to prove \cref{prop:equiv-opts} which recasts the variational problem in \cref{eq:inverse-problem} as a finite-dimensional neural network training problem (with no constraints on the input layer weights).
\end{remark}
The proof of \cref{thm:rep-thm} appears in \cref{sec:rep-thm}.

\subsection{Ridge splines} Splines and variational problems are tightly connected~\citep{splines-sobolev-seminorm, splines-variational, L-splines}. In the framework of $\Ell$-splines~\citep{L-splines}, a pseudodifferential operator, $\Ell: \Sch'(\R^d) \to \Sch'(\R^d)$, where $\Sch'(\R^d)$ denotes the space of tempered distributions on $\R^d$, is associated with a spline, and variational problems of the form
\begin{equation}
    \min_{f \in \native_m} \: \datafit(\sensing f) + \norm{\Ell f}_{\M(\R^d)}
    \label{eq:L-spline-problem}
\end{equation}
are studied, where $\datafit$ is a data fitting term, $\sensing$ is a measurement operator, $\native_m$ is the native space of $\Ell$, and $\M(\R^d)$ is the space of finite Radon measures on $\R^d$ which forms a Banach space when equipped with $\norm{\dummy}_{\M(\R^d)}$, the total variation norm in the sense of measures. The key result from~\cite{L-splines} is a representer theorem for the above problem which states that there exists a sparse solution which is a so-called $\Ell$-spline. By associating an operator to a spline, we have a simple way to characterize what functions are splines via the following definition.
\begin{definition}[nonuniform $\Ell$-spline {\cite[Definition~2]{L-splines}}] \label[Definition]{defn:L-spline}
    A function $s: \mathbb{R}^d \to \mathbb{R}$ (of slow growth) is said to be a \emph{nonuniform $\Ell$-spline} if
    \[
    \Ell\curly{s} = \sum_{k=1}^K v_k \, \delta_{\R^d}(\dummy - \vec{x}_k),
    \]
    where $\delta_{\R^d}$ denotes the Dirac impulse on $\R^d$, $\curly{v_k}_{k=1}^K$ is a sequence of weights and the locations of Dirac impulses are at the spline knots $\curly{\vec{x}_k}_{k=1}^K$.
\end{definition}

Due to the similarities between the variational problem in \cref{eq:L-spline-problem} and our variational problem in \cref{eq:inverse-problem}, we can similarly define the notion of a (polynomial) \emph{ridge spline}. Before stating this definition, we remark that in this paper we will be working with Dirac impulses on different domains. For clarity, we will subscript the ``$\delta$'' with the appropriate domain, e.g., $\delta_{\R^d}$ denotes the Dirac impulse on $\R^d$ and $\delta_{\cyl}$ denotes the Dirac impulse on $\cyl$.
\begin{definition}[nonuniform polynomial ridge spline] \label[Definition]{defn:ridge-spline}
    A function $s: \mathbb{R}^d \to \mathbb{R}$ (of slow growth) is said to be a \emph{nonuniform polynomial ridge spline} of order $m$ if
    \begin{equation}
      \ROp_m\curly{s} = \sum_{k=1}^K v_k \, \sq{\frac{\delta_\cyl(\dummy - \vec{z}_k) + (-1)^m \delta_\cyl(\dummy + \vec{z}_k)}{2}},
      \label{eq:ridge-spline-innovation}
    \end{equation}
    where $\curly{v_k}_{k=1}^K$ is a sequence of weights and the locations of the Dirac impulses are at $\vec{z}_k = (\vec{w}_k, b_k) \in \cyl$. The collection $\curly{\vec{z}_k}_{k=1}^K$ can be viewed as a collection of Radon domain spline knots.
\end{definition}

\begin{remark} \label[Remark]{rem:radon-impulse}
    The reason
    \begin{equation}
        \frac{\delta_\cyl(\dummy - \vec{z}_k) + (-1)^m \delta_\cyl(\dummy + \vec{z}_k)}{2}
        \label{eq:radon-impulse}
    \end{equation}
    appears in \cref{eq:ridge-spline-innovation} rather than $\delta_\cyl(\dummy - \vec{z}_k)$ is due to the fact that the operator $\ROp_m$ maps functions $f \in \F_m$ to even (respectively odd) elements of $\M(\cyl)$ when $m$ is even (respectively odd). Thus, \cref{eq:radon-impulse} can be viewed as an even or odd version of the normal translated Dirac impulse in the sense that when acting on even or odd test functions defined on $\cyl$, it is the point evaluation operator.
\end{remark}
    
\begin{remark} \label[Remark]{rem:nn-atoms-sparsified}
    When $s$ is a neural network as in \cref{eq:ridge-spline}, we have that \cref{eq:ridge-spline-innovation} holds. The way to understand this is that the neurons in \cref{eq:ridge-function} are ``sparsified'' by $\ROp_m$ in the sense that
    \[
        \ROp_m r_{(\vec{w}, b)}^{(m)} = \frac{\delta_\cyl(\dummy - \vec{z}_k) + (-1)^m \delta_\cyl(\dummy + \vec{z}_k)}{2},
    \]
    where $r_{(\vec{w}, b)}^{(m)}(\vec{x}) \coloneqq \rho_m(\vec{w}^\T\vec{x} - b)$, $(\vec{w}, b) \in \cyl$. We show that this is true in \cref{lemma:nn-atoms-sparsified}. In other words, $r_{(\vec{w}, b)}^{(m)}$ can be viewed as a translated \emph{Green's function} of $\ROp_m$, where the translation is in the Radon domain.
\end{remark}

\begin{figure}[htb!]
    \centering
    \begin{tikzpicture}
        \node (a) at (0, 0) {
            \includegraphics[scale=0.4]{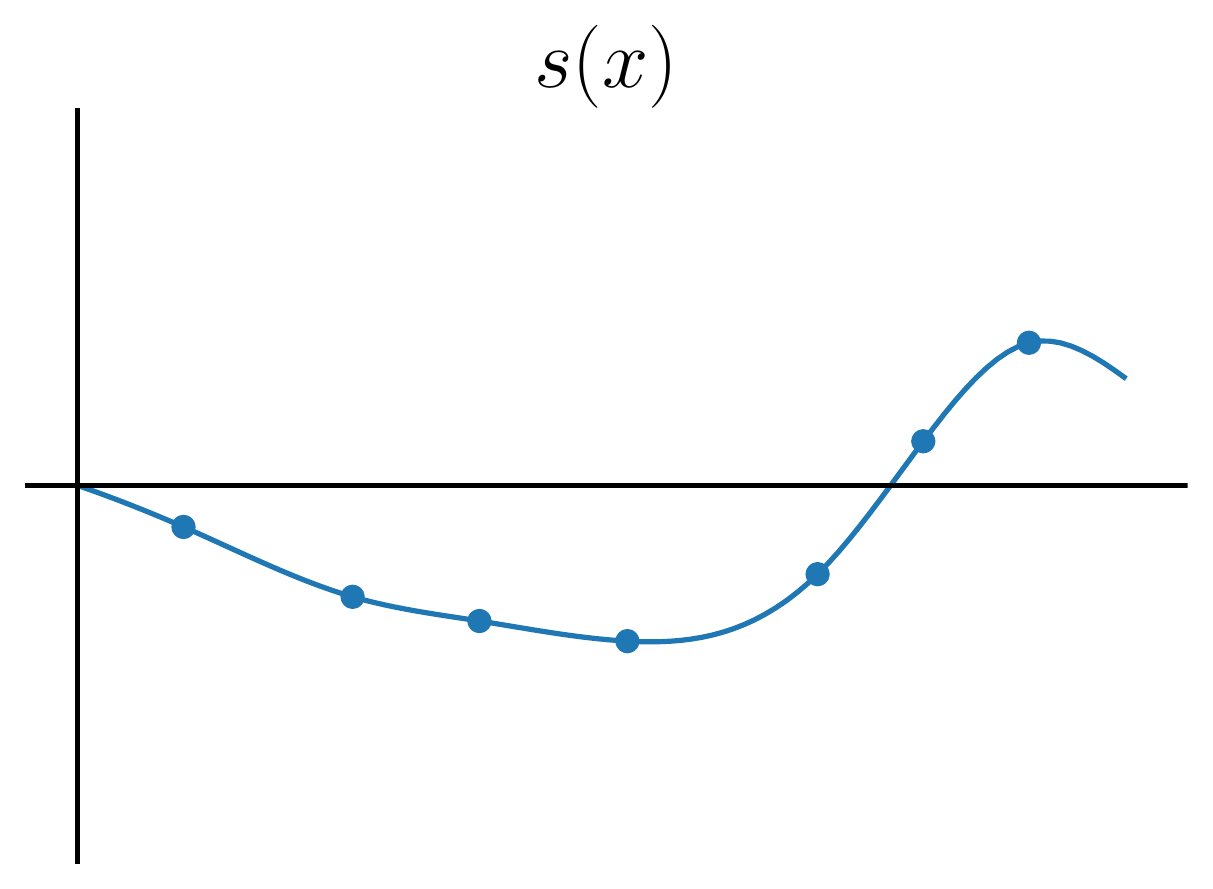}
        };
        \node (b) at (8, 0) {
            \includegraphics[scale=0.4]{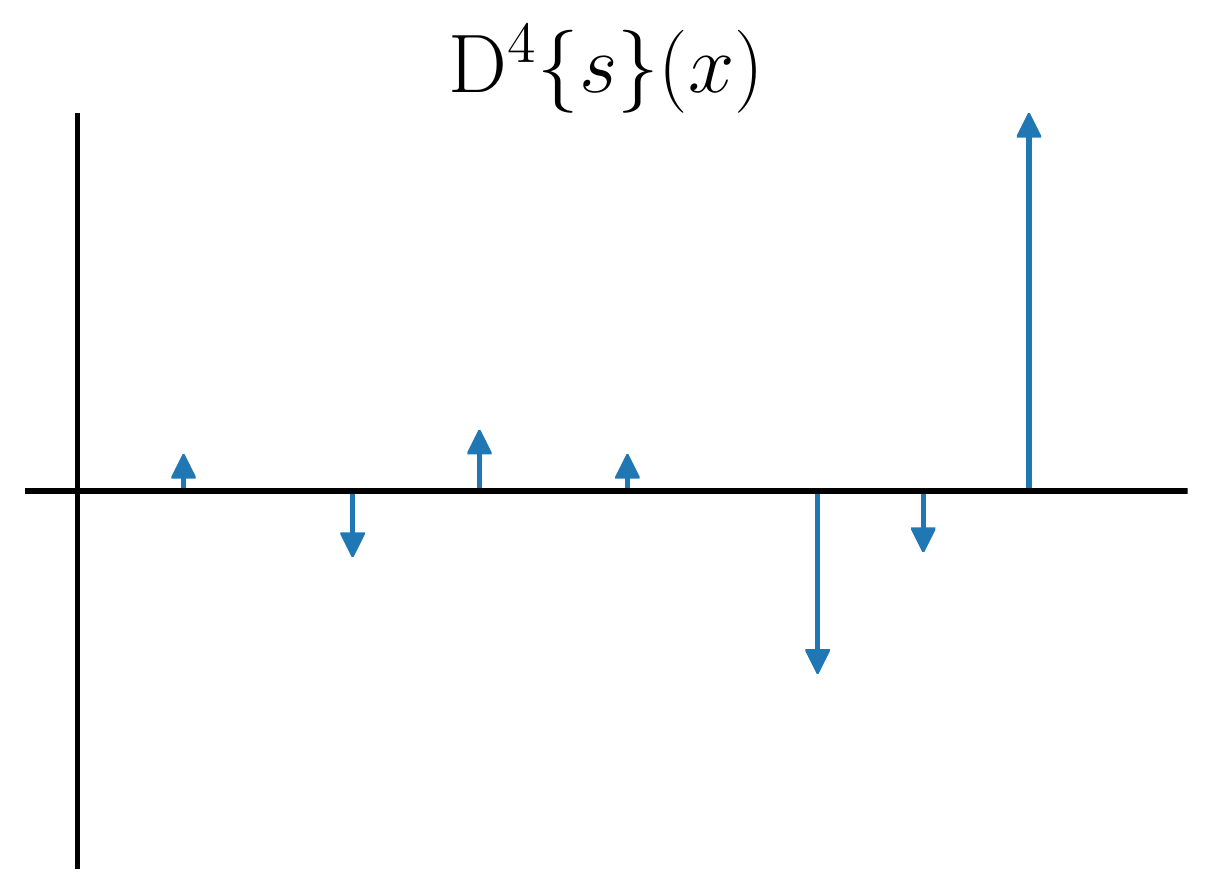}
        };
        \draw[gray, -{Latex[width=2mm]}] (a) -- (b) node[midway, above]{\footnotesize$\D^4$};
    \end{tikzpicture}
    \caption{In the left plot we have a cubic spline with 7 knots. After applying $\D^4$, the fourth derivative operator, we are left with 7 Dirac impulses as seen in the right plot.}
    \label{fig:cubic-spline}
\end{figure}

\begin{figure}[htb!]
    \centering
    \begin{tikzpicture}
        \node (a) at (0, 0) {
            \includegraphics[scale=0.4]{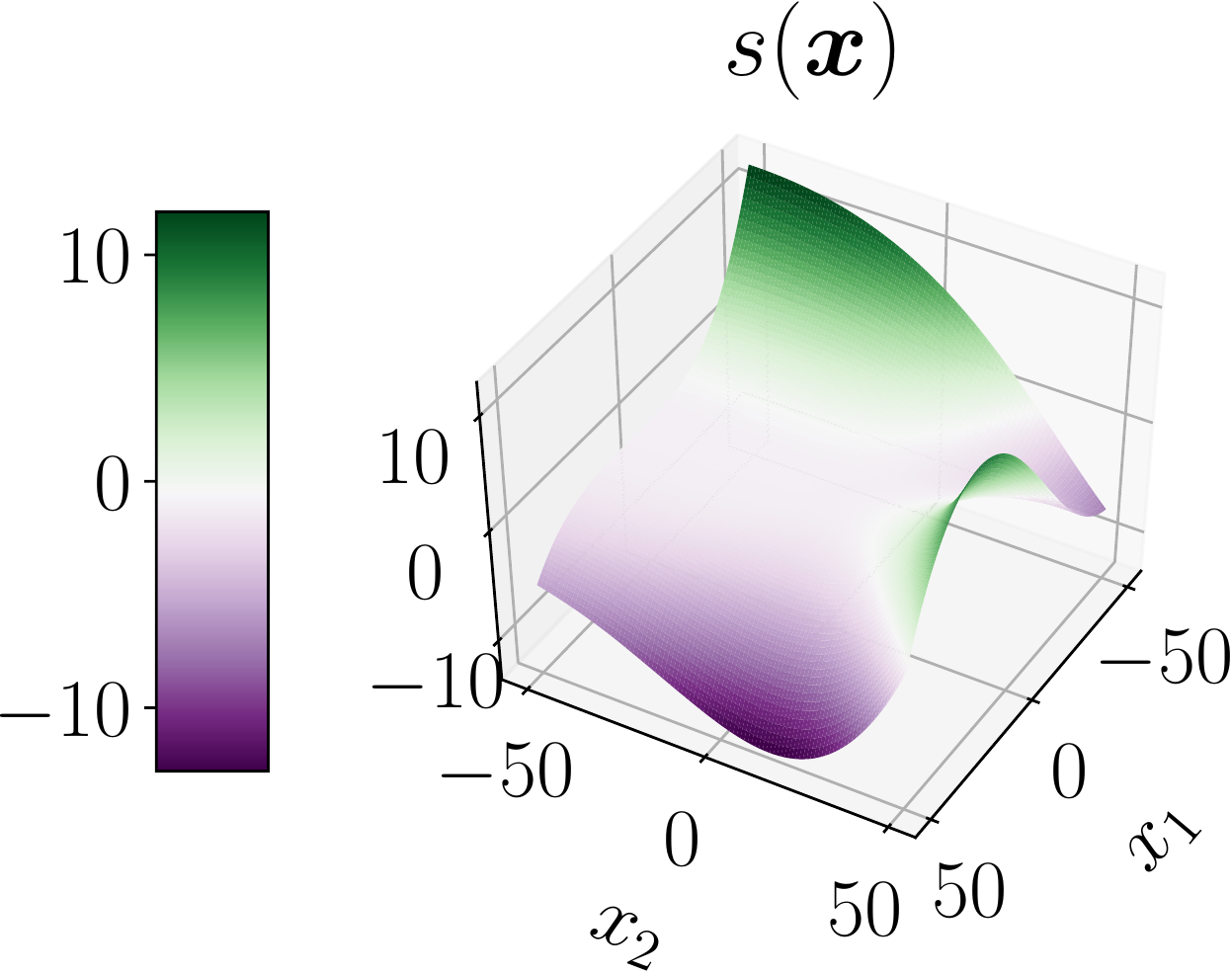}
        };
        \node (c) at (8, 0) {
            \includegraphics[scale=0.4]{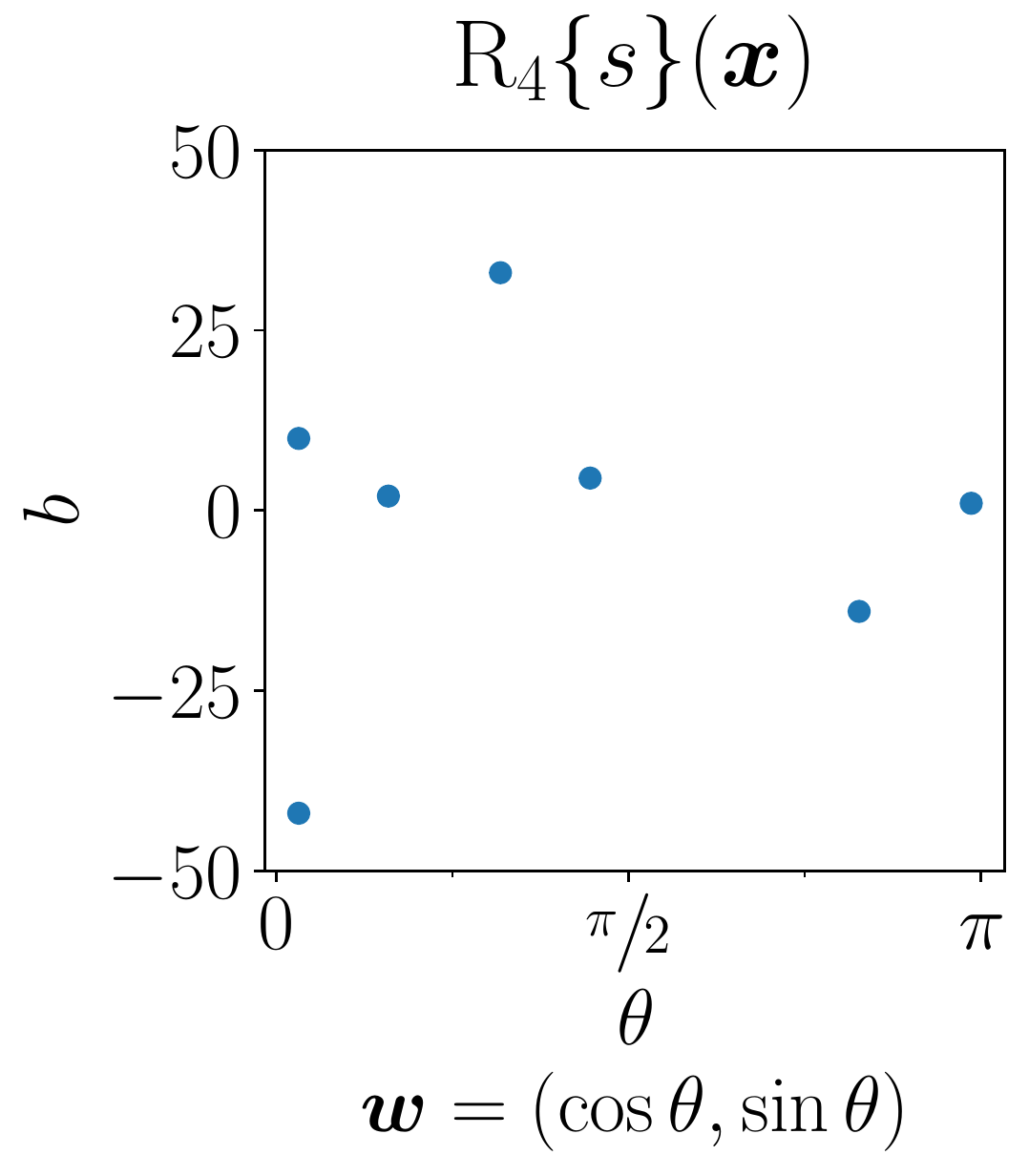}
        };
        \draw[gray, -{Latex[width=2mm]}] (a) -- (c) node[midway, above]{\footnotesize$\ROp_4$};
    \end{tikzpicture}
    \caption{In the left plot we have a two-dimensional cubic ridge spline with 7 neurons. After applying $\ROp_4$, we are left with 7 Dirac impulses in the Radon domain, which are designated by the dots in the right plot. We have parameterized the directions in the Radon domain by $\theta \in [0, \pi)$. This parameterization of the two-dimensional Radon domain is known as a \emph{sinogram}. This parameterization of the Radon domain eliminates the two impulses per neuron we see in \cref{eq:ridge-spline-innovation} as $\theta \in [0, \pi)$ is only ``half'' of the unit circle $\Sph^1$.}.
    \label{fig:cubic-ridge-spline}
\end{figure}

We illustrate the sparsifying effect of the operator $\Ell$ in the case of cubic splines, i.e., $\Ell = \D^4$, the fourth derivative operator, in \cref{fig:cubic-spline}. We also illustrate the sparsifying effect of the operator $\ROp_m$ in the case of cubic ridge splines, i.e., $m = 4$, in \cref{fig:cubic-ridge-spline}. We also remark that  in the univariate case ($d = 1$), our notion of a polynomial ridge spline of order $m$ exactly coincides with the classical notion of a univariate polynomial spline of order $m$. We show this in \cref{subsec:1D-splines}.

We finally remark that when $m$ is even, we have the equality
\[
    \ROp_m = c_d\,\ramp^{d-1} \RadonOp \Delta^{m/2},
\]
by the intertwining relations of the Radon transform and the Laplacian, which we later discuss in \cref{eq:radon-intertwining}. This provides another way to understand how $\ROp_m$ sparsifies ridge splines. We illustrate this in the $m = 2$ (i.e., ReLU network) case in \cref{fig:linear-ridge-spline}.

\begin{figure}[htb!]
    \centering
    \begin{tikzpicture}
        \node (a) at (0, 0) {
            \includegraphics[scale=0.33]{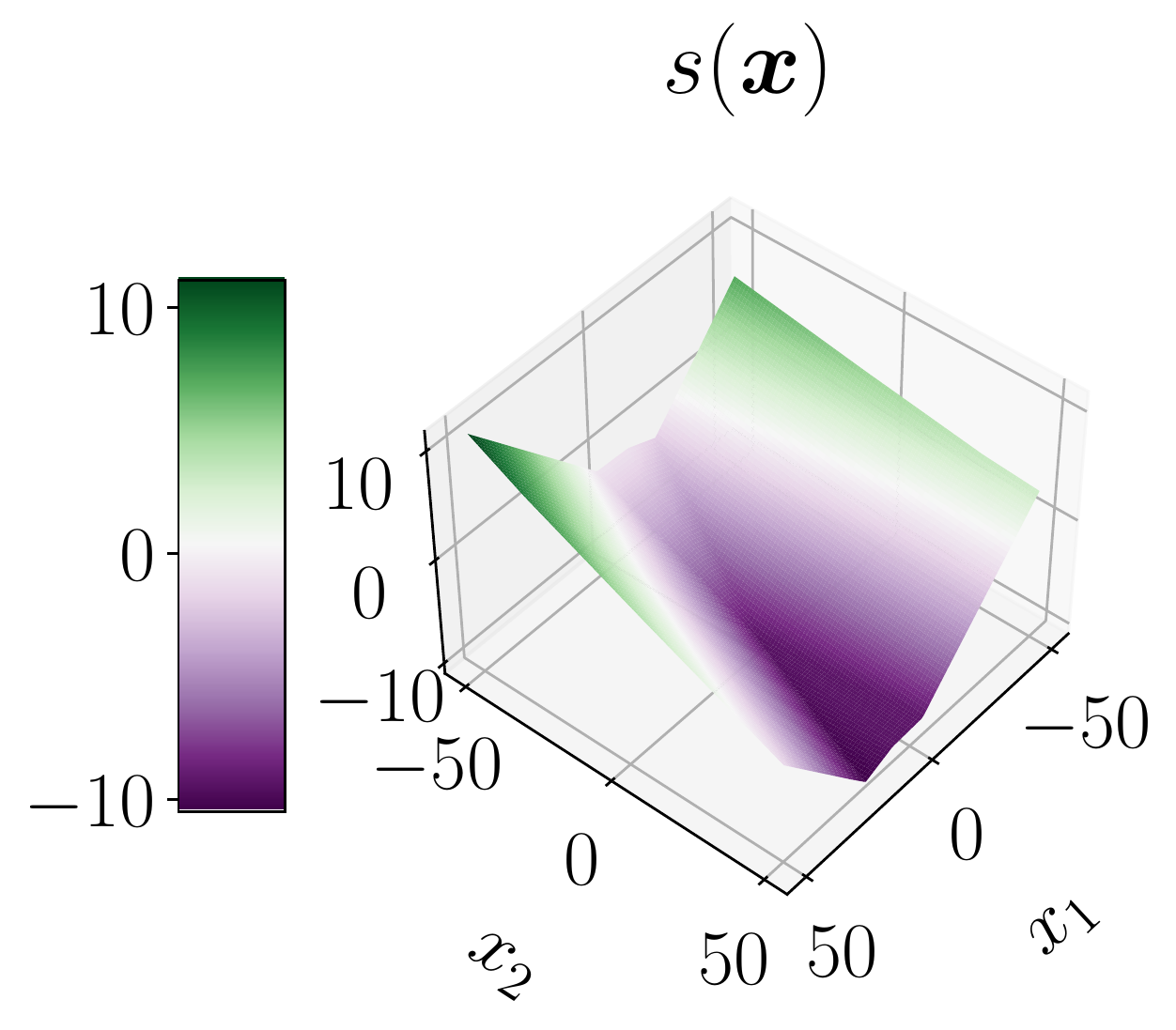}
        };
        \node (b) at (5, 0) {
            \includegraphics[scale=0.33]{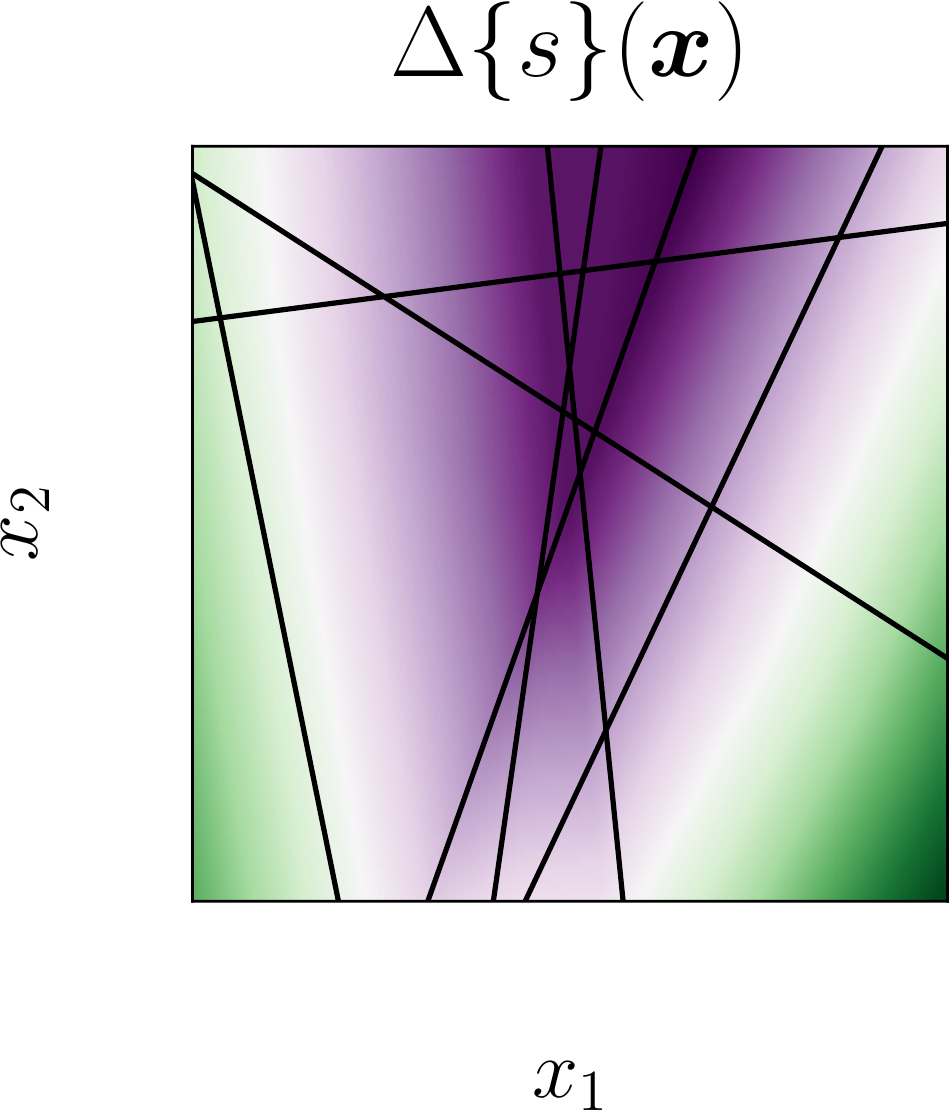}
        };
        \node (c) at (10, 0) {
            \includegraphics[scale=0.33]{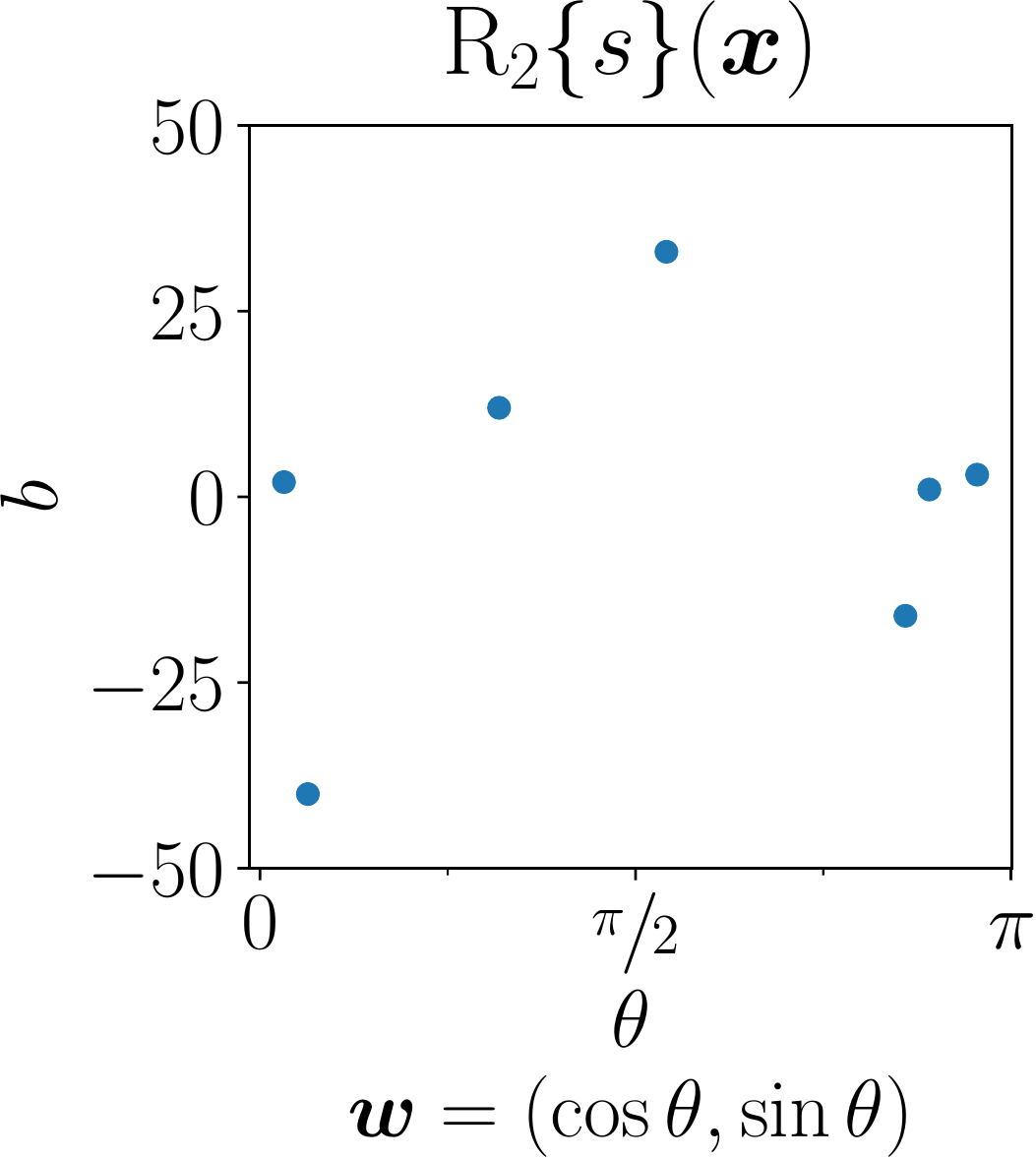}
        };
        \draw[gray, -{Latex[width=2mm]}] (a) -- (b) node[midway, above]{\footnotesize$\Delta$};
        \draw[gray, -{Latex[width=2mm]}] (b) -- (c) node[midway, above]{\footnotesize$c_d \ramp \RadonOp$};
    \end{tikzpicture}
    \caption{On the left plot we have a two-dimensional linear ridge spline (single-hidden layer ReLU network) with 7 neurons. After applying $\Delta$, we get an ``impulse sheet'', i.e., a mapping of the form $\vec{x} \mapsto \delta_\R(\vec{w}_k^\T\vec{x} - b_k)$, for each neuron, designated by the black lines in the top down view of the linear ridge spline in the middle plot. Then, after applying the Radon transform and ramp filter to the middle plot, we arrive with 7 Dirac impulses in the Radon domain, which are designated by the dots in the left plot. Just as in \cref{fig:cubic-ridge-spline}, we have 
    parameterized the directions in the Radon domain by $\theta \in [0, \pi)$, eliminating the two impulses per neuron we see in \cref{eq:ridge-spline-innovation}.}
    \label{fig:linear-ridge-spline}
\end{figure}
    
\subsection{Scattered data approximation and neural network training}
Since \cref{thm:rep-thm} says that a single-hidden layer neural network as in \cref{eq:ridge-spline} is a solution to the continuous-domain inverse problem in \cref{eq:inverse-problem}, we can recast the continuous-domain problem in \cref{eq:inverse-problem} as the \emph{finite-dimensional neural network training} problem
\begin{equation}
    \min_{\vec{\theta} \in \Theta} \: \datafit(\sensing f_\vec{\theta}) + \norm{\ROp_m f_\vec{\theta}}_{\M(\cyl)},
    \label{eq:nn-problem}
\end{equation}
so long as the number of neurons $K$ is large enough\footnote{We characterize what large enough means in \cref{prop:equiv-opts}.} ($K \geq N$ suffices, giving insight into the efficacy of overparameterization in neural network models), where
\[
    f_\vec{\theta}(\vec{x}) \coloneqq \sum_{k=1}^K v_k \, \rho_m(\vec{w}_k^\T \vec{x} - b_k) + c(\vec{x}),
\]
where $\vec{\theta} = (\vec{w}_1, \ldots, \vec{w}_K, v_1, \ldots, v_K, b_1, \ldots, b_K, c)$ contains the neural network parameters and $\Theta$ is the collection of all $\vec{\theta}$ such that $v_k \in \R$, $\vec{w}_k \in \R^d$, and $b_k \in \R$ for $k = 1, \ldots, K$, and where $c$ is a polynomial of degree strictly less than $m$. We show in \cref{lemma:nn-norm} that
\[
    \norm{\ROp_m f_\vec{\theta}}_{\M(\cyl)} = \sum_{k=1}^K \abs{v_k} \norm{\vec{w}_k}_2^{m-1},
\]
and then use this fact to show that \cref{eq:nn-problem} is equivalent to two neural network training problems, with variants of well-known neural network regularizers, in the following proposition.
\begin{theorem}
    \label{prop:equiv-opts}
    The solutions to the finite-dimensional optimization in \cref{eq:nn-problem} are solutions to optimization in \cref{eq:inverse-problem} so long as $K \geq N - \dim \N_m$. Additionally, the optimization in \cref{eq:nn-problem} is equivalent to
    \begin{equation}
        \min_{\vec{\theta} \in \Theta} \:  \datafit(\sensing f_\vec{\theta}) + {\sum_{k=1}^K \abs{v_k} \norm{\vec{w}_k}_2^{m - 1}},
        \label{eq:nn-training-with-pathnorm}
    \end{equation}
    for any $K \in \mathbb{N}$. Furthermore, the solutions to
    \begin{equation}
        \min_{\vec{\theta} \in \Theta} \: \datafit(\sensing f_\vec{\theta}) + {\frac{1}{2}\sum_{k=1}^K \paren{\abs{v_k}^2 + \norm{\vec{w}_k}_2^{2m - 2}}}
        \label{eq:nn-training-with-weight-decay}
    \end{equation}
    are also solutions to \cref{eq:nn-training-with-pathnorm} for any $K \in \mathbb{N}$. Finally, for both problems in \cref{eq:nn-training-with-pathnorm,eq:nn-training-with-weight-decay}, for $K_1$ and $K_2$ such that $K_1 > K_2$, a global minimizer when $K = K_2$ will always be a global minimizer when $K = K_1$.
\end{theorem}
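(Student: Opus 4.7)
The plan is to establish the four assertions in turn, using \cref{thm:rep-thm} and \cref{lemma:nn-norm} as the main engines. For the first assertion, I would start from a sparse global minimizer $s^\ast$ of \cref{eq:inverse-problem} guaranteed by \cref{thm:rep-thm}, which takes the form \cref{eq:ridge-spline} with $K^\ast \leq N - \dim \N_m$ active neurons plus a polynomial in $\N_m$. For any $K \geq N - \dim \N_m$, I can realize $s^\ast$ as $f_{\vec{\theta}^\ast}$ by padding the $K-K^\ast$ unused neurons with zero output weight, so $\vec{\theta}^\ast \in \Theta$. Since every $f_{\vec{\theta}}$ lies in $\F_m$, the finite-dimensional problem \cref{eq:nn-problem} is a restriction of \cref{eq:inverse-problem}, hence its infimum is bounded below by the continuous-domain minimum; the padded $\vec{\theta}^\ast$ attains this lower bound, proving it is a global minimizer of \cref{eq:nn-problem}. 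Assertion~2 is then immediate from \cref{lemma:nn-norm}: after substituting $\norm{\ROp_m f_{\vec{\theta}}}_{\M(\cyl)} = \sum_{k=1}^K \abs{v_k}\,\norm{\vec{w}_k}_2^{m-1}$, the objectives of \cref{eq:nn-problem} and \cref{eq:nn-training-with-pathnorm} agree pointwise on $\Theta$ for every $K$.

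For assertion~3, the key identity is the scalar AM--GM inequality $ab \leq \tfrac{1}{2}(a^2+b^2)$ applied termwise with $a = \abs{v_k}$ and $b = \norm{\vec{w}_k}_2^{m-1}$: the weight-decay regularizer in \cref{eq:nn-training-with-weight-decay} pointwise dominates the path-norm regularizer in \cref{eq:nn-training-with-pathnorm}, so the weight-decay minimum is no less than the path-norm minimum. For the reverse inequality I would use the rescaling freedom of \cref{rem:rescale}: given any $\vec{\theta}$ with $v_k \neq 0$ and $\vec{w}_k \neq 0$, rescale by $\alpha_k \coloneqq \bigl(\norm{\vec{w}_k}_2^{m-1}/\abs{v_k}\bigr)^{1/(2m-2)}$ to produce an equivalent parameterization $\tilde{\vec{\theta}}$ with $f_{\tilde{\vec{\theta}}} = f_{\vec{\theta}}$ and $\abs{\tilde v_k} = \norm{\tilde{\vec{w}}_k}_2^{m-1}$ for every $k$, which is exactly the equality case of AM--GM; neurons with $v_k = 0$ or $\vec{w}_k = 0$ contribute zero to both penalties and may simply be dropped. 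Hence the weight-decay objective at $\tilde{\vec{\theta}}$ equals the path-norm objective at $\vec{\theta}$, yielding the reverse inequality and therefore equality of the two minimum values. A standard squeeze then shows that every weight-decay minimizer has path-norm objective sandwiched between the common minimum and itself, so it also attains the path-norm minimum.

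Assertion~4 combines a simple monotonicity observation with the stabilization provided by assertions~1 and~2. Appending a neuron with $v = 0$ preserves both $f_{\vec{\theta}}$ and both penalty terms, so the map $K \mapsto \min_{\vec{\theta} \in \Theta_K}$ of either objective is non-increasing in $K$; on the other hand, by assertions~1 and~2 this minimum coincides with the continuous-domain minimum of \cref{eq:inverse-problem} as soon as $K \geq N - \dim \N_m$, and therefore stabilizes. Consequently, once $K_2 \geq N - \dim \N_m$, the $K_1$- and $K_2$-minima agree for any $K_1 > K_2$, and padding a $K_2$-minimizer to $K_1$ neurons yields a $K_1$-minimizer. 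The main technical hurdle I expect is not in these manipulations but in the underlying \cref{lemma:nn-norm}: establishing $\norm{\ROp_m f_{\vec{\theta}}}_{\M(\cyl)} = \sum_k \abs{v_k}\,\norm{\vec{w}_k}_2^{m-1}$ as an \emph{equality} (rather than merely an upper bound) requires the Radon-domain impulses produced by sparsifying each neuron (\cref{rem:nn-atoms-sparsified}) to remain independent in $\M(\cyl)$ so that no cancellation between oppositely signed atoms occurs; once this identity is granted, everything else reduces to the convexity and homogeneity arguments above.
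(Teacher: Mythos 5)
Your proposal is correct and takes essentially the same route as the paper: the first two assertions are dispatched by \cref{thm:rep-thm} and \cref{lemma:nn-norm} exactly as you describe, and the weight-decay claim is handled by the same rescaling-plus-AM--GM argument (the paper phrases it as each regularizer term being a sum of squares $\abs{v_k}^2 + (\norm{\vec{w}_k}_2^{m-1})^2$, minimized when $\abs{v_k} = \norm{\vec{w}_k}_2^{m-1}$); your explicit two-sided squeeze is, if anything, more careful than the paper's one-line version. The only point worth flagging is the final claim about $K_1 > K_2$: the paper's proof does not address it at all, and your monotonicity-plus-stabilization argument establishes it only for $K_2 \geq N - \dim\N_m$ --- which is arguably the only regime in which the literal claim can hold, since for small $K_2$ the minimum value can strictly decrease as the width grows.
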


When $m = 2$, which coincides with neural networks with ReLU activation functions, \cref{eq:nn-training-with-pathnorm,eq:nn-training-with-weight-decay} correspond to previously studied training problems. The regularizer in \cref{eq:nn-training-with-pathnorm} coincides with the notion of the \emph{$\ell^1$-path-norm} regularization as proposed in~\cite{path-norm} and the regularizer in \cref{eq:nn-training-with-weight-decay} coincides with the notion of training a neural network with \emph{weight decay} as proposed in~\cite{weight-decay}. Thus, our result shows that these notions of regularization are intrinsically tied to the ReLU activation function, and, perhaps, variants such as the regularizers that appear in \cref{eq:nn-training-with-pathnorm,eq:nn-training-with-weight-decay} should be used in practice for non-ReLU activation functions, where $m - 1$ could corresponds to the algebraic growth rate of the activation function.
    
In machine learning, the measurement model is usually \emph{ideal sampling}, i.e., the measurement operator $\sensing$ acts on a function $f: \R^d \to \R$ via
\begin{equation}
    \sensing: \F_m \ni f \mapsto \begin{bmatrix}
        \ang{\delta_{\R^d}(\dummy - \vec{x}_1), f} \\
        \vdots \\
        \ang{\delta_{\R^d}(\dummy - \vec{x}_N), f}
    \end{bmatrix}
    =
    \begin{bmatrix}
        f(\vec{x}_1) \\
        \vdots \\
        f(\vec{x}_N)
    \end{bmatrix} \in \R^N,
    \label{eq:ideal-sampling}
\end{equation}
so the problem is to approximate the scattered data $\curly{(\vec{x}_n, y_n)}_{n=1}^N \subset \R^d \times \R$. For the above $\sensing$ to be a valid measurement operator for \cref{thm:rep-thm}, it must be continuous.
\begin{lemma} \label[Lemma]{lemma:ideal-sampling-continuous}
    The operator $\sensing: \F_m \to \R^N$ defined in \cref{eq:ideal-sampling} is continuous.
\end{lemma}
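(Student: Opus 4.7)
The plan is to exploit the Banach space structure of $\F_m$ (promised by \cref{thm:banach-space}) together with the explicit form of the stable right inverse of $\ROp_m$ furnished by \cref{lemma:right-inverse}. Since $\sensing$ takes values in the finite-dimensional space $\R^N$, continuity of $\sensing$ reduces to the boundedness of each coordinate functional $f \mapsto f(\vec{x}_n)$ on the Banach space $\F_m$, for $n = 1, \ldots, N$.

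First, I would invoke the direct-sum topological decomposition $\F_m$ inherits from \cref{thm:banach-space}: each $f \in \F_m$ splits uniquely as $f = u + q$, where $u$ lies in the range of the right inverse from \cref{lemma:right-inverse} and $q \in \N_m$, with the $\F_m$-norm dominating both $\norm{\ROp_m f}_{\M(\cyl)}$ and a norm on the finite-dimensional piece $q$. It then suffices to bound $u(\vec{x}_n)$ and $q(\vec{x}_n)$ separately. The null space contribution is essentially free: since $\N_m$ is finite-dimensional by \cref{lemma:finite-dim-null-space}, every linear functional on $\N_m$, in particular $q \mapsto q(\vec{x}_n)$, is automatically bounded.

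For the range contribution, I would use that the right inverse of \cref{lemma:right-inverse} acts as an integral operator whose kernel is a polynomially corrected version of the truncated power ridge $\rho_m(\vec{w}^\T\vec{x} - t)$. Evaluating at $\vec{x}_n$ yields
\[
    u(\vec{x}_n) = \int_\cyl g_m(\vec{x}_n; \vec{w}, t) \, \d(\ROp_m f)(\vec{w}, t),
\]
so pairing the kernel (as a bounded measurable function of $(\vec{w}, t) \in \cyl$ for fixed $\vec{x}_n$) against the finite Radon measure $\ROp_m f$ gives $\abs{u(\vec{x}_n)} \leq \norm{g_m(\vec{x}_n; \dummy)}_\infty \norm{\ROp_m f}_{\M(\cyl)}$. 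Summing the two contributions produces $\abs{f(\vec{x}_n)} \leq C_n \norm{f}_{\F_m}$ for a constant $C_n$ depending on $\vec{x}_n$, and assembling the $N$ coordinates proves the continuity of $\sensing$.

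The main obstacle is establishing the uniform boundedness of the kernel $g_m(\vec{x}_n; \dummy)$ on the unbounded Radon cylinder $\cyl = \Sph^{d-1} \times \R$: the raw truncated power $\rho_m(\vec{w}^\T\vec{x}_n - t)$ grows like $\abs{t}^{m - 1}$ as $\abs{t} \to \infty$, so boundedness of the kernel rests entirely on the polynomial correction baked into the right inverse of \cref{lemma:right-inverse} to cancel this algebraic growth against the weighted $L^{\infty, m-1}$ setting. Once that correction is in hand, the pairing argument and the reduction to the finite-dimensional null space are routine.
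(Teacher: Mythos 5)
Your proposal is correct and follows essentially the same route as the paper: decompose $f = \ROp_{m,\vec{\phi}}^{-1}(\ROp_m f) + q$ via \cref{thm:banach-space}, bound the null-space point evaluation by finite-dimensionality of $\N_m$, and bound the other term by pairing the (bounded) kernel of the right inverse against the finite measure $\ROp_m f$. Your explicit remark that the polynomial correction in the kernel must cancel the $\abs{t}^{m-1}$ growth of the truncated power is a point the paper glosses over by simply citing boundedness of $\ROp_{m,\vec{\phi}}^{-1}$, so it is a welcome clarification rather than a deviation.
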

The proof of \cref{lemma:ideal-sampling-continuous} appears in \cref{app:aux-proofs}. \Cref{lemma:ideal-sampling-continuous} also says that $\F_m$ is a \emph{reproducing kernel Banach space}.

By choosing an appropriate data fitting term $G$, the generality of our main result in \cref{thm:rep-thm} says that the solutions problems with interpolation constraints
\[
    \min_{f \in \native_m} \: \norm{\ROp_m f}_{\M(\cyl)} \quad\subj\quad f(\vec{x}_n) = y_n, \: n = 1, \ldots, N
\]
and to regularized problems where we have soft-constraints in the case of noisy data
\begin{equation}
    \min_{f \in \native_m} \: \sum_{n=1}^N \ell(f(\vec{x}_n), y_n) + \lambda \norm{\ROp_m f}_{\M(\cyl)},
    \label{eq:regularized-problem}
\end{equation}
where $\lambda > 0$ is an adjustable regularization parameter and $\ell(\dummy, \dummy)$ is an appropriate loss function, e.g., the squared error loss, are single-hidden layer neural networks. We can then invoke \cref{prop:equiv-opts} to recast the problem in \cref{eq:regularized-problem} with either of the equivalent finite-dimensional neural network training problems:
\begin{align*}
    &\min_{\vec{\theta} \in \Theta} \: \sum_{n=1}^N \ell(f_\vec{\theta}(\vec{x}_n), y_n) + \lambda 
    \sum_{k=1}^K \abs{v_k} \norm{\vec{w}_k}_2^{m - 1} \numberthis \label{eq:tikhonov-problem} \\
    &\min_{\vec{\theta} \in \Theta} \: \sum_{n=1}^N \ell(f_\vec{\theta}(\vec{x}_n), y_n) + \frac{\lambda}{2}\sum_{k=1}^K \paren{\abs{v_k}^2 + \norm{\vec{w}_k}_2^{2m - 2}},
\end{align*}
so long as the number of neurons $K$ is large enough as stated in \cref{prop:equiv-opts}. The two problems in the above display correspond to how neural network training problems are actually set up.

\subsection{Statistical generalization bounds}
Neural networks are widely used for pattern classification.
In the ideal sampling scenario, the generality of \cref{thm:rep-thm} allows us to consider optimizations of the form
\begin{equation}
    \min_{f \in \F_m} \: \sum_{n=1}^N \ell\big(y_n f(\vec{x}_n)\big) \quad\subj\quad \norm{\ROp_m f}_{\M(\cyl)} \leq B,
    \label{eq:class-opt}
\end{equation}
for some constant $B < \infty$, where $\ell(\dummy)$ is an appropriate $L$-Lipschitz loss function of the product $y_n f(\vec{x}_n)$. If we assume that $\curly{(\vec{x}_n,y_n)}_{n=1}^N$ are drawn independently and identically from some unknown underlying probability distribution, $y_n \in \curly{-1,+1}$, $n=1,\dots,N$, and the loss function assigns positive losses when $\sgn(f(\vec{x}_n)) \neq y_n$ (or equivalently when $y_n f(\vec{x}_n) < 0$), this is the \emph{binary classification} setting. Given this set up, it is natural to examine if solutions to \cref{eq:class-opt} predict well on new random examples $(\vec{x},y)$ drawn independently from the same underlying distribution.

We can invoke \cref{prop:equiv-opts} and consider optimization over neural network parameters by considering the recast optimization to \cref{eq:class-opt}
\begin{equation}
    \min_{\vec{\theta} \in \Theta} \: \sum_{n=1}^N \ell\big(y_n f_\vec{\theta}(\vec{x}_n)\big) \quad\subj\quad  \sum_{k=1}^K \abs{v_k} \norm{\vec{w}_k}_2^{m-1} \leq B,
    \label{eq:nn-class-opt}
\end{equation}
where
\[
    f_\vec{\theta}(\vec{x}) \coloneqq \sum_{k=1}^K v_k \rho_m(\vec{w}_k^\T \vec{x} - b_k) + c(\vec{x}).
\]
In particular, the solution to \cref{eq:nn-class-opt} is known as an \emph{Ivanov estimator}~\citep{estimators} which is equivalent to the solution to \cref{eq:tikhonov-problem} for a particular choice of $\lambda$ which depends on $B$ and the data through the data fitting term. In this section we provide a \emph{generalization bound} for the Ivanov estimator.

Let $\bar{f}$ be a minimizer of the optimization in \cref{eq:nn-class-opt}. We show that $B$ directly controls the error probability of $\bar{f}$, i.e., ${\mathbb{P}}\big(y \bar{f}(\vec{x})<0\big)$, where $(\vec{x},y)$ is an independent sample from the underlying distribution.  This is referred to as the \emph{generalization error} in machine learning parlance. We follow the standard approach based on Rademacher complexity \citep{bartlett2002rademacher,shalev2014understanding}.

Let $\F$ be a hypothesis space. For every $f \in \F$, define its \emph{risk} and \emph{empirical risk}
\[
    R(f) \coloneqq \E\left[\ell\big(y f(\vec{x})\big)\right]
    \quad\text{and}\quad
    \hat{R}_N(f) \coloneqq \frac{1}{N} \sum_{n=1}^N \ell\big(y_n f(\vec{x}_n)\big),
\]
and assume the loss function satisfies $0 \leq \ell\big(y_n f(\vec{x}_n)\big) \leq C_0$ almost surely, for $n=1,\ldots,N$ and some constant $C_0 < \infty$. Then, for every $f \in \F$, we have the following generalization bound. With probability at least $1 - \delta$,
\[
    R(f) \leq \hat{R}_N(f) + L\, \Rad(\F) + C_0 \sqrt{\frac{\log(1 / \delta)}{2N}},
\]
where we use the fact that the loss $\ell$ is $L$-Lipschitz and $\Rad(\F)$ is the \emph{Rademacher complexity} of the class $\F$ defined via
\begin{equation}
    \Rad(\F) \coloneqq 2 \E\left[\sup_{f\in\F} \frac{1}{N} \sum_{n=1}^N\sigma_n f(\vec{x}_n)  \right],
    \label{eq:rad}
\end{equation}
where $\curly{\sigma_n}_{n=1}^N$ are independent and identically distributed Rademacher random variables. In particular, if the expected loss is an upper bound on the probability of error (e.g., squared error, $(1-yf(\vec{x}))^2$, or hinge loss, $\max\{0,1-yf(\vec{x})\}$), then we may use this to bound the probability of error $\P\left(y \bar{f}(\vec{x}) < 0\right) \leq R(\bar{f})$.

To provide a generalization bound for the minimizer $\bar{f}$ of \cref{eq:class-opt}, we assume that the empirical data satisfies $\norm{\vec{x}_n}_2 \leq C/2$ almost surely for $n=1,\dots,N$ and some constant $C < \infty$ and consider the hypothesis space
\[
    \F_{\Theta} \coloneqq \curly{f_\vec{\theta} \st \vec{\theta} \in \Theta, \:\: \sum_{k=1}^K \abs{v_k} \norm{\vec{w}_k}_2^{m-1} \leq B, \:\: \abs{b_k} \leq \frac{C}{2}, k = 1, \ldots, K, \:\: K \geq 0}.
\]
The reason we impose that $\abs{b_k} \leq C/2$, $k = 1, \ldots, K$, is because we will later show in \cref{lemma:bias-bound} that all solutions to \cref{eq:class-opt} satisfy 
\[
    \abs{b_k} \leq \max_{n=1, \ldots, N} \norm{\vec{x}_n}_2.
\]
for every $k = 1, \ldots, K$. We bound the Rademacher complexity of $\F_\Theta$ in the following theorem.
\begin{theorem}
  Assume that $\norm{\vec{x}_n}_2 \leq C/2$ almost surely for $n=1,\dots,N$ and some constant $C < \infty$. Then,
  \[
    \Rad(\F_\Theta) \leq \frac{2B C^{m-1}}{\sqrt{N} (m-1)!} + \Rad(c),
  \]
  where $\Rad(c)$ denotes the Rademacher complexity of the polynomial terms $c(\vec{x})$ that appear in the solutions to the optimization in \cref{eq:nn-class-opt}.
\label{thm:rad}
\end{theorem}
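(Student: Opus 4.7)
The plan is to bound $\Rad(\F_\Theta)$ by decomposing the hypothesis class into its neural-network and polynomial parts and then reducing the Rademacher complexity of the neural-network part to that of a single-neuron class via the extremal structure of the $\ell^1$-ball and Talagrand's contraction inequality.

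Since $f_\vec\theta(\vec x) = g_\vec\theta(\vec x) + c(\vec x)$ with $g_\vec\theta(\vec x) = \sum_k v_k \rho_m(\vec w_k^\T \vec x - b_k)$ and $c$ polynomial of degree less than $m$, sub-additivity of the supremum under the sum yields
\[
    \Rad(\F_\Theta) \le \Rad(\F_\Theta^{\mathrm{NN}}) + \Rad(c),
\]
reducing the proof to bounding $\Rad(\F_\Theta^{\mathrm{NN}})$. Using the homogeneity of $\rho_m$ from \cref{rem:rescale}, I renormalize so that $\vec w_k \in \Sph^{d-1}$ and the constraint becomes $\sum_k |v_k| \le B$ (with $|b_k| \le C/2$). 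Then $\F_\Theta^{\mathrm{NN}}$ is contained in the $B$-scaled symmetric convex hull of the single-neuron class $\mathcal{H} \coloneqq \{\vec x \mapsto \rho_m(\vec w^\T \vec x - b) : \vec w \in \Sph^{d-1},\, |b| \le C/2\}$. Since a linear functional on the $\ell^1$-ball is maximized at a vertex, for every $\sigma$,
\[
    \sup_{g \in \F_\Theta^{\mathrm{NN}}} \frac{1}{N} \sum_n \sigma_n g(\vec x_n) \;=\; B \cdot \sup_{\vec w, b} \biggl| \frac{1}{N} \sum_n \sigma_n \rho_m(\vec w^\T \vec x_n - b) \biggr|,
\]
and so $\Rad(\F_\Theta^{\mathrm{NN}}) \le 2B \cdot \Rad(\mathcal{H})$, with the factor of $2$ arising from converting the absolute value into a one-sided supremum on the non-symmetric class $\mathcal{H}$.

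To control $\Rad(\mathcal{H})$ I invoke Talagrand's contraction lemma. Because $\|\vec w\|_2 = 1$, $|b| \le C/2$, and $\|\vec x_n\|_2 \le C/2$, the argument of $\rho_m$ lies in $[-C, C]$, on which $\rho_m$ has Lipschitz constant $\|\rho_{m-1}\|_{L^\infty([-C,C])} = C^{m-2}/(m-2)!$ and satisfies $\rho_m(0) = 0$. Contraction then gives $\Rad(\mathcal{H}) \le \bigl(C^{m-2}/(m-2)!\bigr)\Rad(\mathcal{L})$, where $\mathcal{L} = \{\vec x \mapsto \vec w^\T \vec x - b : \vec w \in \Sph^{d-1},\, |b| \le C/2\}$. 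A direct computation separates the weight and bias parts and applies Khintchine--Jensen to each:
\[
    \Rad(\mathcal{L}) \;=\; \frac{2}{N}\! \left( \E \bigl\| \textstyle\sum_n \sigma_n \vec x_n \bigr\|_2 + \tfrac{C}{2}\, \E \bigl| \textstyle\sum_n \sigma_n \bigr| \right) \;\le\; \frac{2C}{\sqrt{N}},
\]
using $\E\|\sum_n \sigma_n \vec x_n\|_2 \le \sqrt{\sum_n \|\vec x_n\|_2^2} \le C\sqrt{N}/2$ and $\E|\sum_n \sigma_n| \le \sqrt{N}$. Chaining these estimates produces a bound of the form $\Rad(\F_\Theta^{\mathrm{NN}}) = O\bigl(B C^{m-1}/\sqrt{N}\bigr)$, which together with the decomposition gives the claimed asymptotic structure.

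The main obstacle will be matching the precise constant $2/(m-1)!$ claimed in the statement: the Lipschitz-based contraction argument naturally yields the denominator $(m-2)!$, so recovering the sharper $(m-1)!$ factor (and shaving the extra multiplicative $2$) requires going beyond the first-derivative bound on $\rho_m$. One natural route is to estimate $\E\sup_{\vec w, b}\bigl|\sum_n \sigma_n \rho_m(\vec w^\T \vec x_n - b)\bigr|$ directly, exploiting the sharp pointwise bound $\|\rho_m\|_{L^\infty([-C,C])} = C^{m-1}/(m-1)!$ via a symmetrization and Khintchine-type estimate that bypasses the Lipschitz step entirely. Another is to iterate a refined contraction inequality using the identity $\rho_m(u) = \int_{-\infty}^u \rho_{m-1}(s)\,\d s$ to absorb the missing factor of $m-1$. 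The overall framework---decomposition, homogeneity normalization, and single-neuron reduction---should be the correct skeleton either way, and $\Rad(c)$ is left as a separate summand since it depends on the finite-dimensional structure of the polynomial component rather than on $B$ or $m$.
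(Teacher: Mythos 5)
Your skeleton (split off the polynomial part, normalize $\vec{w}_k$ to the sphere, reduce to a single-neuron class via the extreme points of the $\ell^1$-ball) matches the paper's, but the step that actually produces the stated constant is missing, and you acknowledge as much. The contraction route you execute gives $\Rad(\mathcal{H}) \leq \frac{2C^{m-1}}{\sqrt{N}\,(m-2)!}$ and, with your extra factor of $2$ from de-symmetrizing the absolute value, a final bound of $\frac{4BC^{m-1}}{\sqrt{N}\,(m-2)!} = \frac{4(m-1)BC^{m-1}}{\sqrt{N}\,(m-1)!}$, which is weaker than the claim by a factor of $2(m-1)$. Neither of your two proposed repairs closes this gap as stated: a sup-norm bound $\norm{\rho_m}_{L^\infty([-C,C])} \leq C^{m-1}/(m-1)!$ alone cannot control $\E_{\vec{\sigma}}\sup_{\vec{w},b}\abs{\sum_n \sigma_n \rho_m(\vec{w}^\T\vec{x}_n - b)}$ over the continuum of parameters without a covering or chaining argument (which would introduce dimension-dependent constants), and iterating contraction through the relation $\rho_m' = \rho_{m-1}$ still surrenders a factor at each stage.

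The paper's device is different and purely algebraic: write $\rho_m(u) = \frac{u^{m-1} + \abs{u}^{m-1}}{2(m-1)!}$ for even $m$ (and $\frac{u^{m-1} + \abs{u}^{m-2}u}{2(m-1)!}$ for odd $m$), observe that the two halves generate classes with identical empirical Rademacher complexity by the sign-symmetry of the $\sigma_n$, and thereby reduce to the \emph{untruncated} polynomial class $(\vec{w}^\T\vec{x} - b)^{m-1}$ with the full prefactor $1/(m-1)!$ already in hand. That class is then bounded directly with no Lipschitz step: expand by the binomial theorem, note that $\sup_{\norm{\vec{w}}_2 = 1}\sum_n \sigma_n (\vec{w}^\T\vec{x}_n)^k$ is a linear functional of $\vec{w}^{\otimes k}$ and hence at most $\norm{\sum_n \sigma_n \vec{x}_n^{\otimes k}}_2$, and apply Jensen to get $\sqrt{N}(C/2)^k$ per term, so the binomial sum collapses to $C^{m-1}$. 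This is precisely why the denominator is $(m-1)!$ rather than $(m-2)!$. Without this identity (or an equivalently sharp direct bound on the single-neuron class), your argument establishes only a weaker version of the theorem, so the missing ingredient is essential rather than cosmetic.
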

  
\begin{remark}
    \cref{thm:rad} shows that the Rademacher complexity, and hence the generalization error, is controlled by bounding the seminorm  $\norm{\ROp_m f}_{\M(\cyl)} \leq B$.  In practice, neural networks are typically implemented without the polynomial term $c(\vec{x})$, in which case the same bound holds without $\Rad(c)$.
\end{remark}

 \section{Preliminaries \& Notation} \label{sec:prelim}
In this section we will introduce the mathematical formulation and notation used in the remainder of the paper.
\subsection{Spaces of functions and distributions}
\label{sec:dist}
Let $\Sch(\R^d)$ be the Schwartz space of smooth and rapidly
decaying test functions on $\R^d$.  Its continuous dual,
$\Sch'(\R^d)$, is the space of tempered distributions on
$\R^d$. Since we are interested in the Radon domain, we are also interested in these spaces on $\cyl$.  We say $\psi \in \Sch(\cyl)$ when $\psi$ is smooth and satisfies the decay condition~\cite[Chapter~6]{fourier}
\[
  \sup_{\substack{\vec{\gamma} \in \Sph^{d-1} \\ t \in \R}}
  \abs{\paren{1 +
  \abs{t}^k} \de[^\ell]{t^\ell} (\D\psi)(\vec{\gamma}, t)} < \infty
\]
for all integers $k, \ell \geq 0$ and for all differential operators $\D$ in $\vec{\gamma}$. Since the Schwartz spaces are nuclear, it follows that the above definition is equivalent to saying $\Sch(\cyl) = \mathcal{D}(\Sph^{d-1}) \,\hat{\otimes}\, \Sch(\R)$, where $\mathcal{D}(\Sph^{d-1})$ is the space of smooth functions on $\Sph^{d-1}$ and $\hat{\otimes}$ is the \emph{topological} tensor product~\cite[Chapter~III]{tvs}. We can then define the space of tempered distributions on $\cyl$ as its continuous dual, $\Sch'(\cyl)$.

We will later see in \cref{subsec:radon-transform} that in order to define the Radon transform of distributions, we will be interested in the \emph{Lizorkin test functions} $\Sch_0(\R^d)$ of highly time-frequency localized functions over $\R^d$~\citep{wavelets-lizorkin}. This is a closed subspace of $\Sch(\R^d)$ consisting of functions with all moments equal to $0$, i.e., $\varphi \in \Sch_0(\R^d)$ when $\varphi \in \Sch(\R^d)$ and
\[
  \int_{\R^d} \vec{x}^\vec{\alpha} \varphi(\vec{x}) \dd \vec{x} = 0,
\]
for every multi-index $\vec{\alpha}$. We can then define the space of \emph{Lizorkin
distributions}, $\Sch_0'(\R^d)$, the continuous dual of the
Lizorkin test functions. The space of Lizorkin distributions can be viewed as
being topologically isomorphic to the quotient space of tempered distributions
by the space of polynomials, i.e., if $\mathcal{P}(\R^d)$ is the space
of polynomials on $\R^d$, then $\Sch_0'(\R^d) \cong
\Sch'(\R^d) /
\mathcal{P}(\R^d)$~\cite[Chapter~8]{lizorkin-triebel}.  Just as above, we can define the Lizorkin test functions on $\cyl$ as $\Sch_0(\cyl) =
\mathcal{D}(\Sph^{d-1}) \,\hat{\otimes}\, \Sch_0(\R)$ and
the space of Lizorkin distributions on $\cyl$
as its continuous dual, $\Sch_0'(\cyl)$.

Let $X$ be a locally compact Hausdorff space. The Riesz--Markov--Kakutani representation theorem says that $\M(X)$, the space of finite Radon measures on $X$, is the continuous dual of $C_0(X)$, the space of continuous functions vanishing at infinity~\cite[Chapter~7]{folland}. Since $C_0(X)$ is a Banach space when equipped with the uniform norm, we have
\begin{equation}
  \norm{u}_{\M(X)}
  \coloneqq \sup_{\substack{\varphi \in C_0(X) \\
  \norm{\varphi}_\infty = 1}} \ang{u, \varphi}.
\label{eq:M-norm}
\end{equation}

The norm $\norm{\dummy}_{\M(X)}$ is exactly the \emph{total variation} norm (in the sense of measures). As $\Sch_0(X)$ is dense in $C_0(X)$ \cite[cf.][]{denseness-lizorkin}, we can associate every measure in $\M(X)$ with a Lizorkin distribution and view $\M(X) \subset \Sch_0'(X) \subset \Sch(X)$, providing the description
\[
  \M(X) \coloneqq \curly{u \in \Sch_0'(X)
  \st \norm{u}_{\M(X)} < \infty},
\]
and so the duality pairing $\ang{\dummy, \dummy}$ in \cref{eq:M-norm} can be viewed, formally, as the integral
\[
    \ang{u, \varphi} = \int_{X} \varphi(\vec{x}) u(\vec{x}) \dd \vec{x},
\]
where $u$ is viewed as an element of $\Sch_0'(X)$. In this paper, we will be mostly be working with $X = \cyl$, the Radon domain.

As we will later see in \cref{subsec:radon-transform}, the Radon transform of a function is necessarily even, so we will be interested in the Banach spaces of odd and even finite Radon measures on $\cyl$. Viewing $\M(X) \subset \Sch_0'(X)$, put
\begin{align*}
    \Me(\cyl) &\coloneqq \curly{u \in \M(\cyl) \st u(\vec{\gamma}, t) = u(-\vec{\gamma}, -t)} \\
    \Mo(\cyl) &\coloneqq \curly{u \in \M(\cyl) \st u(\vec{\gamma}, t) = -u(-\vec{\gamma}, -t)}.
\end{align*}
One can then verify that the predual of $\Mo(\cyl)$ is the subspace of odd functions in $C_0(\cyl)$ and the predual of $\Me(\cyl)$ is the subspace of even functions in $C_0(\cyl)$, and so the associated norms of $\Mo(\cyl)$ and $\Me(\cyl)$ can be defined accordingly.

Finally, $\Me(\cyl)$ can equivalently be viewed as $\M(\P^d)$ where $\P^d$ denotes the manifold of hyperplanes in $\R^d$. This follows from the fact that every hyperplane in $\R^d$ takes the form $h_{(\vec{\gamma}, t)} \coloneqq \curly{\vec{x} \in \R^d \st \vec{\gamma}^\T\vec{x} = t}$ for some $(\vec{\gamma}, t) \in \cyl$ and $h_{(\vec{\gamma}, t)} = h_{(-\vec{\gamma}, -t)}$. It will sometimes be convenient to work with $\M(\P^d)$ instead of $\Me(\cyl)$ as $\P^d$ is a locally compact Hausdorff space.

\subsection{The Fourier transform}
The Fourier transform $\FourierOp$ of $f: \R^d \to \mathbb{C}$ and
inverse Fourier transform $\FourierOp^{-1}$ of $F: \R^d \to \mathbb{C}$ are
given by
\begin{align*}
  \Fourier{f}(\vec{\xi}) &\coloneqq \int_{\R^d} f(\vec{x}) e^{-\imag
  \vec{x}^\T\vec{\xi}} \dd \vec{x}, \quad \vec{\xi} \in \R^d \\
  \InvFourier{F}(\vec{x}) &\coloneqq \frac{1}{(2\pi)^d}\int_{\R^d}
  e^{\,\imag \vec{x}^\T \vec{\xi}} F(\vec{\xi}) \dd\vec{\xi}, \quad \vec{x} \in \R^d,
\end{align*}
where $\imag^2 = -1$. We will usually write $\hat{\dummy}$ for $\Fourier{\dummy}$. The Fourier transform and its inverse can be applied to functions in $\Sch(\R^d)$, resulting in functions in $\Sch(\R^d)$. These transforms can be extended to act on $\Sch'(\R^d)$ by duality.

\subsection{The Hilbert transform}
The Hilbert transform $\HilbertOp$ of $f: \R \to \mathbb{C}$ is given by
\[
  \Hilbert{f}(x) \coloneqq \frac{\imag}{\pi} \pv \int_{-\infty}^\infty
  \frac{f(y)}{x - y} \dd y, \quad x \in \R,
\]
where $\pv$ denotes understanding the integral in the Cauchy principal value sense. The prefactor was chosen so that
\[
    \hat{\Hilbert{f}}(\omega) = \paren{\sgn \omega} \hat{f}(\omega)
    \quad\text{and}\quad
    \HilbertOp \HilbertOp f = f.
\]
The Hilbert transform can be applied to functions in $\Sch(\R^d)$, resulting in functions in $\Sch(\R^d)$. This transform can be extended to act on $\Sch'(\R^d)$ by duality.

\subsection{The Radon transform} \label{subsec:radon-transform}
The Radon transform $\RadonOp$ of $f: \R^d \to \R$ and the dual
Radon transform $\RadonOp^*$ of $\Phi: \cyl \to
\R$ are given by
\begin{align*}
  \Radon{f}(\vec{\gamma}, t) &\coloneqq \int_{\curly{\vec{x}: \vec{\gamma}^\T
  \vec{x} = t}} f(\vec{x}) \dd s(\vec{x}), \quad (\vec{\gamma},t) \in \cyl\numberthis \label{eq:radon-transform} \\
  \DualRadon{\Phi}(\vec{x}) &\coloneqq \int_{\Sph^{d-1}}
  \Phi(\vec{\gamma}, \vec{\gamma}^\T \vec{x}) \dd\sigma(\vec{\gamma}), \quad \vec{x} \in \R^d,
\end{align*}
where $s$ denotes the surface measure on the plane $\curly{\vec{x}
\st \vec{\gamma}^\T \vec{x} = t}$, and $\sigma$ denotes the surface
measure on $\Sph^{d-1}$. We will sometimes write $\vec{z} = (\vec{\gamma}, t)$
as the variable in the Radon domain. We discuss the spaces which we can apply the Radon transform and its dual in the sequel. We also remark that the Radon transform of a function is always even, i.e., $\Radon{f}(\vec{\gamma}, t) = \Radon{f}(-\vec{\gamma}, -t)$.

Another way to view the Radon transform and its dual is to consider, formally, the integrals
\begin{align}
    \Radon{f}(\vec{\gamma}, t) &= \int_{\R^d} f(\vec{x}) \delta_\R(\vec{\gamma}^\T\vec{x} - t) \dd \vec{x}, \quad (\vec{\gamma}, t) \in \cyl
    \label{eq:formal-radon-transform} \\
    \DualRadon{\Phi}(\vec{x}) &= \int_{\cyl} \delta_\R(\vec{\gamma}^\T\vec{x} - t) \Phi(\vec{\gamma}, t) \dd(\sigma \times \lambda)(\vec{\gamma}, t), \quad \vec{x} \in \R^d,
    \label{eq:formal-dual-radon-transform}
\end{align}
where $\lambda$ denotes the univariate Lebesgue measure.

The fundamental result of the Radon transform is the \emph{Radon inversion formula}, which states for any $f \in \Sch(\R^d)$
\begin{equation}
  2(2\pi)^{d-1} f = \RadonOp^* \Lambda^{d - 1} \RadonOp f,
  \label{eq:radon-inversion}
\end{equation}
where the \emph{ramp filter} $\Lambda^d$ of a function
$\Phi(\vec{\gamma}, t)$ is given by
\begin{equation}
  \Lambda^d\curly{\Phi}(\vec{\gamma}, t) \coloneqq \begin{cases}
    \partial_t^d \Phi(\vec{\gamma}, t), & \text{$d$ even} \\[0.5ex]
    \HilbertOp_t \partial_t^d \Phi(\vec{\gamma}, t), & \text{$d$ odd},
  \end{cases}
  \label{eq:ramp-filter}
\end{equation}
where $\HilbertOp_t$ is the Hilbert transform (in the variable $t$) and
$\partial_t$ is the partial derivative with respect to $t$. It is easier to see that $\Lambda^d$ is indeed a ramp filter 
by looking at its frequency response with respect to the $t$ variable. We have
\[
  \hat{\Lambda^d \Phi}(\vec{\gamma}, \omega) = \imag^d \abs{\omega}^d
  \hat{\Phi}(\vec{\gamma}, \omega).
\]

Some care has to be taken to understand the Radon transforms of distributions. Just as the Fourier and Hilbert transforms can be extended to distributions via duality, we do the same with the Radon transform, though some care has to be taken. In particular, the choice of test functions must be carefully chosen and cannot be the space of Schwartz functions. It is easy to verify that if $\varphi \in \Sch(\R^d)$, then $\Radon{\varphi} \in \Sch(\cyl)$. This is not true about the dual transform. Indeed, if $\psi \in \Sch(\cyl)$, then it may not be true that $\DualRadon{\psi} \in \Sch(\R^d)$.

Due to recent developments in ridgelet analysis~\citep{ridgelet-transform-distributions}, specifically regarding the continuity of the Radon transform of Lizorkin test functions, we have the following result.
\begin{proposition}[{\citet[][Corollary~6.1]{ridgelet-transform-distributions}}]
  \label[Proposition]{thm:radon-bijections}
  The transforms
  \begin{align*}
    \RadonOp: \Sch_0(\R^d) \to \Sch_0(\P^d) \\
    \RadonOp^*: \Sch_0(\P^d) \to
    \Sch_0(\R^d)
  \end{align*}
  are continuous bijections, where $\Sch_0(\P^d) \subset \Sch_0(\cyl)$ denotes the subspace of even Lizorkin test functions.
\end{proposition}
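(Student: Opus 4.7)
The plan is to reduce both the continuity and bijectivity claims to Fourier-side computations via the \emph{Fourier slice theorem}, which states that the one-dimensional Fourier transform of $\RadonOp \varphi$ in the offset variable $t$ satisfies $\widehat{\RadonOp \varphi}(\vec{\gamma}, \omega) = \hat{\varphi}(\omega \vec{\gamma})$ for $\varphi \in \Sch(\R^d)$. The other key ingredient is the characterization of the Lizorkin subspaces in the Fourier domain: $\varphi \in \Sch_0(\R^d)$ if and only if $\hat{\varphi} \in \Sch(\R^d)$ vanishes to infinite order at $\vec{\xi} = \vec{0}$, and similarly $\psi \in \Sch_0(\P^d)$ if and only if $\psi \in \Sch(\cyl)$ is even and its $t$-Fourier transform $\hat{\psi}(\vec{\gamma}, \omega)$ vanishes to infinite order at $\omega = 0$ uniformly in $\vec{\gamma}$.

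First I would show the forward direction: $\RadonOp$ sends $\Sch_0(\R^d)$ continuously into $\Sch_0(\P^d)$. Evenness of $\RadonOp \varphi$ is immediate from the definition in \cref{eq:radon-transform}. For Schwartzness on $\cyl$ and the moment-vanishing condition, I apply the slice theorem: $\hat{\varphi}(\omega \vec{\gamma})$ is $C^\infty$ and rapidly decreasing in $\omega$ uniformly in $\vec{\gamma} \in \Sph^{d-1}$, and all of its $\omega$-derivatives at $\omega = 0$ are linear combinations of partial derivatives of $\hat{\varphi}$ at $\vec{0}$, hence vanish by the Lizorkin assumption. Standard inverse-Fourier estimates then bound each defining seminorm of $\Sch_0(\P^d)$ by finitely many seminorms of $\varphi$, yielding continuity.

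Next I would exhibit a continuous two-sided inverse via the Radon inversion formula \cref{eq:radon-inversion}. Define $\RadonOp^{-1} \coloneqq \frac{1}{2(2\pi)^{d-1}} \RadonOp^* \Lambda^{d-1}$, so that $\RadonOp^{-1} \RadonOp \varphi = \varphi$ already holds on $\Sch(\R^d)$. To show $\RadonOp^{-1}$ continuously maps $\Sch_0(\P^d)$ into $\Sch_0(\R^d)$, I use that for even $\psi$ the dual Radon transform admits the representation $\widehat{\DualRadon \psi}(\vec{\xi}) = \frac{2(2\pi)^{d-1}}{\norm{\vec{\xi}}_2^{d-1}}\, \hat{\psi}\bigl(\vec{\xi}/\norm{\vec{\xi}}_2, \norm{\vec{\xi}}_2\bigr)$ (in the radial Fourier variable), obtained by decomposing $\vec{x} = t\vec{\gamma} + \vec{y}$ with $\vec{y} \perp \vec{\gamma}$ and collapsing the perpendicular integral to a delta. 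Composing with the ramp symbol $\imag^{d-1} \abs{\omega}^{d-1}$ exactly cancels the $\norm{\vec{\xi}}_2^{-(d-1)}$ singularity, so the Fourier transform of $\RadonOp^{-1}\psi$ is a smooth Schwartz function on $\R^d$ vanishing to infinite order at the origin, placing $\RadonOp^{-1}\psi$ in $\Sch_0(\R^d)$. The identity $\RadonOp \RadonOp^{-1} \psi = \psi$ on $\Sch_0(\P^d)$ is verified by the same Fourier-side computation, and the analogous statement for $\RadonOp^*$ follows from the relation $\RadonOp^* = 2(2\pi)^{d-1} (\Lambda^{d-1})^{-1} \RadonOp^{-1}$ together with the fact that $\Lambda^{d-1}$ is a continuous bijection on $\Sch_0(\P^d)$ (again by Fourier-side infinite-order vanishing at $\omega = 0$).

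The main obstacle is controlling the ramp filter $\Lambda^{d-1}$ near $\omega = 0$: its multiplier $\imag^{d-1} \abs{\omega}^{d-1}$ is only of class $C^{d-2}$ at the origin, and this non-smoothness is precisely what prevents the Radon inversion formula from extending to arbitrary Schwartz functions in the first place. The Lizorkin restriction is exactly the fix, since infinite-order vanishing of $\hat{\psi}$ at $\omega = 0$ upgrades the product with the ramp symbol to a smooth function with rapidly decaying derivatives at the origin of $\R^d$. Converting this qualitative cancellation into quantitative seminorm estimates on every multi-index of the form $\sup_{\vec{\xi}} \abs{\vec{\xi}^{\vec{\alpha}} \partial^{\vec{\beta}} \widehat{\RadonOp^{-1}\psi}(\vec{\xi})}$ is the technical heart of the argument, and is what underlies the continuity half of the statement.
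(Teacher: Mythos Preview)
The paper does not supply its own proof of this proposition; it is quoted verbatim as Corollary~6.1 of \citet{ridgelet-transform-distributions} and used as a black box thereafter. So there is nothing in the paper to compare your argument against.

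That said, your sketch is the standard and correct route. The Fourier--slice identity together with the characterization of the Lizorkin subspaces as Schwartz functions whose Fourier transforms vanish to infinite order at the origin is exactly what the cited reference uses, and your identification of the crux---that the non-smoothness of the ramp symbol $\abs{\omega}^{d-1}$ at $\omega=0$ is precisely neutralized by the infinite-order vanishing imposed by the Lizorkin condition---is the right diagnosis. One point worth making explicit in a full write-up is that on the ordinary Schwartz space the range of $\RadonOp$ is strictly smaller than $\Sch(\P^d)$ (it is cut out by the Helgason--Ludwig moment conditions), and it is again the Lizorkin restriction that makes these range conditions vacuous, so that surjectivity goes through. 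Your Fourier-side verification of $\RadonOp\RadonOp^{-1}\psi=\psi$ handles this implicitly, but naming it would clarify why passing to $\Sch_0$ is essential for bijectivity and not just for continuity of the inverse.
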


\Cref{thm:radon-bijections} allows us to define the Radon transform and dual Radon transform of distributions by duality by choosing our test functions to be Lizorkin test functions, i.e., the action of the Radon transform of $f \in \Sch_0'(\R^d)$ on $\psi \in \Sch_0(\P^d)$ is defined to be
$\ang{\RadonOp f, \psi} \coloneqq \ang{f, \RadonOp^* \psi}$,
and the action of the dual Radon transform of $\Phi \in \Sch_0'(\P^d)$ on $\varphi \in \Sch_0(\R^d)$ is defined to be
$\ang{\RadonOp^* \Phi, \varphi} \coloneqq \ang{\Phi, \RadonOp \varphi}$. This means we have the following corollary to \cref{thm:radon-bijections}.
\begin{corollary} \label[Corollary]{cor:radon-bijections}
  The transforms
  \begin{align*}
    \RadonOp: \Sch_0'(\R^d) \to \Sch_0'(\P^d) \\
    \RadonOp^*: \Sch_0'(\P^d) \to \Sch_0'(\R^d)
  \end{align*}
  are continuous bijections.
\end{corollary}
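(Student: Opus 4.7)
The plan is to obtain the corollary directly from \cref{thm:radon-bijections} by a transpose-of-a-bijection argument. By the definitions in the paragraph preceding the corollary, the distributional Radon transform acts by $\langle \RadonOp f, \psi\rangle \coloneqq \langle f, \RadonOp^* \psi\rangle$ for $f \in \Sch_0'(\R^d)$ and $\psi \in \Sch_0(\P^d)$, and the distributional dual Radon transform is defined symmetrically. Thus, up to naming conventions, the distributional $\RadonOp$ is the transpose of the test-function operator $\RadonOp^*: \Sch_0(\P^d) \to \Sch_0(\R^d)$, and vice versa, so it suffices to dualize \cref{thm:radon-bijections}.

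First I would record that $\Sch_0(\R^d)$ and $\Sch_0(\P^d)$ are Fr\'echet spaces, being closed subspaces of the Fr\'echet spaces $\Sch(\R^d)$ and $\Sch(\cyl)$ cut out by the continuous vanishing-moment conditions. By \cref{thm:radon-bijections}, $\RadonOp$ and $\RadonOp^*$ are continuous linear bijections between Fr\'echet spaces, so the open mapping theorem upgrades each to a topological isomorphism with continuous inverse. Bijectivity of $\RadonOp: \Sch_0'(\R^d) \to \Sch_0'(\P^d)$ then follows by two short arguments. For injectivity, if $\RadonOp f = 0$ then $\langle f, \RadonOp^* \psi\rangle = 0$ for every $\psi \in \Sch_0(\P^d)$; since $\RadonOp^*$ is surjective onto $\Sch_0(\R^d)$, this forces $\langle f, \varphi\rangle = 0$ for every $\varphi \in \Sch_0(\R^d)$, hence $f = 0$. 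For surjectivity, given $\Phi \in \Sch_0'(\P^d)$, define $f \in \Sch_0'(\R^d)$ by $\langle f, \varphi\rangle \coloneqq \langle \Phi, (\RadonOp^*)^{-1} \varphi\rangle$, which is a continuous linear functional because $(\RadonOp^*)^{-1}$ is continuous; one then checks $\langle \RadonOp f, \psi\rangle = \langle f, \RadonOp^* \psi\rangle = \langle \Phi, \psi\rangle$, so $\RadonOp f = \Phi$.

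Continuity of $\RadonOp$ between the strong (equivalently, weak-$*$) duals is the standard fact that the transpose of a continuous linear map between locally convex spaces is continuous in the corresponding dual topologies. The argument for $\RadonOp^*: \Sch_0'(\P^d) \to \Sch_0'(\R^d)$ is obtained by interchanging the roles of $\RadonOp$ and $\RadonOp^*$ throughout. The only real subtlety is bookkeeping: identifying which transform is the transpose of which, and recognizing that the \emph{surjectivity} half of \cref{thm:radon-bijections} on the test-function side is precisely what delivers \emph{injectivity} on the distributional side (and vice versa). No deeper harmonic analysis is needed beyond what \cref{thm:radon-bijections} has already supplied.
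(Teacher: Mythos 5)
Your proposal is correct and follows exactly the route the paper intends: the paper defines the distributional transforms as transposes of the test-function transforms from \cref{thm:radon-bijections} and simply asserts the corollary, and your transpose-of-a-bijection argument (with the open mapping theorem supplying continuity of the inverses and the injectivity/surjectivity swap under duality) is the standard justification being left implicit. No issues.
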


We also have the following inversion formula for the
dual Radon transform~\cite[Theorem~3.7]{integral-geometry-radon-transforms}. For
any $\Phi \in \Sch_0(\P^d)$
\begin{equation}
  2(2\pi)^{d-1} \Phi = \Lambda^{d-1} \RadonOp \RadonOp^* \Phi.
  \label{eq:dual-radon-inversion}
\end{equation}

The inversion formulas in \cref{eq:radon-inversion,eq:dual-radon-inversion} can
be rewritten in many ways using the \emph{intertwining relations} of the Radon
transform and its dual with the Laplacian operator~\cite[Lemma~2.1]{integral-geometry-radon-transforms}. We have
\begin{equation}
(-\Delta)^{\frac{d-1}{2}} \RadonOp^* = \RadonOp^* \Lambda^{d
- 1} \qquad\text{and}\qquad \RadonOp (-\Delta)^{\frac{d-1}{2}} = \Lambda^{d -
1} \RadonOp.
\label{eq:radon-intertwining}
\end{equation}
As the constant $2(2\pi)^{d-1}$ arises often when working with the Radon transform, we put
\begin{equation}
    c_d \coloneqq \frac{1}{2(2\pi)^{d-1}}.
    \label{eq:cd}
\end{equation}

\begin{remark}
    We warn the reader that here and in the rest of the paper, we use the pairing $\ang{\dummy, \dummy}$ to generically denote the duality pairing between a space and its continuous dual. We will not use different notation for different pairings to reduce clutter. The exact pairings should be clear from context.
\end{remark}

With these definitions in hand, we see that the seminorms \cref{eq:seminorms} studied in this paper are well-defined. Recall from \cref{eq:seminorms} that
\[
\norm{f}_{(m)} = \norm{\ROp_m f}_{\M(\cyl)} = c_d \norm{\partial_t^m \ramp^{d-1} \RadonOp f}_{\M(\cyl)},
\]
where $f \in \F_m$. We will later show in \cref{lemma:finite-dim-null-space} that the null space $\N_m$ of $\ROp_m$ is the space of polynomials of degree strictly less than $m$. Thus, to understand that the seminorms studied in this paper are well-defined, we can view $f \in \Sch_0'(\R)$. From \cref{cor:radon-bijections}, it follows that $\RadonOp f \in \Sch_0'(\P^d) \subset \Sch_0'(\cyl) \subset \Sch'(\cyl)$. Since $\partial_t^m \ramp^{d-1}$ is a Fourier multiplier and from the definition of $\F_m$, we see that $\partial_t^m \ramp^{d-1} \RadonOp f \in \M(\cyl)$, and so the seminorms are well-defined.
 \section{The Representer Theorem} \label{sec:rep-thm}
In this section we will prove \cref{thm:rep-thm}, our representer theorem. The general strategy will be to reduce the problem in \cref{eq:inverse-problem} to one that is similar to the classical problem of \emph{Radon measure recovery}, which has been studied since as early as the 1930s~\citep{radon-measure-recovery1,radon-measure-recovery2}. Let $\Omega \subset \R^d$ be a bounded domain. The prototypical Radon measure recovery problem studies optimizations of the form
\begin{equation}
    \min_{u \in \M(\Omega)} \: \norm{u}_{\M(\Omega)} \quad\subj\quad \sensing u = \vec{y},
    \label{eq:radon-measure-recovery}
\end{equation}
where $\sensing: \M(\Omega) \to \R^N$ is a continuous linear operator and $\vec{y} \in \R^N$. The first ``representer theorem'' for \cref{eq:radon-measure-recovery} is from~\cite{rep-thm-radon-measure-recovery}. This representer theorem essentially states that there exists a sparse solution to \cref{eq:radon-measure-recovery} of the form
\[
    \sum_{k = 1}^K v_k \, \delta_{\R^d}(\dummy - \vec{x}_k),
\]
with $K \leq N$. Refinements to this result have been made over the years, e.g.,~\cite[Theorem~1]{fisher-jerome}, including very modern results, e.g.,~\cite[Theorem~7]{L-splines},~\cite[Section~4.2.3]{rep-thm-convex-reg}, and~\cite[Theorem~4.2]{sparsity-variational-inverse}. For our problem, we reduce \cref{eq:inverse-problem} to one of the following Radon measure recovery problems in \cref{prop:radon-measure-recovery-even,prop:radon-measure-recovery-odd}, depending on if $m$ is even or odd. Reducing our problem to one of these Radon measure recovery problems requires several steps. The first step is to understand what functions are sparsified by $\ROp_m$. The next step is the construction of a stable right-inverse of $\ROp_m$. The final step is to understand the topological structure of the native space $\F_m$. These are outlined in the remainder of this section before finally proving \cref{thm:rep-thm} at the end of this section.

\begin{proposition}[{Based on~\citet[Theorem~4.2]{sparsity-variational-inverse}}] \label[Proposition]{prop:radon-measure-recovery-even}
    Assume the following: 
    \begin{enumerate}[label=\arabic*.]
        \item The function $\datafit: \R^N \to \R$ is strictly convex, coercive, and lower semi-continuous.
        \item The operator $\sensing: \M(\P^d) \to \R^N$ is continuous, linear, and surjective, where we recall that $\P^d$ is the manifold of hyperplanes in $\R^d$.
    \end{enumerate}
    Then, there exists a sparse minimizer to the Radon measure recovery problem
    \[
        \min_{u \in \M(\P^d)} \: \datafit(\sensing u) + \norm{u}_{\M(\P^d)}
    \]
    of the form
    \[
        \bar{u}
        = \sum_{k=1}^K v_k\, \delta_{\P^d}(\dummy - \vec{z}_k)
        = \sum_{k=1}^K v_k\, \sq{\frac{\delta_\cyl(\dummy - \vec{z}_k) + \delta_\cyl(\dummy + \vec{z}_k)}{2}}
    \]
    with $K \leq N$, $v_k \in \R \setminus \curly{0}$, and $\vec{z}_k = (\vec{w}_k, b_k) \in \cyl$, $k = 1, \ldots, K$.
\end{proposition}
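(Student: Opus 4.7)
The plan is to apply the abstract representer theorem for convex regularization of \citet[Theorem~4.2]{sparsity-variational-inverse} directly to the Banach space $\M(\P^d)$. That result handles variational problems of the form $\min_{u \in \mathcal{X}} G(Hu) + \norm{u}_{\mathcal{X}}$ whenever $\mathcal{X}$ is the continuous dual of a separable Banach space, $\norm{\dummy}_{\mathcal{X}}$ is the dual norm, and $H$ is a weak-$*$-continuous linear surjection into $\R^N$; under strict convexity, coercivity, and lower semi-continuity of $G$ it produces minimizers that are extreme points of the (convex, weak-$*$-compact) solution set, and hence sparse conic combinations of at most $N$ extreme points of the unit ball $B_\mathcal{X}$. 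My task is to verify that the present setting $\mathcal{X} = \M(\P^d)$ fits this framework and to translate the abstract conclusion back to the explicit form stated in the proposition.

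First I would identify the predual structure: because $\P^d$ is a locally compact Hausdorff space (a smooth manifold), the Riesz--Markov--Kakutani representation theorem gives $\M(\P^d) \cong C_0(\P^d)^*$, with the total variation norm coinciding with the dual norm. Second countability of $\P^d$ yields separability of $C_0(\P^d)$, so the predual hypothesis of the abstract theorem is satisfied. Next I would invoke the classical characterization of the extreme points of the unit ball of $\M(\P^d)$: they are exactly the signed Dirac measures $\curly{\pm \delta_{\P^d}(\dummy - \vec{z}) \st \vec{z} \in \P^d}$. This gives the dictionary of atoms that will appear in the sparse decomposition.

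With $G$ as in the hypotheses and $H$ continuous, linear, and surjective, the existence and weak-$*$ compactness of the solution set follow from the standard Banach--Alaoglu argument on sublevel sets of the TV norm combined with the coercivity and lower semi-continuity of $G$. The abstract theorem then produces an extremal minimizer of the form $\bar{u} = \sum_{k=1}^K v_k \delta_{\P^d}(\dummy - \vec{z}_k)$ with $K \leq N$ and $v_k \in \R \setminus \curly{0}$. To conclude, I would translate this expression into a measure on $\cyl$ by using the double cover $\cyl \to \P^d$, $(\vec{\gamma}, t) \mapsto \curly{\vec{x} \st \vec{\gamma}^\T \vec{x} = t}$: since both $(\vec{\gamma}, t)$ and $(-\vec{\gamma}, -t)$ parametrize the same hyperplane, the canonical even lift of $\delta_{\P^d}(\dummy - \vec{z})$ is $\tfrac{1}{2}\bigl(\delta_\cyl(\dummy - \vec{z}) + \delta_\cyl(\dummy + \vec{z})\bigr)$, matching the form stated in the proposition.

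The main subtlety is to make sure that the continuity assumption on $H$ is the one required by the cited theorem, namely weak-$*$ continuity into $\R^N$. For a linear map into a finite-dimensional space this is equivalent to each coordinate functional $H_n$ being represented by an element of the predual $C_0(\P^d)$, which is the natural reading of ``continuous'' in this Banach-space setting and is what the surjectivity/continuity hypothesis should encode. Once this is in place, everything else is a direct application of the abstract result, and the only nontrivial bookkeeping is the identification of the lifted Dirac atoms on $\cyl$ described above.
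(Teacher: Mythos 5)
Your proposal is correct and follows essentially the same route as the paper, which simply observes that the proof of \citet[Theorem~4.2]{sparsity-variational-inverse} carries over verbatim from open bounded subsets of $\R^d$ to any locally compact Hausdorff space such as $\P^d$; you have merely filled in the details the paper leaves implicit (the predual $C_0(\P^d)$ via Riesz--Markov--Kakutani, separability, the extreme points of the unit ball of $\M(\P^d)$ being $\pm\delta_{\P^d}(\dummy - \vec{z})$, and the even lift through the double cover $\cyl \to \P^d$, which matches \cref{rem:radon-impulse}). Your remark that the continuity of $\sensing$ should be read as weak-$*$ continuity, i.e.\ each coordinate functional lying in $C_0(\P^d)$, is a valid and worthwhile clarification that the paper glosses over.
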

Although \citet[Theorem~4.2]{sparsity-variational-inverse} considers an open, bounded subset of $\R^d$ rather than $\P^d$, their proof is general enough to apply to any locally compact Hausdorff space, and, in particular, for $\P^d$. Analogously, we immediately have the following result for the Radon measure recovery problem posed over $\Mo(\cyl)$.

\begin{proposition} \label[Proposition]{prop:radon-measure-recovery-odd}
    Assume the following: 
    \begin{enumerate}[label=\arabic*.]
        \item The function $\datafit: \R^N \to \R$ is a strictly convex function that is coercive and lower semi-continuous.
        \item The operator $\sensing: \Mo(\cyl) \to \R^N$ is continuous, linear, and surjective.
    \end{enumerate}
    Then, there exists a sparse minimizer to the Radon measure recovery problem
    \[
        \min_{u \in \Mo(\cyl)} \: \datafit(\sensing u) + \norm{u}_{\Mo(\cyl)}
    \]
    of the form
    \[
        \bar{u}
        = \sum_{k=1}^K v_k\, \sq{\frac{\delta_\cyl(\dummy - \vec{z}_k) - \delta_\cyl(\dummy + \vec{z}_k)}{2}}
    \]
    with $K \leq N$, $v_k \in \R \setminus \curly{0}$, and $\vec{z}_k = (\vec{w}_k, b_k) \in \cyl$, $k = 1, \ldots, K$.
\end{proposition}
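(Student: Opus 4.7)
The plan is to follow the proof of \cref{prop:radon-measure-recovery-even} essentially verbatim, but working over $\Mo(\cyl)$ instead of $\Me(\cyl)\cong\M(\P^d)$. The even case reduces to the general Radon-measure-recovery theorem of \citet{sparsity-variational-inverse} because $\M(\P^d)$ is the continuous dual of $C_0(\P^d)$ and the extreme points of its unit ball are precisely the signed Dirac masses $\pm\delta_{\P^d}(\dummy-\vec z)$. For the odd case I would exploit the analogous duality: the involution $\sigma(\vec\gamma,t)=(-\vec\gamma,-t)$ induces orthogonal decompositions $C_0(\cyl)=C_0^{e}(\cyl)\oplus C_0^{o}(\cyl)$ and $\M(\cyl)=\Me(\cyl)\oplus\Mo(\cyl)$ under which $\Mo(\cyl)$ is, as already noted in \cref{sec:dist}, precisely the continuous dual of the space $C_0^{o}(\cyl)$ of odd continuous functions on $\cyl$ vanishing at infinity.

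With this dual structure in hand, the abstract machinery of \citet{sparsity-variational-inverse} goes through with no change. Existence of a minimizer follows from the direct method: the objective $u\mapsto G(\sensing u)+\norm{u}_{\Mo(\cyl)}$ is strictly convex, coercive, and weak-$*$ lower semi-continuous, so by Banach--Alaoglu applied to $\Mo(\cyl)=(C_0^o(\cyl))^*$ its sublevel sets are weak-$*$ compact. The solution set $S$ is then convex and weak-$*$ compact, hence by Krein--Milman is the closed convex hull of its extreme points. A standard finite-dimensional Carath\'eodory-type argument (the affine constraint $\sensing u=\sensing\bar u$ carves out a slice of codimension at most $N$) then shows that each extreme point of $S$ is a combination of at most $N$ extreme rays of the unit ball of $\Mo(\cyl)$.

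The main---and, in fact, the only---obstacle is identifying those extreme rays. I claim they are exactly the antisymmetric atoms $\pm\tfrac{1}{2}\bigl[\delta_\cyl(\dummy-\vec z)-\delta_\cyl(\dummy+\vec z)\bigr]$ for $\vec z\in\cyl$, each of which already lies in $\Mo(\cyl)$ with $\M(\cyl)$-norm one. \emph{Extremality}: any convex decomposition $\tfrac{1}{2}(\delta_{\vec z}-\delta_{-\vec z})=\lambda u_1+(1-\lambda)u_2$ with $u_1,u_2\in\Mo(\cyl)$ of unit norm forces both $u_i$ to be supported on $\{\vec z,-\vec z\}$, and oddness plus unit norm then forces $u_1=u_2$. \emph{Exhaustion}: for any unit-norm $u\in\Mo(\cyl)$, the Jordan components $u_+$ and $u_-$ are mirror images under $\sigma$ with total mass $\tfrac{1}{2}$ each, and whenever either support meets more than a single $\sigma$-orbit one can exhibit $u$ as a nontrivial convex combination of two distinct unit-norm odd measures. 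Feeding these atoms into the Carath\'eodory step of the previous paragraph produces a minimizer of the stated form with $K\leq N$, which completes the proof.
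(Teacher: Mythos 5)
Your proposal is correct and follows essentially the same route as the paper, which simply declares the odd case an immediate analogue of \cref{prop:radon-measure-recovery-even} via the duality $\Mo(\cyl)=(C_0^{o}(\cyl))^*$ and the abstract representer theorem of \citet{sparsity-variational-inverse}. The extreme-point identification of the unit ball of $\Mo(\cyl)$ with the antisymmetrized Diracs, which you correctly flag as the only nontrivial ingredient and sketch adequately, is exactly the detail the paper leaves implicit.
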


In order to reduce \cref{eq:inverse-problem} to one of the problems in \cref{prop:radon-measure-recovery-even,prop:radon-measure-recovery-odd}, we need to understand which functions are sparsified by $\ROp_m$. As claimed in \cref{rem:nn-atoms-sparsified}, these are exactly the neurons in \cref{eq:ridge-spline}.
\begin{lemma} \label[Lemma]{lemma:nn-atoms-sparsified}
    The atoms of the single-hidden layer neural network as in \cref{eq:ridge-function} are ``sparsified'' by $\ROp_m$ in the sense that
    \[
        \ROp_m r_{(\vec{w}, b)}^{(m)} = \frac{\delta_\cyl(\dummy - \vec{z}) + (-1)^m \delta_\cyl(\dummy + \vec{z})}{2},
    \]
    where $\vec{z} = (\vec{w}, b) \in \cyl$ and $r_{(\vec{w}, b)}^{(m)}(\vec{x}) \coloneqq \rho_m(\vec{w}^\T\vec{x} - b)$.
\end{lemma}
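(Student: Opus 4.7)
My plan is to apply the factors of $\ROp_m = c_d\,\partial_t^m \ramp^{d-1} \RadonOp$ one at a time: the first two factors will produce a ``ridgelet sparsification'' that concentrates mass on the antipodal pair $\{\pm \vec{w}\} \subset \Sph^{d-1}$, and the final $\partial_t^m$ will turn the truncated-power profile $\rho_m$ into a Dirac impulse via the elementary identity $\partial_t^m \rho_m = \delta_\R$. All computations will be carried out in the Lizorkin-distributional setting (justified by $r_{(\vec{w}, b)}^{(m)} \in L^{\infty, m-1}(\R^d) \subset \Sch_0'(\R^d)$ together with \cref{cor:radon-bijections}).

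The key lemma will assert: for any univariate Lizorkin distribution $\tilde\rho$ of polynomial growth and any $\vec{w} \in \Sph^{d-1}$,
\[
    c_d\,\ramp^{d-1}\RadonOp\curly{\tilde\rho(\vec{w}^\T\dummy)}(\vec{\gamma},t)
    = \tfrac{1}{2}\bigl[\tilde\rho(t)\,\delta(\vec{\gamma}-\vec{w}) + \tilde\rho(-t)\,\delta(\vec{\gamma}+\vec{w})\bigr],
\]
with the $\delta$'s on $\Sph^{d-1}$. I will prove this by first checking that both sides are even under $(\vec{\gamma},t) \mapsto (-\vec{\gamma},-t)$ and hence lie in $\Sch_0'(\P^d)$ (the left side because Radon transforms are even and a parity analysis shows $\ramp^{d-1}$ preserves this symmetry; the right side by direct inspection), then applying $\RadonOp^*$ to both sides. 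The Radon inversion formula \cref{eq:radon-inversion} reduces the left side to $\tilde\rho(\vec{w}^\T \vec{x})$, while the surface-integral definition of $\RadonOp^*$ collapses the right side to $\tfrac{1}{2}[\tilde\rho(\vec{w}^\T\vec{x}) + \tilde\rho(-(-\vec{w})^\T\vec{x})] = \tilde\rho(\vec{w}^\T\vec{x})$ since only the antipodal $\vec{\gamma} = \pm\vec{w}$ contribute. Since $\RadonOp^* \colon \Sch_0'(\P^d) \to \Sch_0'(\R^d)$ is a continuous bijection by \cref{cor:radon-bijections}, the identity follows.

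Specializing the lemma to $\tilde\rho(s) \coloneqq \rho_m(s-b)$, which realizes $r_{(\vec{w},b)}^{(m)}(\vec{x}) = \tilde\rho(\vec{w}^\T \vec{x})$, I will then apply $\partial_t^m$. Because $\rho_m$ is the $m$-fold iterated antiderivative of the Heaviside step, $\partial_t^m \rho_m(t - b) = \delta_\R(t - b)$; the chain rule together with $\delta_\R(-u) = \delta_\R(u)$ gives $\partial_t^m \rho_m(-t - b) = (-1)^m \delta_\R(t + b)$. Rewriting each resulting tensor product $\delta_{\Sph^{d-1}}(\vec{\gamma} \mp \vec{w})\,\delta_\R(t \mp b)$ as $\delta_\cyl(\dummy \mp \vec{z})$ with $\vec{z} = (\vec{w}, b)$ yields the claim.

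The main technical obstacle will be making the ridgelet sparsification step rigorous, because $\tilde\rho(\vec{w}^\T \dummy)$ is neither integrable nor canonically tempered modulo polynomials, and its Radon transform is a distribution concentrated on the two-point set $\{\pm \vec{w}\}$, so the integral formula \cref{eq:formal-radon-transform} diverges when directly computed. The Lizorkin framework is what makes the argument work: Lizorkin test functions quotient out the polynomial ambiguity of $\tilde\rho$, and the duality argument outlined above sidesteps ever having to compute $\RadonOp\curly{\tilde\rho(\vec{w}^\T\dummy)}$ head-on, instead characterizing it uniquely through the injectivity half of \cref{cor:radon-bijections}.
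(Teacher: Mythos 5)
Your proposal is correct, and it reaches the identity by a genuinely different route than the paper. The paper never computes the intermediate object $\ramp^{d-1}\RadonOp r_{(\vec{w},b)}^{(m)}$; instead it first proves \cref{lemma:radon-integrator}, which expresses the pairing $\ang{r_{(\vec{\gamma},t)}^{(m)}, \varphi}$ as $(-1)^m\partial_t^{-m}\Radon{\varphi}(\vec{\gamma},t)$ for a stable $m$-fold integrator $\partial_t^{-m}$, and then verifies $c_d\ang{\partial_t^m\ramp^{d-1}\RadonOp r_{(\vec{w},b)}^{(m)}, \psi}=\psi(\vec{w},b)$ for even (resp.\ odd) $\psi\in\Sch_0(\cyl)$ by a duality chain that invokes the \emph{dual} inversion formula \cref{eq:dual-radon-inversion} and cancels $\partial_t^{-m}$ against $\partial_t^m$. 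You instead derive an explicit closed form for the filtered projection $c_d\,\ramp^{d-1}\RadonOp\curly{\tilde\rho(\vec{w}^\T\dummy)}$ of an arbitrary ridge profile, pinning it down via the \emph{primal} inversion formula \cref{eq:radon-inversion} together with the injectivity of $\RadonOp^*$ on $\Sch_0'(\P^d)$ from \cref{cor:radon-bijections}, and only differentiate in $t$ at the very end. Your route buys a more general and more geometric intermediate statement---it exhibits the concentration on the antipodal pair $\pm\vec{w}$ for any profile, which is exactly the picture behind \cref{fig:linear-ridge-spline}---and it avoids constructing a stable left inverse $\partial_t^{-m}$, which the paper justifies only by pointing to an external construction in order to interchange the integrator with an integral. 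What the paper's route buys is that every step is a scalar pairing with a test function, so one never has to manipulate distributions supported on the lower-dimensional set $\curly{\pm\vec{w}}\times\R\subset\cyl$. When writing yours up, make the two points you gesture at fully explicit: that $\ramp^{d-1}$ preserves evenness (its frequency response $\imag^{d-1}\abs{\omega}^{d-1}$ is even in $\omega$), so the left-hand side of your key lemma genuinely lies in $\Sch_0'(\P^d)$ where injectivity applies; and that \cref{eq:radon-inversion}, stated for $\Sch(\R^d)$, extends to $\Sch_0'(\R^d)$ by duality using the formal self-adjointness of $\ramp^{d-1}$. Both go through.
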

Before proving \cref{lemma:nn-atoms-sparsified}, we first prove the following intermediary result which may be of independent interest.
\begin{lemma} \label[Lemma]{lemma:radon-integrator}
    Let $r_{(\vec{\gamma}, t)}^{(m)}(\vec{x}) \coloneqq \rho_m(\vec{\gamma}^\T\vec{x} - t)$. For all $\varphi \in \Sch_0(\R^d)$ and any $m \in \mathbb{N}$ we have
    \[
        \ang{r_{(\vec{\gamma}, t)}^{(m)}, \varphi} = (-1)^m(\partial_t^{-m}) \Radon{\varphi}(\vec{\gamma}, t),
    \]
    where $\partial_t^{-m}$ is a stable left-inverse of $\partial_t^m$, i.e., an $m$-fold integrator in the $t$ variable of the Radon domain. In particular, by the intertwining relations of the Radon transform with the Laplacian operator in \cref{eq:radon-intertwining}, this says for even $m$
    \[
        \ang{\Delta^{m/2}r_{(\vec{\gamma}, t)}^{(m)}, \varphi} = \Radon{\varphi}(\vec{\gamma}, t),
    \]
    for all $\varphi \in \Sch_0(\R^d)$.
\end{lemma}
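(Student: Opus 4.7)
The plan is to first unfold the duality pairing into a $d$-dimensional integral, then collapse it into a one-dimensional integral against $\Radon{\varphi}(\vec{\gamma}, \dummy)$ using the hyperplane-integration structure of the Radon transform, and finally identify this one-dimensional integral with the iterated anti-derivative $(-1)^m \partial_t^{-m}\Radon{\varphi}(\vec{\gamma}, t)$.

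First I would write
\[
    \ang{r_{(\vec{\gamma}, t)}^{(m)}, \varphi} = \int_{\R^d} \rho_m(\vec{\gamma}^\T \vec{x} - t) \, \varphi(\vec{x}) \dd \vec{x},
\]
insert the trivial identity $\int_\R \delta_\R(\vec{\gamma}^\T\vec{x} - s) \dd s = 1$, swap the order of integration (justified because $\varphi \in \Sch_0(\R^d)$ has rapid decay while $\rho_m$ grows only polynomially), and recognize the inner integral as $\Radon{\varphi}(\vec{\gamma}, s)$ via the formal representation of the Radon transform in \cref{eq:formal-radon-transform}. This yields
\[
    \ang{r_{(\vec{\gamma}, t)}^{(m)}, \varphi} = \int_\R \rho_m(s - t) \, \Radon{\varphi}(\vec{\gamma}, s) \dd s.
\]

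Next, set $\psi \coloneqq \Radon{\varphi}(\vec{\gamma}, \dummy)$, which lies in $\Sch_0(\R)$ by \cref{thm:radon-bijections}, and change variable $u = s - t$ to obtain $\int_0^\infty \frac{u^{m-1}}{(m-1)!} \psi(t + u) \dd u$. Define $\partial_t^{-1}\psi(t) \coloneqq -\int_t^\infty \psi(s)\dd s$; because $\psi$ is Lizorkin (hence has vanishing moments and rapid decay), this operator maps $\Sch_0(\R)$ to $\Sch_0(\R)$ and, iterated, provides an unambiguous $\partial_t^{-m}$. A short induction on $m$, integrating by parts against $u^{m-1}/(m-1)!$ and killing the boundary terms at $0$ and $\infty$ by the decay of $\psi$ and its antiderivatives, shows that this integral equals $(-1)^m \partial_t^{-m}\psi(t)$, giving the main identity.

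For the corollary in the case $m$ even, I would pass $\Delta^{m/2}$ through the duality pairing,
\[
    \ang{\Delta^{m/2} r_{(\vec{\gamma}, t)}^{(m)}, \varphi} = \ang{r_{(\vec{\gamma}, t)}^{(m)}, \Delta^{m/2}\varphi} = (-1)^m \partial_t^{-m} \Radon{\Delta^{m/2}\varphi}(\vec{\gamma}, t),
\]
and invoke the elementary intertwining $\Radon{\Delta g}(\vec{\gamma}, t) = \partial_t^2 \Radon{g}(\vec{\gamma}, t)$ (a direct consequence of $\Radon{\partial_{x_i} g}(\vec{\gamma}, t) = \gamma_i \partial_t \Radon{g}(\vec{\gamma}, t)$ together with $\norm{\vec{\gamma}}_2 = 1$, and compatible with \cref{eq:radon-intertwining}) to replace $\Radon{\Delta^{m/2}\varphi}$ by $\partial_t^m \Radon{\varphi}$. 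Since $m$ is even and $\partial_t^{-m}\partial_t^m$ is the identity on $\Sch_0(\R)$, the right-hand side collapses to $\Radon{\varphi}(\vec{\gamma}, t)$.

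The main obstacle is essentially bookkeeping rather than substance: specifying an unambiguous $\partial_t^{-m}$ (which is what the Lizorkin class is designed to guarantee, since $\partial_t$ is a topological isomorphism on $\Sch_0(\R)$) and tracking the sign coming from the fact that $\rho_m$ is supported on $u \geq 0$ versus the conventional antiderivative with decay at $+\infty$, which is what produces the $(-1)^m$.
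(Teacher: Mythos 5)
Your proof is correct and follows essentially the same route as the paper's: both identify the pairing $\ang{r_{(\vec{\gamma}, t)}^{(m)}, \varphi}$ with an $m$-fold $t$-antiderivative of $\Radon{\varphi}(\vec{\gamma}, \dummy)$, and both obtain the even-$m$ corollary from the intertwining of $\RadonOp$ with the Laplacian. The only difference is one of presentation: the paper argues formally by writing $\rho_m(\dummy - t) = (-1)^m \partial_t^{-m}\delta_\R(\dummy - t)$ and interchanging $\partial_t^{-m}$ with the integral over $\R^d$, whereas you first slice the integral along the hyperplanes $\vec{\gamma}^\T\vec{x} = s$ to reduce to $\int_\R \rho_m(s-t)\Radon{\varphi}(\vec{\gamma}, s)\dd s$ and then verify the $m$-fold-integration identity explicitly by parts with a concrete choice of $\partial_t^{-m}$ that is unambiguous on $\Sch_0(\R)$ — a more careful rendering of the same computation.
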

\begin{proof}
    The proof is a direct computation. For all $\varphi \in \Sch_0(\R^d)$ and any $m \in \mathbb{N}$ we have
    \begin{align*}
        \ang{r_{(\vec{\gamma}, t)}^{(m)}, \varphi}
        &= \int_{\R^d} \rho_m(\vec{\gamma}^\T\vec{x} - t) \varphi(\vec{x}) \dd\vec{x} \\
        &= \int_{\R^d} (-1)^m(\partial_t^{-m})\delta_\R(\vec{\gamma}^\T\vec{x} - t) \varphi(\vec{x}) \dd\vec{x} \\
        &= (-1)^m(\partial_t^{-m}) \int_{\R^d} \delta_\R(\vec{\gamma}^\T\vec{x} - t) \varphi(\vec{x}) \dd\vec{x} \\
        &= (-1)^m(\partial_t^{-m}) \Radon{\varphi}(\vec{\gamma}, t).
    \end{align*}
    In particular, the stability of $\partial_t^{-m}$ ensures that the third line is well-defined\footnote{One choice of $\partial_t^{-m}$ appears in \citet[pg. 781]{L-splines}, where the (Schwartz) kernel of $\partial_t^{-m}$ is compactly supported and bounded, which justifies changing the order of the operator and the integral.}. The last line follows from \cref{eq:formal-radon-transform}.
\end{proof}
\begin{proof}[Proof of \cref{lemma:nn-atoms-sparsified}]
    One may verify that when $m$ is even, $\ROp_m f$ is even and when $m$ is odd, $\ROp_m f$ is odd. Next, as discussed in \cref{rem:radon-impulse}, if we consider the subspace of even or odd Lizorkin distributions, then the point evaluation functionals (i.e., Dirac impulses) are
    \[
        \frac{\delta_\cyl(\dummy - \vec{z}_k) + \delta_\cyl(\dummy + \vec{z}_k)}{2}
    \]
    for the even subspace and
    \[
        \frac{\delta_\cyl(\dummy - \vec{z}_k) - \delta_\cyl(\dummy + \vec{z}_k)}{2}
    \]
    for the odd subspace. Thus, it suffices to check that
    \begin{equation}
        c_d\ang{\partial_t^m \ramp^{d-1} \RadonOp r_{(\vec{w}, b)}^{(m)}, \psi} = \psi(\vec{w}, b)
        \label{eq:radon-measure-recovery-verify}
    \end{equation}
    for either all even Lizorkin test functions $\psi \in \Sch_0(\cyl)$ if $m$ is even or for all odd Lizorkin test functions $\psi \in \Sch_0(\cyl)$ is $m$ is odd. We remark that for the even (respectively odd) subspace, the pairing $\ang{\dummy, \dummy}$ in the above display is of even (respectively odd) Lizorkin test functions and even (respectively odd) Lizorkin distributions.
    
    We now verify \cref{eq:radon-measure-recovery-verify}. Suppose that $m$ is even. For all even $\psi \in \Sch_0(\cyl)$ we have
    \begin{align*}
        c_d\ang{\partial_t^m \ramp^{d-1} \RadonOp r_{(\vec{w}, b)}^{(m)}, \psi}
        &= c_d \ang{r_{(\vec{w}, b)}^{(m)}, \RadonOp^*\ramp^{d-1}\curly{(-1)^m \partial_t^m \psi}} \\
        &= \sq{(-1)^m \partial_t^{-m} c_d \RadonOp \RadonOp^*\ramp^{d-1}\curly{(-1)^m \partial_t^m \psi}}(\vec{w}, b) \\
        &= \sq{(-1)^m \partial_t^{-m} (-1)^m \partial_t^m \psi}(\vec{w}, b) \\
        &= \psi(\vec{w}, b),
    \end{align*}
    where the first line holds since the formal adjoint of $\partial_t^m$ is $(-1)^m \partial_t^m$, the second line holds via \cref{lemma:radon-integrator}, the third line holds by the dual Radon transform inversion formula \cref{eq:dual-radon-inversion} and the intertwining relations \cref{eq:radon-intertwining} combined with the fact that since $\psi$ is even and $\psi \in \Sch_0(\cyl)$ we have that $\partial_t^m \psi$ is also even and $\partial_t^m \psi \in \Sch_0(\cyl)$, and the fourth line holds by the left-inverse property for our choice of construction for $\partial_t^{-m}$.
The case when $m$ is odd is analogous, with the key fact that if $\psi$ is odd and $\psi \in \Sch_0(\cyl)$ we have that $\partial_t^m \psi$ is \emph{even} and $\partial_t^m \psi \in \Sch_0(\cyl)$, which justifies the use of the dual Radon transform inversion formula.
\end{proof}

Since $\ROp_m$ sparsifies functions of the form $r_{(\vec{w}, b)}^{(m)} = \rho_m(\vec{w}^\T\vec{x} - b)$, we choose to define the growth restriction previously discussed in \cref{subsec:rep-thm} for the null space and native space of $\ROp_m$ via the algebraic growth rate
\begin{equation}
    n_0 \coloneqq \inf \curly{n \in \mathbb{N} \st r_{(\vec{w}, b)}^{(m)} \in L^{\infty, n}(\R^d)} = m - 1.
\label{eq:growth-restriction}
\end{equation}

In \cref{eq:inverse-problem}, we are optimizing over the native space $\native_m$. Although $\native_m$ is defined by the seminorm $\norm{\ROp_m f}_{\M(\cyl)}$, we show in \cref{thm:banach-space} that, if we equip $\F_m$ with the proper direct-sum topology, it forms a bona fide Banach space. In order to prove that $\F_m$ is a Banach space, we require two intermediary results.
\begin{lemma} \label[Lemma]{lemma:finite-dim-null-space}
The null space $\N_m$ of $\ROp_m$ defined in \cref{eq:null-space} is finite-dimensional. In particular, it is the space of polynomials of degree strictly less than $m$.
\end{lemma}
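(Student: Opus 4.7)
The strategy is to show equality $\N_m = \mathcal{P}_{<m}(\R^d)$, the space of polynomials on $\R^d$ of degree strictly less than $m$, by two containments, exploiting the fact that on the space of Lizorkin distributions $\Sch_0'(\R^d) \cong \Sch'(\R^d)/\mathcal{P}(\R^d)$ polynomials are identified with zero, together with the continuous bijections supplied by \cref{cor:radon-bijections}. This turns $\ROp_m$ into a composition of three pieces acting on Lizorkin-type spaces: (i) the Radon transform, (ii) the ramp filter $\ramp^{d-1}$, and (iii) the derivative $\partial_t^m$, each of which will be controlled separately.

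\paragraph{Containment $\mathcal{P}_{<m}(\R^d) \subseteq \N_m$.} Any polynomial $q \in \mathcal{P}_{<m}(\R^d)$ grows at most like $(1+\norm{\vec x}_2)^{m-1}$, so $q \in L^{\infty, m-1}(\R^d)$. Moreover, $q$ represents the zero class in $\Sch_0'(\R^d)$, so by \cref{cor:radon-bijections} we have $\RadonOp q = 0$ in $\Sch_0'(\P^d)$. Applying the Fourier-multiplier operator $c_d\partial_t^m \ramp^{d-1}$ (which maps $\Sch_0'(\cyl) \to \Sch_0'(\cyl)$) to $\RadonOp q = 0$ then gives $\ROp_m q = 0$ in $\Sch_0'(\cyl) \supset \M(\cyl)$. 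Hence $q \in \N_m$, which in passing also shows that $\N_m$ contains a finite-dimensional subspace of dimension $\binom{m-1+d}{d}$.

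\paragraph{Containment $\N_m \subseteq \mathcal{P}_{<m}(\R^d)$.} Suppose $f \in L^{\infty, m-1}(\R^d) \subset \Sch'(\R^d)$ satisfies $\ROp_m f = 0$, interpreted as an identity in $\M(\cyl) \subset \Sch_0'(\cyl)$. The key observation is that the composite Fourier multiplier $\partial_t^m \ramp^{d-1}$ has symbol (with respect to $t$) proportional to $\imag^{m+d-1} \omega^m \abs{\omega}^{d-1}$, which vanishes only at $\omega = 0$. Because Lizorkin test functions in $t$ have Fourier transforms vanishing to all orders at the origin, this multiplier is a continuous bijection on the Lizorkin distribution spaces (see the discussion at the end of \cref{sec:dist}); equivalently, its kernel on $\Sch_0'(\cyl)$ consists solely of distributions supported at the origin in frequency, i.e.\ polynomials in $t$, which are zero as Lizorkin distributions. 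Consequently, $\ROp_m f = 0$ forces $\RadonOp f = 0$ in $\Sch_0'(\P^d)$, and then \cref{cor:radon-bijections} gives $f = 0$ in $\Sch_0'(\R^d)$. Lifting back to $\Sch'(\R^d)$, this means $f$ is a polynomial. The growth restriction $f \in L^{\infty, m-1}(\R^d)$ then limits its degree to at most $m-1$, yielding $f \in \mathcal{P}_{<m}(\R^d)$.

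\paragraph{Main obstacle.} The delicate step is the inversion of $\partial_t^m \ramp^{d-1}$ on Lizorkin-type distributions on $\cyl$; one must verify that the only obstruction to inverting its Fourier symbol lives at $\omega = 0$ and is therefore killed by passing to the Lizorkin quotient. Once this is in hand, the proof is essentially a diagram chase through \cref{cor:radon-bijections}, and finite-dimensionality of $\N_m$ follows because $\dim \mathcal{P}_{<m}(\R^d) = \binom{m-1+d}{d} < \infty$.
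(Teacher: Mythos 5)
Your proof is correct and follows essentially the same route as the paper's: the paper applies the Fourier slice theorem to write $\widehat{\ROp_m f}(\vec{\gamma},\omega)=c_d\,(\imag\omega)^m\,\imag^{d-1}\abs{\omega}^{d-1}\hat f(\omega\vec{\gamma})$, observes that the symbol vanishes only at $\omega=0$ so that $\hat f$ must be supported at the origin, concludes $f$ is a polynomial, and caps the degree at $m-1$ by the growth restriction --- which is precisely the content of your ``delicate step'' of inverting the multiplier away from $\omega=0$, just packaged through the Lizorkin quotient and \cref{cor:radon-bijections} rather than as a bare-hands Fourier computation. The only addition on your side is the explicit (and easy) reverse containment $\mathcal{P}_{<m}(\R^d)\subseteq\N_m$, which the paper leaves implicit.
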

\Cref{lemma:finite-dim-null-space} says, in particular, that we can find a \emph{biorthogonal system} for $\N_m$.
\begin{definition} \label[Definition]{defn:biorthogonal-system}
    Consider a finite-dimensional space $\N$ with $N_0 \coloneqq \dim \N$. The pair $(\vec{\phi}, \vec{p}) = \curly{(\phi_n, p_n)}_{n=1}^{N_0}$ is called a \emph{biorthogonal system} for $\mathcal{N}$ if $\curly{p_n}_{n=1}^{N_0}$ is a basis of $\N$ and the vector of ``boundary'' functionals $\vec{\phi} = (\phi_1, \ldots, \phi_{N_0})$ with $\phi_n \in \N'$ (the continuous dual of $\N$) satisfy the biorthogonality condition $\ang{\phi_k, p_n} = \delta[k - n]$, $k, n = 1, \ldots, N_0$, where $\delta[\dummy]$ is the Kronecker impulse.
\end{definition}

Put $N_0 \coloneqq \dim \N_m$ and let $(\vec{\phi}, \vec{p})$ be a biorthogonal system for $\N_m$. \Cref{defn:biorthogonal-system} says that any $q \in \N_m$ has the \emph{unique representation}
\[
  q = \sum_{n = 1}^{N_0} \ang{\phi_n, q} p_n.
\]
We will sometimes write $\vec{\phi}(f)$ to denote the vector $(\ang{\phi_1, f}, \ldots, \ang{\phi_{N_0}, f}) \in \R^{N_0}$.
\begin{lemma}
\label[Lemma]{lemma:right-inverse}
    Let $(\vec{\phi}, \vec{p})$ be a biorthogonal system for $\N_m \subset \F_m \subset L^{\infty, m-1}(\mathbb{R}^d)$. Then, there exists a \emph{unique operator}
    \[
    \ROp_{m, \vec{\phi}}^{-1}: \psi \mapsto \ROp_{m,
    \vec{\phi}}^{-1}\psi = \int_{\mathbb{S}^{d-1} \times \mathbb{R}}
    g_{m, \vec{\phi}}(\cdot, \vec{z}) \psi(\vec{z}) \dd(\sigma \times \lambda)(\vec{z}),
    \]
    where we recall that $\sigma$ is the surface measure on $\Sph^{d-1}$ and $\lambda$ is the univariate Lebesgue measure. Then, for all even $\psi \in \Sch_0(\cyl)$ if $m$ is even and all odd $\psi \in \Sch_0(\cyl)$ if $m$ is odd, the operator $\ROp_{m, \vec{\phi}}^{-1}$ satisfies
    \begin{equation}
    \begin{aligned}
      \ROp_m\ROp_{m, \vec{\phi}}^{-1}\psi &= \psi \qquad\text{(right-inverse
      property)} \\
      \vec{\phi}(\ROp_{m, \vec{\phi}}^{-1}\psi) &= \vec{0}
      \qquad\text{(boundary conditions)}
    \end{aligned}
    \label{eq:stable-right-inverse-props}
    \end{equation}
    The kernel of this operator is
    \[
    g_{m, \vec{\phi}}(\vec{x}, \vec{z}) = r_\vec{z}^{(m)}(\vec{x}) -
    \sum_{n=1}^{N_0} p_n(\vec{x}) q_n(\vec{z}),
    \]
    where $r_\vec{z}^{(m)} = r_{(\vec{w}, b)}^{(m)} = \rho_m(\vec{w}^\T(\dummy) - b)$ and $q_n(\vec{z}) \coloneqq \ang{\phi_n, r_\vec{z}}$.
    
    If $\ROp_{m, \vec{\phi}}^{-1}$ is bounded, then it admits a continuous extension from $\Me(\cyl)$ or $\Mo(\cyl)$ to $L^{\infty, m-1}(\mathbb{R}^d)$ when $m$ is even or odd, with \cref{eq:stable-right-inverse-props} holding for all $\psi \in \Me(\cyl)$ or $\psi \in \Mo(\cyl)$.
\end{lemma}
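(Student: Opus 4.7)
The plan is to construct $\ROp_{m,\vec{\phi}}^{-1}$ directly via the stated kernel formula and then verify its two defining properties on the appropriate parity subspace of $\Sch_0(\cyl)$, using \cref{lemma:nn-atoms-sparsified} as the main engine. The motivating intuition is that, by that lemma, $r_\vec{z}^{(m)}$ behaves as a Green's function for $\ROp_m$ (modulo $\N_m$), so integrating it against a test function $\psi$ should give a right-inverse; the subtracted polynomial term $\sum_n p_n(\vec{x})\,q_n(\vec{z})$ is engineered precisely so that the biorthogonal functionals $\phi_n$ annihilate the kernel in the $\vec{x}$ variable.

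First I would establish the right-inverse property $\ROp_m\ROp_{m,\vec{\phi}}^{-1}\psi = \psi$ on even/odd $\psi \in \Sch_0(\cyl)$. Interpreted in the weak sense, for $\varphi \in \Sch_0(\R^d)$ one writes $\ang{\ROp_m\ROp_{m,\vec{\phi}}^{-1}\psi,\varphi}=\ang{\ROp_{m,\vec{\phi}}^{-1}\psi,\ROp_m^*\varphi}$ and applies Fubini, yielding
\[
\ROp_m\ROp_{m,\vec{\phi}}^{-1}\psi = \int_\cyl \bigl(\ROp_m r_\vec{z}^{(m)}\bigr)\,\psi(\vec{z})\,\dd(\sigma\times\lambda)(\vec{z}) - \sum_{n=1}^{N_0} (\ROp_m p_n)\int_\cyl q_n(\vec{z})\psi(\vec{z})\,\dd(\sigma\times\lambda)(\vec{z}).
\]
The sum vanishes since $p_n\in\N_m$, and the remaining integral collapses to $\psi$ by \cref{lemma:nn-atoms-sparsified} together with the parity of $\psi$, which ensures pairing against the symmetrized/antisymmetrized Dirac reproduces point evaluation. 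The boundary conditions are then immediate from the biorthogonality of $(\vec{\phi},\vec{p})$: for each $n$,
\[
\ang{\phi_n, g_{m,\vec{\phi}}(\dummy,\vec{z})} = \ang{\phi_n, r_\vec{z}^{(m)}} - \sum_{k=1}^{N_0}\ang{\phi_n,p_k}\,q_k(\vec{z}) = q_n(\vec{z}) - q_n(\vec{z}) = 0,
\]
and integrating against $\psi$ gives $\vec{\phi}(\ROp_{m,\vec{\phi}}^{-1}\psi)=\vec{0}$. Uniqueness follows from these two properties: if two such operators $T_1, T_2$ exist, then $\ROp_m(T_1-T_2)\psi=0$ places $(T_1-T_2)\psi$ in $\N_m$, and vanishing of all $\phi_n$ on this difference forces it to be zero since $\curly{p_n}$ and $\curly{\phi_n}$ form dual bases.

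For the continuous extension, I would use that $\Sch_0(\cyl)$, restricted to even (respectively odd) functions, is dense in the even (respectively odd) subspace of $C_0(\cyl)$, hence weak-$*$ dense in $\Me(\cyl)$ (respectively $\Mo(\cyl)$). Under the assumed boundedness of $\ROp_{m,\vec{\phi}}^{-1}$ into $L^{\infty,m-1}(\R^d)$, the standard bounded linear transformation theorem provides a unique continuous extension, and the properties in \cref{eq:stable-right-inverse-props} pass to the limit by continuity of $\ROp_m$ and of each $\phi_n$ on the target space.

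The main technical obstacle I anticipate is rigorously justifying the exchange of $\ROp_m$ with the integral in the computation above, because $\ROp_m$ is a nonlocal distributional operator built from the Radon transform, a ramp filter, and $\partial_t^m$. The cleanest route is to stay in the weak formulation throughout: pair against a Lizorkin test function $\varphi\in\Sch_0(\R^d)$, use \cref{cor:radon-bijections} to transfer derivatives and the Radon transform onto $\varphi$ (so that $\ROp_m^*\varphi$ remains a well-behaved Lizorkin object), and then invoke \cref{lemma:radon-integrator} in the same manner as in the proof of \cref{lemma:nn-atoms-sparsified} to evaluate the inner pairing. This also requires a growth estimate showing $g_{m,\vec{\phi}}(\dummy,\vec{z})\in L^{\infty,m-1}(\R^d)$ uniformly enough in $\vec{z}$ to apply Fubini, which follows from the $(m-1)$-homogeneity of $\rho_m$ and the choice of $n_0=m-1$ in \cref{eq:growth-restriction}.
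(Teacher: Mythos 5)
Your proposal is correct and follows essentially the same route as the paper: verify the right-inverse property via \cref{lemma:nn-atoms-sparsified} and the vanishing of $\ROp_m p_n$, check the boundary conditions by the biorthogonality cancellation $\ang{\phi_k, r_\vec{z}^{(m)}} - q_k(\vec{z}) = 0$, deduce uniqueness from the unique representation afforded by the biorthogonal system, and extend by density under the boundedness hypothesis. Your weak-formulation justification of interchanging $\ROp_m$ with the integral, and your explicit uniqueness argument, are slightly more detailed than the paper's but not a different approach.
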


With these two results, we can now establish the Banach space structure of $\native_m$.
\begin{theorem}
\label{thm:banach-space}
Let $(\vec{\phi}, \vec{p})$ be a biorthogonal system for the null space $\N_m$ of $\ROp_m$ as defined in \cref{eq:null-space} and let $\native_m$ be the native space of $\ROp_m$ as defined in \cref{eq:native-space}. Then, the following hold:
\begin{enumerate}[label=\arabic*.]
    \item  The right-inverse operator $\ROp_{m, \vec{\phi}}^{-1}$ specified by \cref{lemma:right-inverse} isometrically maps $\Me(\cyl)$ (respectively $\Mo(\cyl)$) to $\native_m$ when $m$ is even (respectively odd). Moreover, this map is necessarily bounded.

    \item Every $f \in \F_m$ admits a \emph{unique} representation
    \begin{equation}
        f = \ROp_{m, \vec{\phi}}^{-1} u + q,
        \label{eq:unique-representation}
    \end{equation}
    where $u = \ROp_m f \in \mathcal{M}(\cyl)$\footnote{More specifically, we have that $u \in \Me(\cyl)$ when $m$ is even and $u \in \Mo(\cyl)$ when $m$ is odd.} and $q = \sum_{n=1}^{N_0} \ang{\phi_n, f}p_n \in \N_m$. In particular, this specifies the structural property $\F_m = \F_{m,\vec{\phi}} \oplus
    \N_m$, where
    \[
        \F_{m,\vec{\phi}} \coloneqq \curly{f \in
        \F \st \vec{\phi}(f) = \vec{0}}.
    \]
    \label{item:unique-rep}
    
    \item $\F_m$ is a Banach space when equipped with the norm
    \[
        \norm{f}_{\F_m} \coloneqq \norm{\ROp_m f}_{\M(\cyl)} + \norm{\vec{\phi}(f)}_{2}.
    \]
    \label{item:banach-norm}
\end{enumerate}
\end{theorem}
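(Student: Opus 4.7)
The overall plan is to leverage the right-inverse operator from \cref{lemma:right-inverse} to exhibit a direct-sum decomposition $\F_m = \F_{m,\vec{\phi}} \oplus \N_m$, identify the first summand isometrically with the appropriate parity subspace of $\M(\cyl)$, and then deduce the Banach space property. I would prove the three items in order, since each one depends on the previous.

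For Part 1, I would compute $\norm{\ROp_{m,\vec{\phi}}^{-1} u}_{\F_m}$ directly from its definition. By the right-inverse property $\ROp_m \ROp_{m,\vec{\phi}}^{-1} u = u$, valid for all $u$ in $\Me(\cyl)$ or $\Mo(\cyl)$ thanks to the continuous extension in \cref{lemma:right-inverse}, the first component of the norm equals exactly $\norm{u}_{\M(\cyl)}$. The boundary condition $\vec{\phi}(\ROp_{m,\vec{\phi}}^{-1}u) = \vec{0}$ forces the second component to vanish, giving $\norm{\ROp_{m,\vec{\phi}}^{-1} u}_{\F_m} = \norm{u}_{\M(\cyl)}$. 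This is the isometry; boundedness follows automatically.

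For Part 2, given $f \in \F_m$, I would set $u \coloneqq \ROp_m f \in \M(\cyl)$ and $q \coloneqq \sum_{n=1}^{N_0} \ang{\phi_n, f}\, p_n \in \N_m$, and verify that the residual $g \coloneqq f - \ROp_{m,\vec{\phi}}^{-1} u - q$ vanishes. Applying $\ROp_m$ annihilates $q$ and leaves $\ROp_m g = \ROp_m f - u = 0$, so $g \in \N_m$. Applying $\vec{\phi}$ and invoking biorthogonality $\ang{\phi_k, p_n} = \delta[k-n]$ gives $\vec{\phi}(g) = \vec{\phi}(f) - \vec{0} - \vec{\phi}(f) = \vec{0}$. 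Since any element of $\N_m$ with vanishing $\vec{\phi}$-coordinates is zero, we conclude $g = 0$. Uniqueness is symmetric: if $\ROp_{m,\vec{\phi}}^{-1} u_1 + q_1 = \ROp_{m,\vec{\phi}}^{-1} u_2 + q_2$, then applying $\ROp_m$ forces $u_1 = u_2$ and then $q_1 = q_2$. The decomposition $\F_m = \F_{m,\vec{\phi}} \oplus \N_m$ is then immediate.

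For Part 3, the norm axioms are straightforward: homogeneity and the triangle inequality are inherited from the two component (semi)norms, while positive definiteness follows because $\norm{f}_{\F_m} = 0$ forces $\ROp_m f = 0$ (so $f \in \N_m$) together with $\vec{\phi}(f) = \vec{0}$, which by biorthogonality yields $f = 0$. Completeness is the main obstacle. Given a Cauchy sequence $(f_n)$ in $\F_m$, the norm dominates both $\norm{\ROp_m(f_n - f_m)}_{\M(\cyl)}$ and $\norm{\vec{\phi}(f_n - f_m)}_2$, so $u_n \coloneqq \ROp_m f_n$ is Cauchy in the appropriate parity subspace of $\M(\cyl)$ and $\vec{\phi}(f_n)$ is Cauchy in $\R^{N_0}$; both converge, to some $u$ and $\vec{a}$ respectively. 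The candidate limit is $f \coloneqq \ROp_{m,\vec{\phi}}^{-1} u + \sum_{n=1}^{N_0} a_n p_n$. The delicate point is showing $f \in \F_m$, i.e., that $\ROp_{m,\vec{\phi}}^{-1} u \in L^{\infty,m-1}(\R^d)$; this is precisely what the continuous-extension clause of \cref{lemma:right-inverse} provides. With that in hand, convergence $f_n \to f$ in $\F_m$ is immediate from the isometry of Part 1 applied to $u_n - u$ plus convergence of the finite-dimensional $\N_m$-component, which completes the proof.
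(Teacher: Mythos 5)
Your Parts 2 and 3 are essentially the paper's argument (the paper phrases Part 2 via the projection $f \mapsto \sum_n \ang{\phi_n,f}p_n$ rather than your residual check, and gets completeness from the isometric identification of $\F_{m,\vec{\phi}}$ with $\Me(\cyl)$ or $\Mo(\cyl)$ rather than your explicit Cauchy-sequence reassembly, but these are cosmetic differences). The problem is Part 1, where your argument is circular. \Cref{lemma:right-inverse} only guarantees the right-inverse property and the boundary conditions for Lizorkin test functions; its extension to all of $\Me(\cyl)$ or $\Mo(\cyl)$ is stated \emph{conditionally} on $\ROp_{m,\vec{\phi}}^{-1}$ being bounded. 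You invoke that conditional extension to compute $\norm{\ROp_{m,\vec{\phi}}^{-1}u}_{\F_m} = \norm{u}_{\M(\cyl)}$ for arbitrary measures $u$, and then assert that ``boundedness follows automatically'' from the resulting isometry. That is assuming the hypothesis to derive it: the whole content of the clause ``Moreover, this map is necessarily bounded'' is that the lemma's boundedness hypothesis must be discharged by an independent argument. The same unproven extension is silently used again in your Part 2 (when you apply $\ROp_m$ to $\ROp_{m,\vec{\phi}}^{-1}u$ with $u = \ROp_m f \in \M(\cyl)$) and in your Part 3 (where you say the continuous-extension clause ``provides'' that $\ROp_{m,\vec{\phi}}^{-1}u \in L^{\infty,m-1}(\R^d)$ --- again, only if boundedness is already known).

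The paper closes this loop differently: it first observes that $\ROp_m$, restricted to $\F_{m,\vec{\phi}}$ equipped with the norm $f \mapsto \norm{\ROp_m f}_{\M(\cyl)}$, is a bijective isometry onto $\Me(\cyl)$ (respectively $\Mo(\cyl)$), hence a bounded bijection between Banach spaces; the bounded inverse theorem then yields a bounded inverse, which must coincide with $\ROp_{m,\vec{\phi}}^{-1}$ because it satisfies the boundary conditions $\vec{\phi}(\dummy) = \vec{0}$ and the right-inverse property, and the biorthogonal system makes such an operator unique. Only after this identification is the extension clause of \cref{lemma:right-inverse} available, and Parts 2 and 3 then go through as you describe. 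To repair your proof, you need to insert this step (or some substitute, such as a direct estimate showing the kernel $g_{m,\vec{\phi}}$ defines a bounded map from $\Me(\cyl)$ into $L^{\infty,m-1}(\R^d)$) before using the right-inverse property on measures.
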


\begin{remark}
     \cref{item:unique-rep} in \cref{thm:banach-space} says that every $f \in \F_m$ admits an \emph{integral representation} via \cref{eq:unique-representation}, that can be viewed as an infinite-width (continuum-width) neural network. Since $u$ in \cref{eq:unique-representation} depends on $f$, it follows that \cref{eq:unique-representation} is a kind of \emph{Calder{\'o}n-type reproducing formula}~\citep{calderon-reproducing}. Integral representations have been studied in several recent works~\citep{convex-nn, function-space-relu,dimension-independent-approx-bounds}. We also remark that \cref{eq:unique-representation} shares many similarities with the \emph{dual Ridgelet transform}~\citep{murata-ridgelets,rubin-ridgelets,candes-phd,candes-ridgelets}.
\end{remark}

The proofs of \cref{lemma:finite-dim-null-space,lemma:right-inverse,thm:banach-space} appear in \cref{app:aux-proofs}. We will now prove \cref{thm:rep-thm}.

\subsection{Proof of \cref{thm:rep-thm}}
\begin{proof}
We first recast the problem in \cref{eq:inverse-problem} as one with interpolation constraints. To do this use a technique from~\cite{unifying-representer}. We use the fact that $G$ is a strictly convex function. In particular, for any two solutions $\bar{f}$, $\tilde{f}$ of \cref{eq:inverse-problem}, we must have $\sensing \bar{f} = \sensing \tilde{f}$ (since otherwise, it would contradict the strict convexity of $G$). Hence, there exists $\vec{z} \in \R^N$ such that $\vec{z} = \sensing \bar{f} = \sensing \tilde{f}$. Although $\vec{z} \in \R^N$ is not usually known before hand, this property provides us with the parametric characterization of the solution set to \cref{eq:inverse-problem} as
\begin{equation}
    S_\vec{z} \coloneqq \argmin_{f \in \F_m} \: \norm{\ROp_m f}_{\M(\cyl)} \quad\subj\quad \sensing f = \vec{z},
    \label{eq:interpolation-constraints}
\end{equation}
for some $\vec{z}\in\R^N$. Hence, it suffices to show that there exists a solution to \cref{eq:interpolation-constraints} of the form in \cref{eq:ridge-spline}. We will now show this for a fixed $\vec{z} \in \R^N$.

Consider the $N \times N_0$ matrix
\begin{equation}
    \mat{A} \coloneqq \begin{bmatrix}
        \sensing p_1 & \cdots & \sensing p_{N_0}
    \end{bmatrix},
    \label{eq:thm-1-system}
\end{equation}
where $\curly{p_n}_{n=1}^{N_0}$ is a basis for $\N_m$. Since every $q \in \N_m$ has a \emph{unique} expansion $q = \sum_{n=1}^{N_0} c_n p_n$, we have that the linear system $\mat{A} \vec{c} = \sensing q$, where $\vec{c} = (c_1, \ldots, c_{N_0})$, has a unique solution. From \cref{item:well-posed-null-space} in \cref{thm:rep-thm}, we see that the system in \cref{eq:thm-1-system} satisfies $N \geq N_0$. Hence, it is solvable if and only if $\mat{A}^\T\mat{A}$ is invertible and the solution is given by the least-squares solution
\[
    \vec{c} = (\mat{A}^\T \mat{A})^{-1} \mat{A}^\T (\sensing q).
\]

Thus, we know $\mat{A}^\T \mat{A}$ must be invertible. Invertibility of $\mat{A}^\T \mat{A}$ says, in particular, that $\spn\curly{\vec{a}_n}_{n=1}^N = \R^{N_0}$, where $\vec{a}_n^\T$ is the $n$th row of $\mat{A}$. Therefore, there exists a subset of $N_0$ rows of $\mat{A}$ that span $\R^{N_0}$. Without loss of generality, suppose this subset is $\curly{\vec{a}_n}_{n=1}^{N_0}$. Then, the submatrix $\mat{A}_0$ of $\mat{A}$ defined by
\[
    \mat{A}_0 \coloneqq \begin{bmatrix}
        \vec{a}_1^\T \\ \vdots \\ \vec{a}_{N_0}^\T
    \end{bmatrix}
\]
is invertible. Consider the components $(\nu_1, \ldots, \nu_N)$ of $\sensing$ via $\sensing: f \mapsto (\ang{\nu_1, f}, \ldots, \ang{\nu_N, f})$. We can write $\vec{a}_n$ as the vector $(\ang{\nu_n, p_1}, \ldots, \ang{\nu_n, p_{N_0}})$, $n = 1, \ldots, N_0$. Hence the reduced subset of measurements $(\nu_1, \ldots, \nu_{N_0})$ are linearly independent with respect to $\N_m$. Let $\sensing_0$ denote this reduced set of measurements and let $\sensing_1$ denote the remaining set of measurements, i.e., $\sensing = (\sensing_0, \sensing_1)$.

Next, notice that $\sensing_0 p_n$ is the $n$th column of $\mat{A}_0$. Let $\vec{e}_n$ denote the $n$th canonical basis vector. Then, we have the equality $\sensing_0 p_n = \mat{A}_0 \vec{e}_n$. Using the invertibility of $\mat{A}_0$, we have
\[
    \mat{A}_0^{-1} (\sensing_0 p_n) = \vec{e}_n.
\]
If we put $\vec{\phi}_0 \coloneqq \mat{A}_0^{-1} \circ \sensing_0$, the above display is exactly the biorthogonality property and hence $(\vec{\phi}_0, \vec{p})$ form a biorthogonal system for $\N_m$.

One can verify that
\begin{equation}
    \norm{\ROp_m f}_{\M(\cyl)} = \begin{cases}
        \norm{\ROp_m f}_{\Me(\cyl)}, & \text{$m$ even} \\
        \norm{\ROp_m f}_{\Mo(\cyl)}, & \text{$m$ odd}.
    \end{cases}
    \label{eq:odd-even-norm}
\end{equation}

Suppose that $m$ is even. By \cref{thm:banach-space}, every $f \in \F_m$ admits a unique representation $f = \ROp_{m, \vec{\phi}_0}^{-1} u + q_0$, where $u  = \ROp_m f \in \Me(\cyl)$ and $q_0 \in \N_m$. We can use the above display to rewrite the problem in \cref{eq:interpolation-constraints} as
\begin{equation}
    \begin{aligned}
      \min_{u \in \Me(\cyl)} \quad & \norm{u}_{\Me(\cyl)}  \\
      \subj \quad & \sensing f = \vec{z}, \\
      & f = \ROp_{m, \vec{\phi}}^{-1} u + q_0.
    \end{aligned}
    \label{eq:problem-over-radon-measures}
\end{equation}
Write $\vec{z}_0 = (z_1, \ldots, z_{N_0})$ and $\vec{z}_1 = (z_{N_0 + 1}, \ldots, z_N)$. Then, the constraint $\sensing f = \vec{z}$ can be written as the two constraints $\sensing_0 f = \vec{z}_0$ and $\sensing_1 f = \vec{z}_1$. By the boundary conditions in \cref{eq:stable-right-inverse-props} we have that $\vec{\phi}_0(f) = \vec{\phi}_0(\ROp_{m, \vec{\phi}_0}^{-1} u + q_0) = \vec{\phi}_0(\ROp_{m, \vec{\phi}_0}^{-1} u) + \vec{\phi}_0(q_0) = \vec{\phi}_0(q_0)$. Thus, by definition of $\vec{\phi}_0$, we have that $\vec{z}_0 = \sensing_0 f = \sensing_0 q_0$. Hence, \cref{eq:problem-over-radon-measures} can be rewritten as
\[
    \begin{aligned}
      \min_{u \in \Me(\cyl)} \quad & \norm{u}_{\Me(\cyl)}  \\
      \subj \quad & \sensing_1 f = \vec{z}_1, \\
      & f = \ROp_{m, \vec{\phi}}^{-1} u + q_0
    \end{aligned}
    \quad=\quad
    \begin{aligned}
      \min_{u \in \Me(\cyl)} \quad & \norm{u}_{\Me(\cyl)}  \\
      \subj \quad & \sensing_1 (\ROp_{m, \vec{\phi}_0}^{-1} u) = \vec{z}_1 - \sensing_1 q_0, \\
      & f = \ROp_{m, \vec{\phi}}^{-1} u + q_0.
    \end{aligned}
\]
This says there exists a solution to the above display of the form $\bar{f} = \ROp_{m, \vec{\phi}_0}^{-1} \bar{u} + q_0$, where
\[
    \bar{u} \in \argmin_{u \in \Me(\cyl)} \norm{u}_{\Me(\cyl)} \quad\subj\quad \sensing_1 (\ROp_{m, \vec{\phi}_0}^{-1} u) = \vec{z}_1 - \sensing_1 q_0.
\]
By \cref{prop:radon-measure-recovery-even}, there exists a sparse minimizer to the above display with $N - N_0$ terms. The result then follows by invoking \cref{lemma:right-inverse} and \cref{lemma:nn-atoms-sparsified}, which says $\bar{f} = \ROp_{m, \vec{\phi}_0}^{-1} \bar{u} + q_0$ takes the form in \cref{eq:ridge-spline}. An analogous argument holds when $m$ is odd by invoking \cref{prop:radon-measure-recovery-odd} instead of \cref{prop:radon-measure-recovery-even}.
\end{proof}

 \section{Ridge Splines and Polynomial Splines} \label{sec:splines}
In this section we establish connections between ridge splines and classical polynomial splines in both the univariate ($d = 1$) and multivariate ($d > 1$) cases.

\subsection{Univariate ridge splines are univariate polynomial splines}
\label{subsec:1D-splines}
Univariate ridge splines and classical univariate splines are in fact the same object. To see this, it suffices to verify that
\[
    \norm{\ROp_m f}_{\M(\cyl)} = c_d \norm{\partial_t^m \ramp^{d-1} \RadonOp f}_{\M(\cyl)} = \norm{\D^mf}_{\M(\R)},
\]
when $d = 1$ and then simply invoke the result of \citet{L-splines}, where $\D^m$ is the univariate $m$th derivative operator. Certainly this is true. Indeed, when $d = 1$, we have that $c_d = 1/2$ and from \cref{eq:formal-radon-transform} that the univariate Radon transform is simply
\[
    \Radon{f}(\gamma, t)
    = \int_\R f(x) \delta_\R(\gamma x - t) \dd x
    = \int_\R \frac{f(x)}{\abs{\gamma}} \delta_\R\paren{x - \frac{t}{\gamma}} \dd x
    = \int_\R f(x) \delta_\R\paren{x - \frac{t}{\gamma}} \dd x
    = f\paren{\frac{t}{\gamma}},
\]
where the second equality holds since the Dirac impulse is homogeneous of degree $-1$ and the third equality holds since $\gamma \in \Sph^0 = \curly{-1, +1}$. Thus,
\begin{align*}
    \eval{c_d\norm{\partial_t^m \ramp^{d-1} \RadonOp f}_{\M(\cyl)}}_{d=1}
    &= \frac{1}{2}\norm{\partial_t^m \RadonOp f}_{\M(\curly{-1, +1} \times \R)} \\
    &= \frac{1}{2}\sum_{\gamma \in \curly{-1, +1}} \norm{\D^m f\paren{\frac{\dummy}{\gamma}}}_{\M(\R)} \\
    &= \norm{\D^mf}_{\M(\R)},
\end{align*}
where the last equality holds since $f(\dummy / \gamma)$ is either $f$ or its reflection, both of which will have the same $\norm{\D^m\curly{\dummy}}_{\M(\R)}$ value. Thus, the main result from the framework of $\Ell$-splines~\citep{L-splines}, we see that univariate polynomial ridge splines of order $m$ are exactly the same as classical univariate polynomial splines of order $m$. This connection between regularized univariate single-hidden layer neural networks and classical notions of univariate splines being fit to data have been recently explored in~\cite{relu-linear-spline,min-norm-nn-splines}. This says, by \cref{prop:equiv-opts}, that training a wide enough univariate neural network with either an appropriate path-norm regularizer or an appropriate weight decay regularizer on data results in an optimal polynomial spline fit of the data. Moreover, these splines are in fact the well-known \emph{locally adaptive regression splines} of~\cite{locally-adaptive-regression-splines}.

\subsection{Ridge splines correspond to univariate splines in the Radon domain} \label{subsec:spline-radon-domain}
Another way to view a ridge spline is as a continuum of univariate polynomial splines in the Radon domain, where the continuum is indexed by directions $\vec{\gamma} \in \Sph^{d-1}$. Suppose $\sensing$ corresponds to the ideal sampling setting where the sampling locations are located at $\curly{\vec{x}_n}_{n=1}^N \subset \R^d$. Then, using the same technique we did in the proof of \cref{thm:rep-thm}, we can recast the continuous-domain inverse problem in \cref{eq:inverse-problem} as one with interpolation constraints:
\begin{equation}
    \min_{f \in \F_m} \: \norm{\ROp_m f}_{\M(\cyl)} \quad\subj\quad f(\vec{x}_n) = z_n, \: n = 1, \ldots, N,
    \label{eq:inverse-problem-interp-constraints}
\end{equation}
for some $\vec{z} \in \R^N$. By \cref{eq:radon-inversion}, the Radon inversion formula, we can always write $f = c_d\RadonOp^* \Lambda^{d - 1} \RadonOp f$ for any $f \in \F_m$, where the operators are understood in the distributional sense via \cref{cor:radon-bijections}. Thus, we see that the above optimization can be rewritten as
\[
    \min_{f \in \F_m} \: c_d\norm{\partial_t^m \ramp^{d-1} \RadonOp f}_{\M(\cyl)} \quad\subj\quad (c_d\RadonOp^* \Lambda^{d - 1} \RadonOp f)(\vec{x}_n)  = z_n, \: n = 1, \ldots, N.
\]
If we put $\Phi \coloneqq c_d \Lambda^{d-1}\RadonOp f$, then the above optimization is
\begin{equation}
    \min_{\Phi \in \mathfrak{F}_m} \: \norm{\partial_t^m \Phi}_{\M(\cyl)} \quad\subj\quad \DualRadon{\Phi}(\vec{x}_n) = \int_{\Sph^{d-1}} \Phi(\vec{\gamma}, \vec{\gamma}^\T\vec{x}_n) \dd \sigma(\vec{\gamma}) = z_n, \: n = 1, \ldots, N,
    \label{eq:radon-domain-opt}
\end{equation}
where $\mathfrak{F}_m$ is the image of $c_d \ramp^{d-1} \RadonOp$ applied to $\F_m$. This essentially says for a fixed direction $\vec{\gamma} \in \Sph^{d-1}$, the function $\bar{\Phi}(\vec{\gamma}, \dummy): \R \to \R$ is an $m$th-order polynomial spline. This follows by considering an optimization for each $\vec{\gamma} \in \Sph^{d-1}$:
\begin{equation}
    \min_{\Phi(\vec{\gamma}, \dummy)} \: \norm{\partial_t^m \Phi(\vec{\gamma}, \dummy)}_{\M(\R)} \quad\subj\quad \Phi(\vec{\gamma}, \vec{\gamma}^\T\vec{x}_n) = z_n(\vec{\gamma}), \: n = 1, \ldots, N,
    \label{eq:1D-spline-radon-domain}
\end{equation}
where
\[
    \int_{\Sph^{d-1}} z_n(\vec{\gamma}) \dd \sigma(\vec{\gamma}) = z_n, \: n = 1, \ldots, N
\]
and noting that by finding a solution for each fixed $\vec{\gamma} \in \Sph^{d-1}$, we can find a $\bar{\Phi}$ that attains a lower bound for \cref{eq:radon-domain-opt}\footnote{Since the integral of a $\min$ is less than or equal to the $\min$ of an integral.}, but this $\bar{\Phi}$ is clearly feasible for \cref{eq:radon-domain-opt} and is hence a solution to \cref{eq:radon-domain-opt}. It then follows that $\bar{\Phi}(\vec{\gamma}, \dummy)$ is a polynomial spline of order $m$. In particular, due to the structure of the interpolation constraints in \cref{eq:1D-spline-radon-domain}, we see that $\bar{\Phi}(\vec{\gamma}, \dummy)$ is interpolates data with sampling locations at $\curly{\vec{\gamma}^\T\vec{x}_n}_{n=1}^N \subset \R$. This viewpoint allows us to understand additional structural information about the sparse (i.e., single-hidden layer neural network) solutions to \cref{eq:inverse-problem-interp-constraints}. In particular, its classically known\footnote{Since we can always explicitly construct spline solutions with the spline knots bounded by the data.} that the univariate spline $\bar{\Phi}(\vec{\gamma}, \dummy)$ has some set of adaptive knot locations  $\curly{t_\ell(\vec{\gamma})}_{\ell=1}^{K_\vec{\gamma}} \subset \R$ with $K_\vec{\gamma} \leq N - m$ and there are no knots outside the sampling locations\footnote{Notice that the number of knots and the knot locations depend on the direction $\vec{\gamma} \in \Sph^{d-1}$.}, i.e.,
\[
    \abs{t_\ell(\vec{\gamma})} \leq \max_{n=1, \ldots, N} \abs{\vec{\gamma}^\T\vec{x}_n}, \quad \ell = 1, \ldots, K_\vec{\gamma}.
\]
It is then clear that for $\bar{\Phi}$ to be a sparse minimizer of \cref{eq:inverse-problem}, it must satisfy \cref{defn:ridge-spline}. This implies that the \emph{biases} in a ridge spline solution to \cref{eq:inverse-problem} exactly correspond to these knot locations. Thus, we can see that for a ridge spline solution to \cref{eq:inverse-problem} as in \cref{eq:ridge-spline} with $K$ neurons, we have the additional information about a bound on the bias terms, which we summarize in the following lemma.
\begin{lemma} \label[Lemma]{lemma:bias-bound}
    In the ideal sampling scenario, the biases in the sparse solution \cref{eq:ridge-spline} of the variational problem in \cref{eq:inverse-problem} satisfy
    \[
        \abs{b_k} \leq \max_{n=1, \ldots, N} \norm{\vec{x}_n}_2,
    \]
    for all $k = 1, \ldots, K$.
\end{lemma}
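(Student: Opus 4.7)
The plan is to formalize the informal argument sketched in \cref{subsec:spline-radon-domain}. The key idea is that under ideal sampling, the continuous-domain problem decouples in the Radon domain into a continuum of univariate polynomial-spline interpolation problems indexed by $\vec{\gamma} \in \Sph^{d-1}$, and classical univariate spline theory tightly bounds the knots of each extremal interpolant.

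First, using the strict-convexity reduction from the proof of \cref{thm:rep-thm}, I would recast \cref{eq:inverse-problem} as the interpolation-constrained problem \cref{eq:inverse-problem-interp-constraints}. Applying $c_d \ramp^{d-1} \RadonOp$ together with the Radon inversion formula \cref{eq:radon-inversion} converts this into the Radon-domain optimization \cref{eq:radon-domain-opt} in the variable $\Phi = c_d \ramp^{d-1} \RadonOp f$, which may in turn be solved by the direction-wise decomposition \cref{eq:1D-spline-radon-domain}; this is legitimate because the integral of per-direction optima lower-bounds \cref{eq:radon-domain-opt} and is attained by patching together per-direction minimizers, as already observed in \cref{subsec:spline-radon-domain}.

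Next, I would invoke the classical fact that the univariate problem \cref{eq:1D-spline-radon-domain} admits a sparse extremal order-$m$ polynomial spline whose knots lie in the closed interval $[\min_n \vec{\gamma}^\T \vec{x}_n, \max_n \vec{\gamma}^\T \vec{x}_n]$. A brief knot-perturbation justification: a putative knot at $t > \max_n \vec{\gamma}^\T \vec{x}_n$ produces a truncated power that vanishes at every sample $\vec{\gamma}^\T \vec{x}_n$, so zeroing its coefficient strictly decreases $\norm{\partial_t^m \Phi(\vec{\gamma}, \dummy)}_{\M(\R)}$ while preserving the interpolation constraints; a knot at $t < \min_n \vec{\gamma}^\T \vec{x}_n$ produces a truncated power that coincides with a polynomial of degree $m-1$ at every sample, so its sample-values can be absorbed into the null space $\N_m$ with no change to the data fit but a strict decrease in the seminorm. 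Either scenario contradicts minimality.

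To finish, \cref{lemma:nn-atoms-sparsified} identifies $\ROp_m s$ for the ridge spline $s$ in \cref{eq:ridge-spline} as a symmetrized sum of Diracs at the Radon-domain points $(\vec{w}_k, b_k) \in \cyl$, so the univariate spline $\bar{\Phi}(\vec{w}_k, \dummy)$ inherits a knot precisely at $t = b_k$. The previous paragraph then forces $\abs{b_k} \leq \max_n \abs{\vec{w}_k^\T \vec{x}_n}$, and the Cauchy--Schwarz inequality combined with $\vec{w}_k \in \Sph^{d-1}$ yields $\abs{b_k} \leq \max_n \norm{\vec{x}_n}_2$. The main obstacle I anticipate is executing the per-direction knot-removal argument rigorously inside the native space $\F_m$: one must verify that the ``absorb-into-null-space'' move at the left end preserves the direct-sum structure of \cref{thm:banach-space}, and that the pointwise-in-$\vec{\gamma}$ construction yields a $\bar{\Phi}$ genuinely in the image $\mathfrak{F}_m$ of $c_d \ramp^{d-1} \RadonOp$ applied to $\F_m$ rather than a merely measurable family of univariate splines with no global regularity across $\vec{\gamma}$.
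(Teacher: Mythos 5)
Your proposal follows essentially the same route as the paper: recast \cref{eq:inverse-problem} with interpolation constraints, pass to the Radon domain via \cref{eq:radon-inversion}, decouple into the per-direction univariate spline problems \cref{eq:1D-spline-radon-domain}, bound the knots of each univariate extremal spline by the projected data $\max_n \abs{\vec{\gamma}^\T\vec{x}_n}$, and finish with Cauchy--Schwarz using $\vec{w}_k \in \Sph^{d-1}$. In fact you supply more detail than the paper, whose proof is a one-line chain of inequalities deferring to the discussion in \cref{subsec:spline-radon-domain} and to a footnote for the knot-containment fact that your knot-perturbation argument makes explicit.
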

\begin{proof}
    The proof follows from the discussion above. In particular,
    \[
        \abs{b_k} \leq \sup_{\vec{\gamma} \in \Sph^{d-1}} \max_{\ell=1, \ldots, K_\vec{\gamma}} \abs{t_\ell(\vec{\gamma})} \leq \sup_{\vec{\gamma} \in \Sph^{d-1}} \max_{n=1, \ldots, N} \abs{\vec{\gamma}^\T\vec{x}_n} \leq \max_{n=1, \ldots, N} \norm{\vec{x}_n}_2.
    \]
    for all $k = 1, \ldots, K$.
\end{proof}

 \section{Applications to Neural Networks} \label{sec:nn-training}
In this section we will apply \cref{thm:rep-thm} to neural network training, regularization, and generalization.

\subsection{Finite-dimensional neural network training problems}
In this section we will prove \cref{prop:equiv-opts}.
\begin{lemma} \label[Lemma]{lemma:nn-norm}
Consider the single-hidden layer neural network
\[
    f_\vec{\theta}(\vec{x}) \coloneqq \sum_{k=1}^K v_k\, \rho_m(\vec{w}_k^\T \vec{x} - b_k) + c(\vec{x}),
\]
where $\vec{\theta} = (\vec{w}_1, \ldots, \vec{w}_K, v_1, \ldots, v_K, b_1, \ldots, b_K, c)$ contains the neural network parameters such that $v_k \in \R$, $\vec{w}_k \in \R^d$, and $b_k \in \R$ for $k = 1, \ldots, K$, and where $c$ is a polynomial of degree strictly less than $m$. Also assume without loss of generality that the weight-bias pairs $(\vec{w}_k, b_k)$ are unique\footnote{In the sense that $(\vec{w}_k, b_k) \neq (\vec{w}_n, b_n)$ for $k \neq n$.}. Then,
\[
    \norm{\ROp_m f_\vec{\theta}}_{\M(\cyl)} = \sum_{k=1}^K \abs{v_k} \norm{\vec{w}_k}_2^{m-1}.
\]
\end{lemma}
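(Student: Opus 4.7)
The plan is to reduce the computation of $\norm{\ROp_m f_\vec{\theta}}_{\M(\cyl)}$ to the total variation of an explicit discrete measure on the Radon cylinder. First, since $c$ is a polynomial of degree strictly less than $m$, \cref{lemma:finite-dim-null-space} gives $c \in \N_m$ so that $\ROp_m c = 0$. Linearity of $\ROp_m$ then reduces the problem to computing its action on the ridge atoms $r_{(\vec{w}_k, b_k)}^{(m)}(\vec{x}) = \rho_m(\vec{w}_k^\T \vec{x} - b_k)$.

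Next, I apply the rescaling identity of \cref{rem:rescale}, which uses homogeneity of $\rho_m$ to write
\[
    r_{(\vec{w}_k, b_k)}^{(m)} = \norm{\vec{w}_k}_2^{m-1}\, r_{(\tilde{\vec{w}}_k, \tilde{b}_k)}^{(m)},
    \qquad \tilde{\vec{w}}_k \coloneqq \vec{w}_k / \norm{\vec{w}_k}_2 \in \Sph^{d-1},\ \tilde{b}_k \coloneqq b_k / \norm{\vec{w}_k}_2.
\]
Then \cref{lemma:nn-atoms-sparsified} applies to each normalized atom and yields
\[
    \ROp_m r_{(\tilde{\vec{w}}_k, \tilde{b}_k)}^{(m)} = \frac{\delta_\cyl(\dummy - \tilde{\vec{z}}_k) + (-1)^m \delta_\cyl(\dummy + \tilde{\vec{z}}_k)}{2},
    \qquad \tilde{\vec{z}}_k \coloneqq (\tilde{\vec{w}}_k, \tilde{b}_k) \in \cyl.
\]
Combining the three ingredients gives the explicit representation
\[
    \ROp_m f_\vec{\theta} = \sum_{k=1}^K v_k \norm{\vec{w}_k}_2^{m-1} \cdot \frac{\delta_\cyl(\dummy - \tilde{\vec{z}}_k) + (-1)^m \delta_\cyl(\dummy + \tilde{\vec{z}}_k)}{2},
\]
which is a finite atomic Radon measure on $\cyl$.

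The final step is to evaluate the $\M$-norm of this atomic measure. Each symmetrized pair $\tfrac12\bigl(\delta_{\tilde{\vec{z}}_k} \pm \delta_{-\tilde{\vec{z}}_k}\bigr)$ has total variation equal to $1$ on $\cyl$ because the two Dirac masses live at the distinct antipodal points $\pm \tilde{\vec{z}}_k$ and each carries magnitude $1/2$; scaling by $v_k \norm{\vec{w}_k}_2^{m-1}$ contributes $\abs{v_k}\norm{\vec{w}_k}_2^{m-1}$ to the total variation. Provided the atomic supports $\{\pm \tilde{\vec{z}}_k\}$ are pairwise disjoint in $k$, additivity of the $\M$-norm on measures with disjoint supports gives the claimed identity.

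The main obstacle is exactly the disjoint-support hypothesis: distinct pairs $(\vec{w}_k,b_k)$ can still yield $\tilde{\vec{z}}_k = \pm \tilde{\vec{z}}_n$ whenever $(\vec{w}_k,b_k)$ is a nonzero scalar multiple of $(\vec{w}_n,b_n)$, in which case the two ridge atoms are proportional as functions on $\R^d$ and their contributions to $\ROp_m f_\vec{\theta}$ can either reinforce or partially cancel. The intended reading of the ``without loss of generality'' clause is therefore to strengthen uniqueness of the pairs $(\vec{w}_k, b_k)$ to uniqueness of the normalized pairs $\tilde{\vec{z}}_k$ modulo antipodal symmetry, obtained by first collapsing any positively proportional atoms into a single atom (which preserves both sides of the identity). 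Under this canonical form the supports are pairwise disjoint and the additivity argument closes the proof.
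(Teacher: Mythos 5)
Your proof follows essentially the same route as the paper's: annihilate the polynomial term via the null space of $\ROp_m$, rescale by homogeneity of $\rho_m$, apply \cref{lemma:nn-atoms-sparsified} to each normalized atom, and take the $\M$-norm of the resulting finite atomic measure. Your explicit treatment of the case where distinct pairs $(\vec{w}_k, b_k)$ normalize to coincident or antipodal points of $\cyl$ (where cancellation could break the additivity of the total variation) is a genuine refinement of the ``without loss of generality'' clause that the paper's one-line conclusion glosses over, and it is needed for the stated identity to hold literally.
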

\begin{proof}
    This proof is a direct calculation. Write
    \begin{align*}
        \ROp_m f_{\vec{\theta}}
        &= \sum_{k=1}^K v_k \ROp_m \rho_m(\vec{w}_k^\T(\dummy) - b_k) \\
        &= \sum_{k=1}^K v_k \norm{\vec{w}_k}_2^{m-1} \ROp_m \rho_m(\tilde{\vec{w}}_k^\T(\dummy) - \tilde{b}_k) \\
        &= \sum_{k=1}^K v_k \norm{\vec{w}_k}_2^{m-1} \sq{\frac{\delta_\cyl(\dummy - (\tilde{\vec{w}}_k, \tilde{b}_k)) + (-1)^m \delta_\cyl(\dummy + (\tilde{\vec{w}}_k, \tilde{b}_k))}{2}},
    \end{align*}
    where the second line follows from the substitution $\tilde{\vec{w}}_k \coloneqq \vec{w}_k / \norm{\vec{w}_k}_2 \in \Sph^{d-1}$ and $\tilde{b}_k \coloneqq b_k / \norm{\vec{w}_k}_2 \in \R$ combined with the homogenity of degree $m - 1$ of $\rho_m$ and the third line follows from \cref{lemma:nn-atoms-sparsified}. Taking the $\M$-norm proves the lemma.
\end{proof}

\subsubsection{Proof of \cref{prop:equiv-opts}}
\begin{proof}
Recasting the problem in \cref{eq:inverse-problem} as \cref{eq:nn-problem} follows from \cref{thm:rep-thm}. Equivalence of the problem in \cref{eq:nn-problem} and \cref{eq:nn-training-with-pathnorm} follows from \cref{lemma:nn-norm}. Thus, we just need to show that the solutions to the problem in \cref{eq:nn-training-with-weight-decay} are also solutions to problem in \cref{eq:nn-training-with-pathnorm}. To see this, let $\vec{\theta}$ be a solution to \cref{eq:nn-training-with-weight-decay} with network weights $\{(v_k,\vec{w}_k)\}_{k=1}^K$. Consider the regularizer from \cref{eq:nn-training-with-weight-decay}:
\[
\frac{1}{2}\sum_{k=1}^K \paren{\abs{v_k}^2 + \norm{\vec{w}_k}_2^{2m-2}}.
\]
Since $\rho_m$ is homogeneous of degree $m-1$, the weights may be rescaled so that $|v_k|=\norm{\vec{w}_k}_2^{m-1}$, $k=1,\dots,K$, without altering the function of the network and its fit to the data.  Note that each term of the regularizer is a sum of squares $\abs{v_k}^2 + \big(\norm{\vec{w}_k}_2^{m-1}\big)^2$, and thus each term is minimized when $|v_k|=\norm{\vec{w}_k}_2^{m-1}$. Thus, at the minimizer we have
\[
\frac{1}{2}\sum_{k=1}^K \paren{\abs{v_k}^2 + \norm{\vec{w}_k}_2^{2m-2}} = \sum_{k=1}^K \abs{v_k} \norm{\vec{w}_k}_2^{m-1},
\]
which is exactly the regularizer of \cref{eq:nn-training-with-pathnorm}.
\end{proof}

\subsection{Generalization bounds for binary classification} \label{sec:generalization}
In this section we will prove \cref{thm:rad}.

\subsubsection{Proof of \cref{thm:rad}}
\begin{proof}
Using the rescaling technique discussed in \cref{rem:rescale},
without loss of generality, we may assume that $\norm{\vec{w}_k}_2 = 1$  (since we can absorb the norm of $\vec{w}_k$ into the magnitude of $v_k$). In this case,
\[
\norm{\ROp_m f_\vec{\theta}}_{\M(\cyl)} = \sum_{k=1}^K |v_k| \leq B.
\]

To begin we bound the \emph{empirical Rademacher complexity} of $\F_\Theta$. The empirical Rademacher complexity, denoted by $\widehat \Rad(\F_\Theta)$, is computed by taking the conditional expectation, conditioning on $\curly{\vec{x}_n}_{n=1}^N$ in place of the total expectation in \cref{eq:rad}. In other words, the only random variables are $\{\sigma_n\}_{n=1}^N$. The Rademacher complexity is then
\[
    \Rad(\F_\Theta)= \E\left[\hat \Rad(\F_\Theta)\right].
\]

We will first consider the empirical Rademacher complexity of a single neuron, i.e., functions of the form $\vec{x}\mapsto \rho_m(\vec{w}^\T\vec{x}-b)$, with $\norm{\vec{w}}_2=1$ and $|b|\leq C/2$.  Write $\E_\vec{\sigma}\sq{\,\dummy\,}$ for $\E\sq{\:\dummy \given \curly{\vec{x}_n}_{n=1}^N}$. The empirical Rademacher complexity of a single neuron is defined to be
\[
\hat{\Rad}\paren{\rho_m(\vec{w}^\T(\dummy)-b)}
\coloneqq 2\, \E_\vec{\sigma}\left[\sup_{\substack{\vec{w}:\norm{\vec{w}}_2 = 1 \\ b:\abs{b} \leq C/2}} \frac{1}{N} \sum_{n=1}^N\sigma_n \rho_m(\vec{w}^\T\vec{x}_n-b) \right].
\]
First notice that when $m$ is odd
\begin{equation}
    \rho_m(\vec{w}^\T\vec{x} - b) = \frac{(\vec{w}^\T\vec{x} - b)^{m - 1} + \abs{\vec{w}^\T\vec{x} - b}^{m - 2}(\vec{w}^\T\vec{x} - b)}{2(m-1)!},
    \label{eq:decomp-odd}
\end{equation}
and when $m$ is even
\begin{equation}
    \rho_m(\vec{w}^\T\vec{x} - b) = \frac{(\vec{w}^\T\vec{x} - b)^{m - 1} + \abs{\vec{w}^\T\vec{x} - b}^{m - 1}}{2(m - 1)!}.
    \label{eq:decomp-even}
\end{equation}
It is well-known that for two function spaces $\F$ and $\mathcal{G}$, the empirical Rademacher complexity satisfies $\hat{\Rad}(\F \oplus \mathcal{G}) = \hat{\Rad}(\F) + \hat{\Rad}(\mathcal{G})$, where $\oplus$ is the direct-sum. With this property, we see from \cref{eq:decomp-odd,eq:decomp-even} that the empirical Rademacher complexity of a single neuron is
\begin{align*}
    &\hat{\Rad}\paren{\rho_m(\vec{w}^\T(\dummy)-b)} \\
    &\qquad = \frac{1}{2(m - 1)!} \paren*[\Bigg]{\underbrace{\hat{\Rad}\paren{(\vec{w}^\T(\dummy)-b)^{m - 1}}}_{(*)} + \underbrace{\left.\begin{cases}
    \hat{\Rad}\paren{\abs{\vec{w}^\T(\dummy)-b}^{m - 2}(\vec{w}^\T(\dummy)-b)}, & \text{$m$ is odd} \\
    \hat{\Rad}\paren{\abs{\vec{w}^\T(\dummy)-b}^{m - 1}}, & \text{$m$ is even}
    \end{cases}\right\}}_{(\mathsection)}}.
\end{align*}
Since the functions in $(*)$ and $(\mathsection)$ are the same up to a sign, it follows that the Rademacher complexities are the same due to the symmetry of the Rademacher random variables. Thus,
\[
    \hat{\Rad}\paren{\rho_m(\vec{w}^\T(\dummy)-b)}
    = \frac{\hat{\Rad}\paren{(\vec{w}^\T(\dummy)-b)^{m-1}}}{(m - 1)!}
    = \frac{2}{N(m-1)!} \E_\vec{\sigma}\left[\sup_{\substack{\vec{w}:\norm{\vec{w}}_2 = 1 \\ b:\abs{b} \leq C/2}} \sum_{n=1}^N\sigma_n (\vec{w}^\T\vec{x}_n-b)^{m-1} \right].
\]
Next, by the binomial theorem
\begin{align*}
    \hat{\Rad}\paren{\rho_m(\vec{w}^\T(\dummy)-b)}&\leq \frac{2}{N(m-1)!}\sum_{k=0}^{m-1} \binom{m - 1}{k} \E_\vec{\sigma}\left[\sup_{\substack{\vec{w}:\norm{\vec{w}}_2 = 1 \\ b:\abs{b} \leq C/2}} \sum_{n=1}^N\sigma_n (\vec{w}^\T\vec{x}_n)^k (-b)^{m-1-k}\right] \\
    &\leq \frac{2}{N(m-1)!}\sum_{k=0}^{m-1} \binom{m - 1}{k} \paren{\frac{C}{2}}^{m - 1 - k} \E_\vec{\sigma}\left[\sup_{\vec{w}:\norm{\vec{w}}_2 = 1} \sum_{n=1}^N\sigma_n (\vec{w}^\T\vec{x}_n)^k \right] \\
    &= \frac{2}{N(m-1)!}\sum_{k=0}^{m-1} \binom{m - 1}{k} \paren{\frac{C}{2}}^{m - 1 - k} \E_\vec{\sigma}\left[\sup_{\vec{w}:\norm{\vec{w}}_2 = 1} \paren{\sum_{n=1}^N\sigma_n \vec{x}_n^{\otimes k}}^\T \vec{w}^{\otimes k} \right] \\
    &\leq \frac{2}{N(m-1)!}\sum_{k=0}^{m-1} \binom{m - 1}{k} \paren{\frac{C}{2}}^{m - 1 - k} \E_\vec{\sigma}\left[ \norm{\sum_{n=1}^N\sigma_n \vec{x}_n^{\otimes k}}_2 \right],
\end{align*}
where $(\dummy)^{\otimes k}$ denotes the $k$th order Kronecker product. By Jensen's inequality we have
\[
    \E_\vec{\sigma} \left[\norm{\sum_{n=1}^N\sigma_n \vec{x}_n^{\otimes k}}_2 \right]
    \leq \E_\vec{\sigma} \left[\norm{\sum_{n=1}^N\sigma_n \vec{x}_n^{\otimes k}}_2^2 \right]^{1/2}
    = \paren{\sum_{n=1}^N \norm{\vec{x}_n^{\otimes k}}_2^2}^{1/2}
    \leq \sqrt{N} \paren{\frac{C}{2}}^{k},
\]
and so
\[
    \hat{\Rad}\paren{\rho_m(\vec{w}^\T(\dummy)-b)} \leq \frac{2C^{m - 1}}{\sqrt{N} (m - 1)!}.
\]
Therefore, the empirical Rademacher complexity of $\F_\Theta$ is bounded as follows
\[
    \hat{\Rad}(\F_\Theta)
    = \sum_{k=1}^K \abs{v_k} \hat{\Rad}\paren{\rho_m(\vec{w}_k^\T(\dummy)-b_k)}+ \hat{\Rad}(c)
    \leq \frac{2BC^{m-1}}{\sqrt{N}(m - 1)!} + \hat{\Rad}(c).
\]
Taking the expectation of both sides proves the theorem.
\end{proof}

  \section{Conclusion} \label{sec:conclusion}
In this paper we have developed a variational framework in which we propose and study a family of continuous-domain linear inverse problems in order to understand what happens on the function space level when training a single-hidden layer neural network on data. We have exploited the connection between ridge functions and the Radon transform to show that training a single-hidden layer neural network on data with an appropriate regularizer results in a function that is optimal with respect to a total variation-like seminorm in the Radon domain. We also show that this seminorm directly controls the generalizability of these neural networks. Our framework encompasses ReLU networks and the appropriate regularizers correspond to the well-known weight decay and path-norm regularizers. Moreover, the variational problems we study are similar to those that are studied in variational spline theory and so we also develop the notion of a ridge spline and make connections between single-hidden layer neural networks and classical polynomial splines. There are a number of followup research questions that may be asked.

\subsection{Computational issues}
Empirical and theoretical work from the machine learning community has shown that simply running (stochastic) gradient descent on a neural network seems to find global minima~\citep{rethink-generalization, regularization-matters, gd-provably, gd-finds-global-min}, though full theoretical justifications of why these algorithms work currently do not exist. Thus, it remains an open question about designing neural network training algorithms that provably find global minimizers. Such algorithms could then be used to find the sparse solutions the continuous-domain inverse problems studied in this paper.

\subsection{Deep networks}
Another important followup question revolves around deep, multilayer networks. Can a variational framework be used to understand what happens when a deep network is trained on data? Answering this question would require posing a continuous-domain inverse problem and deriving a representer theorem showing that deep networks are solutions. We believe answering this question will be challenging, due to the compositions of ridge functions that arise in deep networks.
 
\acks{This work is partially supported by AFOSR/AFRL grant FA9550-18-1-0166, the NSF Research Traineeship Program under grant 1545481, and the NSF Graduate Research Fellowship Program under grant DGE-1747503. The authors thank Greg Ongie for helpful feedback and discussions related to the initial draft of this paper. The authors also thank the anonymous reviewers for their constructive feedback.}

\appendix

\section{Auxiliary Proofs} \label{app:aux-proofs}

\subsection{Proof of \cref{lemma:ideal-sampling-continuous}}
\begin{proof}
    It suffices to prove that for a fixed $\vec{x}_0 \in \R^d$, the operator $f \mapsto \ang{\delta_{\R^d}(\dummy - \vec{x}_0), f}$ is bounded. Let $(\vec{\phi}, \vec{p})$ be a biorthogonal system for the null space $\N_m$ of $\ROp_m$. By \cref{thm:banach-space}, we know every $f \in \F_m$ admits the unique representation $f = \ROp_{m, \vec{\phi}}^{-1} u + q$, where $u = \ROp_m f$. Moreover, from the proof of \cref{thm:banach-space}, we know that $\ROp_{m, \vec{\phi}}^{-1}$ is bounded and hence its kernel $g_{m, \vec{\phi}}(\vec{x}_0, \dummy)$ is bounded for fixed $\vec{x}_0 \in \R^d$. We also have by construction that $g_{m, \vec{\phi}}(\vec{x}_0, \dummy)$ is continuous. Next, suppose $\abs{g_{m, \vec{\phi}}(\vec{x}_0, \dummy)}$ is bounded by $0 < M_{\vec{x}_0} < \infty$. Then, for any $f \in \F_m$
    \begin{align*}
        \abs{\ang{\delta_{\R^d}(\dummy - \vec{x}_0), f}}
        &= \abs{f(\vec{x}_0)} \\
        &= \abs{\ROp_{m, \vec{\phi}}^{-1}\curly{u}(\vec{x}_0) + q(\vec{x}_0)} \\
        &\leq \abs{\ang{u, g_{m, \vec{\phi}}(\vec{x}_0, \dummy})} + \abs{q(\vec{x}_0)} \\
        &\leq M_{\vec{x}_0} \norm{u}_{\M(\cyl)} + \abs{q(\vec{x}_0)} \\
        &= M_{\vec{x}_0} \norm{\ROp_m u}_{\M(\cyl)} + \abs{q(\vec{x}_0)},
    \end{align*}
    where the fourth line follows since we can extend $u \in \M(\cyl)$ to act continuously on $C_\mathrm{b}(\cyl)$, the space of bounded continuous functions on $\cyl$. From \cref{lemma:finite-dim-null-space}, we know $\N_m$ is the space of polynomials of degree strictly less than $m$. Thus, it's clear that point evaluations are continuous on $\N_m$. From the proof of \cref{thm:banach-space}, we know that the norm $q \mapsto \norm{\vec{\phi}(q)}_2$ equips $\N_m$ with a Banach space structure. Therefore, there exists a constant $0< \tilde{M}_{\vec{x}_0} < \infty$ such that $\abs{q(\vec{x}_0)} \leq \tilde{M}_{\vec{x}_0} \norm{\vec{\phi}(q)}_2 = \tilde{M}_{\vec{x}_0} \norm{\vec{\phi}(f)}_2$. Thus, there exists a constant $C_{\vec{x}_0} \coloneqq M_{\vec{x}_0} + \tilde{M}_{\vec{x}_0}$ such that
    \[
        \abs{\ang{\delta_{\R^d}(\dummy - \vec{x}_0), f}}
        \leq C_{\vec{x}_0} \paren{\norm{\ROp_m f}_{\M(\cyl)} + \norm{\vec{\phi}(f)}_2}
        = C_{\vec{x}_0} \norm{f}_{\F_m},
    \]
    where the last inequality is from \cref{item:banach-norm} in \cref{thm:banach-space}.
\end{proof}

\subsection{Proof of \cref{lemma:finite-dim-null-space}}
We can prove that the null space $\N_m$ of $\ROp_m$ is finite-dimensional via the \emph{Fourier slice theorem}~\citep[pg.~4]{integral-geometry-radon-transforms}, which says for any $f \in \Sch'(\R^d)$,
\[
    \hat{\Radon f}(\vec{\gamma}, \omega) = \hat{f}(\omega \vec{\gamma}),
\]
where the Fourier transform on the left-hand side is a one-dimensional Fourier transform with respect to $t \to \omega$ and the Fourier transform on the right-hand side is the multivariate Fourier transform of $f$.
\begin{proof}[Proof of \cref{lemma:finite-dim-null-space}]
    Recall that
    \[
        \ROp_m = c_d\,\partial_t^m \ramp^{d-1} \RadonOp.
    \]
    Next, let $f \in L^{\infty, m - 1}(\R^d)$. Then,
    \[
        \reallywidehat{\ROp_m\curly{f}}(\vec{\gamma}, \omega)
        = c_d \, (\imag\omega)^m \, \imag^d \abs{\omega}^d \, \reallywidehat{\Radon{f}}(\vec{\gamma}, \omega)
        = c_d \, (\imag\omega)^m \, \imag^d \abs{\omega}^d \, \hat{f}(\omega\vec{\gamma}),
    \]
    where the Fourier transform is with respect to $t \to \omega$ and the second equality holds via the Fourier slice theorem. Next, we notice that for $f \in \N_m$ we require that the above display is $0$ for all $(\vec{\gamma}, \omega) \in \cyl$. Since the above display is zero at $\omega = 0$ we see that it must be that $\hat{f}$ is supported only at $\vec{0}$. This says that $f$ must be a polynomial. Finally, since $f \in L^{\infty, m - 1}(\R^d)$, we see that $f$ must be a polynomial of degree strictly less than $m$.
\end{proof}

\subsection{Proof of \cref{lemma:right-inverse}}
\begin{proof}
Using \cref{lemma:nn-atoms-sparsified}, which says that $r_{\vec{z}}$ is a translated Green's function of $\ROp_m$, and that $p_n \in \N_m$, $n = 1, \ldots, N_0$, a direct calculation results in
    \[
      \ROp_m\ROp_{m, \vec{\phi}}^{-1}\psi = \psi
    \]
    for all even (respectively odd) $\psi \in \Sch_0(\cyl)$ when $m$ is even (respectively odd).
    
    Suppose $m$ is even (since the case of $m$ being odd is analogous). To check the boundary conditions we check for all even $\psi \in \Sch_0(\cyl)$ that $\ang{\phi_k, \ROp_{m, \vec{\phi}}^{-1} \psi} = 0$, $k = 1, \ldots, N_0$. Write
    \[
      \ang{\phi_k, \ROp_{m, \vec{\phi}}^{-1} \psi}
      = \ang{\phi_k, \int_{\cyl}
      r_{\vec{z}}(\dummy) \psi(\vec{z}) \dd(\sigma \times \lambda)(\vec{z})} -
      \underbrace{\sum_{n=1}^{N_0} \ang{\phi_k, p_n}\ang{q_n, \psi}}_{=\,
      \ang{q_k, \psi}}.
    \]
    Next,
    \begin{align*}
      \ang{q_k, \psi}
      = \int_{\cyl} q_k(\vec{z})
      \psi(\vec{z}) \dd(\sigma \times \lambda)(\vec{z})
      &= \int_{\cyl} \ang{\phi_k, r_\vec{z}}
      \psi(\vec{z}) \dd(\sigma \times \lambda)(\vec{z}) \\
      &= \ang{\phi_k, \int_{\cyl} r_\vec{z}(\dummy)
      \psi(\vec{z}) \dd(\sigma \times \lambda)(\vec{z})}.
    \end{align*}
    Thus, the previous two displays imply $\ang{\phi_k, \ROp_{m, \vec{\phi}}^{-1} \psi} = 0$, $k = 1, \ldots, N_0$. Uniqueness of $\ROp_{m, \vec{\phi}}^{-1}$ follows from the fact that the biorthogonal system provides a unique representation of elements of $\N_m$.
    
    Again suppose that $m$ is even (since the case of $m$ being odd is analagous). Assume that $\ROp_{m, \vec{\phi}}^{-1}$ is bounded, in other words, that this inverse is \emph{stable}. Since $\Sch_0(\P^d) \subset \Sch(\P^d)$ and $\Sch_0(\P^d)$ and $\Sch(\P^d)$ are both dense in $C_0(\P^d)$ it follows that $\Sch_0(\P^d)$ is dense in $\Sch(\P^d)$. Next, it is well known that the space of Schwartz functions is dense in the space of tempered distributions~\citep{theory-of-distributions}. Thus, since we are assuming $\ROp_{m, \vec{\phi}}^{-1}$ is bounded, we can continuously extend it to act on elements of $\M(\P^d) \subset \Sch'(\P^d)$ with the properties in \cref{eq:stable-right-inverse-props} holding for every $\psi \in \M(\P^d)$. We also remark that we show that $\ROp_{m, \vec{\phi}}^{-1}$ is necessarily bounded when acting on elements of $\M(\P^d)$ in \cref{thm:banach-space}.
\end{proof}

\subsection{Proof of \cref{thm:banach-space}}
\begin{proof}
    Before proving the individual items in the theorem, first recall from the theorem statement the definition
    \[
        \F_{m,\vec{\phi}} = \curly{f \in \F \st \vec{\phi}(f) = \vec{0}}.
    \]
    Clearly this is a vector space. Since $f \mapsto \norm{\ROp_m f}_{\M(\cyl)}$ is a seminorm, it is a a norm except for lacking the property that $\norm{\ROp_m f}_{\M(\cyl)} = 0$ if and only if $f \equiv 0$ (since every $q \in \N_m$ has $\norm{\ROp_m q}_{\M(\cyl)} = 0$). By imposing the boundary conditions $\vec{\phi}(f) = \vec{0}$ in the definition of $\F_{m, \vec{\phi}}$, we enforce that every $f \in \F_{m, \vec{\phi}}$ has no null space component. Thus, $\F_{m, \vec{\phi}}$ is a bona fide Banach space when equipped with the norm $f \mapsto \norm{\ROp_m f}_{\M(\cyl)}$, more specifically this shows that $\F_{m, \vec{\phi}}$ is isometrically isomorphic to $\Me(\cyl)$ (respectively $\Mo(\cyl)$) when $m$ is even (respectively odd)\footnote{Where we are using the fact that the range of $\ROp_m$ is even or odd elements of $\M(\cyl)$ when $m$ is even or odd combined with \cref{eq:odd-even-norm}.}. In particular, this says we have the topological isomorphism $\F_{m,\vec{\phi}} \cong \F_m / \N_m$.
        
    \begin{enumerate}[label=\arabic*.]
        \item The above discussion along with \cref{eq:odd-even-norm} has shown that $\ROp_m$ is a bijective isometry from $\F_{m, \vec{\phi}}$ to $\Me(\cyl)$ or $\Mo(\cyl)$ when $m$ is even or odd. Since $\ROp_m$ is a bijective isometry, it is bounded, allowing us to invoke the bounded inverse theorem~\cite[Chapter~5]{folland}. This says there exists a bounded inverse $\ROp_m^{-1}$ (in this case, an isometry) of $\ROp_m$ mapping $\Me(\cyl)$ or $\Mo(\cyl)$ to $\F_{m, \vec{\phi}}$ when $m$ is even or odd. This inverse is necessarily the \emph{unique operator} $\ROp_{m, \vec{\phi}}^{-1}$ constructed in \cref{lemma:right-inverse} as it imposes the boundary conditions in \cref{eq:stable-right-inverse-props}. This result indirectly shows that the operator $\ROp_{m, \vec{\phi}}^{-1}$ when acting on $\Me(\cyl)$ or $\Mo(\cyl)$ is bounded when $m$ is even or odd.
        
        \item  Given the biorthogonal system $(\vec{\phi}, \vec{p})$ of $\N_m$, consider the projection operator
        \[
            \proj_{\N_m}: f \mapsto \sum_{n=1}^{N_0} \ang{\phi_n, f} p_n.
        \]
        Then, for every $f \in \F_m$ we can write $f =
        \tilde{f} + q$, where $q \coloneqq \proj_{\N_m}$ and so $\tilde{f} = f -
        q$. By this construction, $\vec{\phi}(\tilde{f}) = 0$, so $\tilde{f} \in
        \F_{m, \vec{\phi}}$. Clearly, $\tilde{f} =
        \ROp_{m, \vec{\phi}}^{-1} u$, where $u \coloneqq \ROp_m \tilde{f} = \ROp_m f$. Indeed, this is true since
        $\ROp_{m, \vec{\phi}}^{-1}$ is a bona fide inverse when acting on 
        $\Me(\cyl)$ or $\Mo(\cyl)$ when $m$ is even or odd. Thus, \cref{eq:unique-representation} holds.
        Since $\F_{m, \vec{\phi}} \cong
        \F_m / \N_m$, we have
        $\F_{m, \vec{\phi}} \cap \N_m = \curly{0}$. Hence,
        we have the direct-sum decomposition $\F_m =
        \F_{m, \vec{\phi}} \oplus \N_m$.
        
        \item Since every $q \in \N_m$ has the unique representation
        \[
            q = \sum_{n = 1}^{N_0} \ang{\phi_n, q} p_n,
        \]
        we can always identify $q$ by its expansion coefficients $\vec{\phi}(q) \in \R^{N_0}$. Thus, the norm $q \mapsto \norm{\vec{\phi}(q)}_2$ provides $\N_m$ with a Banach space structure. Finally, since both $\F_{m, \vec{\phi}}$ and $\N_m$ can be endowed with norms to provide a Banach space structure, we can use the direct-sum decomposition $\F_m =
        \F_{m, \vec{\phi}} \oplus \N_m$ to equip $\F_m$ with the composite norm
        \[
            \norm{f}_{\F_m} \coloneqq \norm{\ROp_m f}_{\M(\cyl)} + \norm{\vec{\phi}(f)}_2
        \]
        to provide $\F_m$ a Banach space structure.
    \end{enumerate}
\end{proof}

\bibliography{ref}

\end{document}